%% file: main.tex
\documentclass[11pt]{article}

\usepackage[T1]{fontenc}
\usepackage{lmodern}
\usepackage[margin=1in]{geometry}
\usepackage{amsmath,amssymb,amsthm,mathtools}
\usepackage{natbib}
\usepackage{graphicx}
\usepackage{booktabs}
\usepackage{array}
\usepackage{placeins}
\usepackage{float}
\usepackage{xcolor}
\usepackage{xspace}
\usepackage{microtype}
\usepackage[hidelinks]{hyperref}
\hypersetup{
  hidelinks,
  hypertexnames=false,
  pdftitle={Posterior Information Dynamics of Diffusion Models for Linear Inverse Problems},
  pdfauthor={Xiangming Meng}
}

\newtheorem{theorem}{Theorem}[section]
\newtheorem{proposition}[theorem]{Proposition}
\newtheorem{lemma}[theorem]{Lemma}
\newtheorem{corollary}[theorem]{Corollary}
\theoremstyle{definition}

\theoremstyle{remark}
\newtheorem{remark}[theorem]{Remark}

\input{macros}
\input{paper_numbers}

\title{Posterior Information Dynamics of Diffusion Models for Linear Inverse Problems}

\author{Xiangming Meng\\
\small ZJU-UIUC Institute, Zhejiang University\\
\small Haining, Zhejiang 314400, China\\
\small \texttt{xiangmingmeng@intl.zju.edu.cn}}
\date{}

\begin{document}

\maketitle

\begin{abstract}%
Diffusion models are widely used as priors for linear inverse problems, yet
endpoint quality does not reveal when measurement information enters reverse
denoising or how it is allocated across signal directions.  We study this
process through the smoothed likelihood force, the difference between exact
posterior and prior scores at each noise level.  For a fixed measurement, its
expected squared norm gives both posterior--prior relative-entropy dissipation
and reverse-path relative-entropy growth.  Averaging over measurements yields
an information--minimum mean-square error (I-MMSE) identity linking information
gain to denoising-error reduction.  Under finite second moments, the force
energy and its ratio to prior-score energy decay quadratically in the noising
kernel's signal coefficient at high noise.  Solvable models show that
conditioning removes class separation already explained by the measurement,
reduces a uniform index entropy over \(n\) empirical samples from \(\log n\) to
\(H(I\mid r)\), and makes assimilation depend on operator--prior alignment even
for identical singular values.  Experiments in models with tractable
posteriors evaluate these predictions.  In a separate illustration with a
frozen FFHQ model, masks sharing the same spectrum yield different
prior-normalized null-space trajectory statistics.
\end{abstract}

\medskip
\noindent\textbf{Keywords:}
diffusion models, linear inverse problems, posterior information dynamics,
smoothed likelihood force, posterior sampling
\medskip

\section{Introduction}
\input{sections/intro}

\section{Background and Problem Setting}
\input{sections/background}

\section{Related Work}
\input{sections/related_work}

\section{Foundational Identities and Posterior Force Budgets}\label{sec:posterior-score}
\input{sections/posterior_score}

\section{Posterior Information Mechanisms in Solvable Models}\label{sec:solvable-reductions}
\input{sections/solvable_reductions}

\section{From Exact Information Dynamics to Approximate Samplers}\label{sec:regimes}
\input{sections/regimes}

\section{Diagnostic Studies}\label{sec:diagnostics}
\subsection{Exact-Model Diagnostics}\label{sec:exact-diagnostics}
\input{sections/experiments}
\FloatBarrier

\subsection{Neural Trajectory Diagnostics for Approximate Samplers}\label{sec:neural-diagnostics}
\input{sections/neural_trajectory}
\FloatBarrier

\section{Conclusion}
\input{sections/conclusion}

\section*{Acknowledgments and Disclosure of Funding}
The author thanks Yoshiyuki Kabashima for the valuable suggestion to analyze
solvable model cases.  This work was partly supported by the National Natural
Science Foundation of China (NSFC) under Grant No.~62306277.

\clearpage
\appendix
\section{Proofs}\label{app:proofs}
\input{appendix/proofs}

\FloatBarrier
\section{Experimental Details and Additional Analyses}\label{app:experiments}
\input{appendix/experiments}

\FloatBarrier
\bibliographystyle{plainnat}
\bibliography{refs}

\end{document}

%% file: macros.tex
\newcommand{\R}{\mathbb R}
\newcommand{\E}{\mathbb E}
\newcommand{\Prob}{\mathbb P}
\newcommand{\calN}{\mathcal N}
\newcommand{\dd}{\mathrm d}
\newcommand{\Id}{\mathrm I}
\newcommand{\Tr}{\operatorname{Tr}}
\newcommand{\Cov}{\operatorname{Cov}}
\newcommand{\Var}{\operatorname{Var}}
\newcommand{\diag}{\operatorname{diag}}
\newcommand{\sech}{\operatorname{sech}}
\newcommand{\KL}{\operatorname{KL}}
\newcommand{\MI}{\operatorname{I}}

\newcommand{\PiGDM}{PiGDM\xspace}

\newcommand{\runinhead}[1]{\par\noindent\emph{#1}\ }

%% file: paper_numbers.tex
\newcommand{\ProxyClosureMedian}{0.35\%}
\newcommand{\ProxyClosureNinetyFive}{3.21\%}
\newcommand{\HeuristicMAELogDet}{0.109}
\newcommand{\HeuristicMAEConstantLOSO}{0.133}

\newcommand{\HeuristicDiffConstantLOSO}{-0.024}
\newcommand{\HeuristicDiffConstantLOSOLo}{-0.064}
\newcommand{\HeuristicDiffConstantLOSOHi}{0.008}



%% file: sections/intro.tex
Diffusion and score-based models generate samples by reversing a stochastic
noising process
\citep{sohl2015deep,song2019generative,ho2020denoising,song2021score}.
The reverse trajectory is not simply a route from noise to a final sample.
Recent exact-score analyses identify Brownian, speciation, and collapse
regimes in which coarse and then fine uncertainty is progressively resolved
\citep{biroli2024dynamical,raya2023spontaneous,sclocchi2025phase,
li2024critical,ambrogioni2025thermodynamics}.  This dynamical picture raises
the question studied here: when conditioning on a linear inverse measurement,
what uncertainty is removed by the measurement, what remains to be resolved
during reverse denoising, and how is the remaining uncertainty distributed
across signal directions?

A linear inverse problem seeks to recover an unknown signal
\(x_0\in\mathbb R^d\) from an indirect, noisy measurement
\begin{equation}
\label{eq:intro-inverse-measurement}
r=Ax_0+\varepsilon,
\end{equation}
where \(A\) is a known linear operator, \(r\) is the measurement, and
\(\varepsilon\) is measurement noise.  Rank deficiency, poor conditioning, or
noise generally leaves multiple signals compatible with the same measurement,
so the inferential target is a posterior distribution rather than a unique
inverse.  Choosing \(A\) as an identity, masking, convolution, or downsampling
operator recovers denoising, inpainting, deblurring, and super-resolution,
respectively.

Diffusion priors have been incorporated into inverse solvers through
score-based and likelihood-guided posterior sampling
\citep{song2021medical,jalal2021robust,chung2023diffusion,
meng2022diffusion}, operator-aware spectral and null-space constructions
\citep{kawar2022denoising,wang2023zero}, pseudoinverse guidance
\citep{song2023pigdm}, and variational or plug-and-play formulations
\citep{mardani2023variational,zhu2023diffpir}.  These methods are commonly
evaluated through final reconstruction quality.  Such endpoint metrics do not
characterize the conditional trajectory: they cannot distinguish uncertainty
removed directly by the measurement from uncertainty resolved later by
denoising, nor can they show how this division varies across signal
directions.  Nor can the unconditional dynamical picture be transferred
without modification.  Conditioning can establish a class preference before
the noisy state supplies additional evidence, reweight the plausible elements
of an empirical prior, and distinguish strongly observed directions from
weakly observed or null directions.  A conditional theory must therefore
separate information supplied directly by the measurement from information
resolved during denoising, while retaining the geometry induced by \(A\).

The main analytical obstacle is a mismatch between the likelihood and the
evolving state: \(p(r\mid x_0)\) is defined at the clean signal \(x_0\), whereas
reverse diffusion evolves the noisy state \(X_t=x\).  At noise level \(t\), the
exact contribution of the measurement to the posterior score is
\begin{equation}
\label{eq:intro-smoothed-likelihood-force}
g_t(x,r)
=\nabla_x\log p_t(x\mid r)-\nabla_x\log p_t(x)
=\nabla_x\log p(r\mid X_t=x),
\end{equation}
where \(p_t\) and \(p_t(\cdot\mid r)\) are the noised prior and posterior
marginals.  We call \(g_t\) the \emph{smoothed likelihood force}.  Unlike a
clean-space residual evaluated at a denoised estimate, it averages the
measurement likelihood over the remaining uncertainty in \(X_0\) given
\(X_t=x\).  For a general prior, evaluating \(g_t\) requires the conditional law
of \(X_0\) given \(X_t=x\), so a model-independent closed form is not generally
available.  Recent work estimates posterior scores or analyzes particular
posterior-sampling paths
\citep{mammadov2026exactposterior,delgadino2026feynman,
crafts2026stabilized}.  Our analysis is complementary: it derives the exact
information law obeyed by \(g_t\) before introducing a sampling approximation.

The key observation is that conditioning changes the initial law, but not the
forward noising mechanism.  For each fixed \(r\), \(p_t(\cdot\mid r)\) and
\(p_t\) therefore evolve under the same diffusion semigroup, making \(g_t\)
their relative score.  Although this vector field need not be available
pointwise, its energy admits exact characterizations.  A relative de Bruijn
identity equates that energy with the forward-time dissipation of
\(\mathrm{KL}(p_t(\cdot\mid r)\Vert p_t)\)
\citep{foellmer1985entropy}.  Exact time reversal and finite-entropy Girsanov
theory identify its accumulated energy, together with the terminal marginal
divergence, with reverse-path relative entropy
\citep{anderson1982reverse,haussmann1986time,leonard2012girsanov}.
Measurement averaging yields an information--minimum mean-square error
(I-MMSE) relation between reverse-time information gain and denoising-error
reduction \citep{guo2005mutual}, while a conditional Tweedie identity expresses
\(g_t\) as the scaled difference between measurement-conditioned and
unconditional denoisers \citep{louis1982finding,efron2011tweedie}.  These
identities answer the temporal part of the central question by quantifying how
much measurement information is present and when it is acquired.  The solvable
models then describe how the remaining uncertainty is distributed among
classes, empirical explanations, and operator-aligned directions.

The main contributions are as follows.

\begin{enumerate}
\item We derive a fixed-measurement information budget whose instantaneous
rate is the conditional force energy.  Its integral, together with the
terminal marginal divergence, determines exact posterior--prior reverse-path
separation.  Measurement averaging yields the corresponding I-MMSE
representation.  Under the joint law of the noisy state and measurement, we
further prove, assuming finite second moments, that the force energy and its
ratio to prior-score energy decay quadratically in the noising kernel's signal
coefficient at high noise.

\item We identify three ways in which conditioning changes the uncertainty
resolved along a diffusion trajectory.  In a Gaussian mixture, only class
separation left unexplained by the measurement can generate new class
commitment.  For a uniform empirical prior supported on \(n\) samples, with
\(I\) denoting the sampled support index, the measurement reduces the remaining
explanation budget from \(H(I)=\log n\) to the conditional Shannon entropy
\(H(I\mid r)\).  In an anisotropic Gaussian model, equal operator singular
spectra can nevertheless produce different information gain, posterior
contraction, and reverse-path separation because of operator--prior alignment.

\item We evaluate the exact quantities in finite-mixture, empirical-prior, and
Gaussian models, and use the resulting timing and subspace coordinates to
study approximate samplers driven by a frozen diffusion model trained on the
Flickr-Faces-HQ (FFHQ) face-image dataset \citep{karras2019style}.  With a
fixed checkpoint and a fixed-spectrum mask library, this learned-model
illustration examines how an empirical null-space trajectory statistic varies
with mask geometry; it is not a direct test of the Gaussian extremal result.
\end{enumerate}

Section~2 introduces the probabilistic setting, notation, and forward noising
process.  Section~3 reviews related work.  Section~4 develops the posterior
force and its information identities.  Section~5 analyzes residual class
information, empirical explanation resolution, and directional allocation in
solvable models.  Section~6 synthesizes these objects and connects them to
observables available for approximate samplers.  Section~7 presents the
exact-model and neural trajectory studies.  Section~8 concludes.  Proofs and
supporting experimental details are collected in the appendices.

%% file: sections/background.tex
\label{sec:background}

\runinhead{Notation and forward noising.}
Random variables are denoted by capital letters and realized values by lowercase
letters.  The clean signal is \(X_0\in\R^d\), a fixed clean realization is
written \(x_0\), the random measurement is \(R\), and a realized measurement is
written \(r\).  We use \(P_0\) for the clean-data law and \(p_0\) for its
density when one exists.

The analysis uses the variance-preserving Ornstein--Uhlenbeck noising process
\citep{uhlenbeck1930theory,ho2020denoising,song2021score},
\begin{equation}
\label{eq:forward-marginal}
X_t=a_tX_0+\sqrt{\Delta_t}\xi,\qquad
a_t=e^{-t},\qquad
\Delta_t=1-e^{-2t},\qquad
\xi\sim\calN(0,\Id_d).
\end{equation}
Its continuous-time realization is
\begin{equation}
\label{eq:forward-ou-sde}
\dd X_t=-X_t\,\dd t+\sqrt{2}\,\dd W_t.
\end{equation}
Here \(W\) is a standard \(d\)-dimensional Brownian motion.
The transition density is
\begin{equation}
\label{eq:forward-kernel}
q_t(x\mid x_0)
=(2\pi\Delta_t)^{-d/2}
\exp\left\{-\frac{\|x-a_tx_0\|^2}{2\Delta_t}\right\},
\end{equation}
and, for every \(t>0\), the noised prior has density
\begin{equation}
\label{eq:noised-distribution}
p_t(x)=\E\!\left[q_t(x\mid X_0)\right].
\end{equation}
When \(P_0\) has density \(p_0\), the expectation in
\eqref{eq:noised-distribution} is
\(\int q_t(x\mid x_0)p_0(x_0)\,\dd x_0\); for an empirical prior it is a
finite sum.  The unconditional score is
\begin{equation}
\label{eq:prior-score-definition}
s_t^{\rm prior}(x)=\nabla_x\log p_t(x).
\end{equation}
For a terminal time \(T\) and increasing reverse clock \(u\in[0,T]\), write
\(Z_u=X_{T-u}\).  Whenever the standard time-reversal conditions hold
\citep{anderson1982reverse,haussmann1986time}, the prior reverse process
satisfies
\begin{equation}
\label{eq:prior-reverse-sde}
\dd Z_u=
\bigl\{Z_u+2s_{T-u}^{\rm prior}(Z_u)\bigr\}\,\dd u
+\sqrt{2}\,\dd\overline W_u.
\end{equation}
Here \(\overline W\) is Brownian motion in the reverse filtration.
The question studied here is how this reverse drift changes once the clean law
is conditioned on a measurement.

\runinhead{Measurements and posterior dynamics.}
Let \((\mathsf R,\nu)\) be a measurement space equipped with a sigma-finite
dominating measure.  We write \(p(r\mid x_0)\) for a density (or probability
mass function) with respect to \(\nu\).  Our standing assumption is
\begin{equation}
\label{eq:indep-assump}
R\perp X_t\mid X_0,
\end{equation}
equivalently, the measurement and forward noise are conditionally independent
given the clean signal.  This assumption allows an arbitrary clean-space
likelihood and is the factorization used in
Section~\ref{sec:posterior-score}.

For linear Gaussian inverse problems, \(\mathsf R=\R^{d_y}\) and
\[
R=AX_0+\varepsilon,
\qquad
A\in\R^{d_y\times d},
\qquad
\varepsilon\sim\calN(0,\Gamma),
\qquad
\Gamma\succ0,
\]
with \(\varepsilon\) independent of \(X_0\) and the forward noise.  We write
\(r_A=\operatorname{rank}(A)\).  The marginal measurement density is
\[
p_R(r)=\E\!\left[p(r\mid X_0)\right].
\]
Fix \(r\) with \(0<p_R(r)<\infty\), and let
\(P_0(\,\cdot\mid r)\) denote the clean posterior law.  Its noised density is
\begin{equation}
\label{eq:noised-posterior-law}
p_t(x\mid r)=\E\!\left[q_t(x\mid X_0)\mid R=r\right].
\end{equation}
The exact posterior score is
\begin{equation}
\label{eq:posterior-score-definition}
s_t^r(x)=\nabla_x\log p_t(x\mid r).
\end{equation}
The corresponding reverse process is
\begin{equation}
\label{eq:posterior-reverse-sde}
\dd Z_u^r=
\bigl\{Z_u^r+2s_{T-u}^r(Z_u^r)\bigr\}\,\dd u
+\sqrt{2}\,\dd\overline W_u^r,
\end{equation}
initialized from \(p_T(\cdot\mid r)\), with \(\overline W^r\) the corresponding
reverse-time Brownian motion.
Section~\ref{sec:posterior-score} identifies
\(s_t^r-s_t^{\rm prior}\) with the smoothed likelihood force.  Under the
normalization \eqref{eq:forward-ou-sde}, twice this force is exactly the
difference between the posterior and prior reverse drifts.

\runinhead{Operator geometry.}
The linear operator separates measured and unmeasured directions.  We write
\(A^\top\) for the Euclidean adjoint and \(A^\dagger\) for the Moore--Penrose
pseudo-inverse.  When this distinction is relevant, we use
\begin{equation}
\label{eq:row-null-projectors}
P_A=A^\dagger A,\qquad P_A^\perp=\Id_d-P_A
\end{equation}
as the Euclidean row-space and null-space projectors.  These projectors provide
diagnostic coordinates for the later subspace analysis.  Outside the isotropic
linear-Gaussian reference, the exact force need not be row supported because
prior dependence can transmit measurement influence across Euclidean
coordinates.

\runinhead{Empirical priors and index uncertainty.}
For an empirical prior
\begin{equation}
\label{eq:empirical-prior}
P_0^{\rm emp}=\frac1n\sum_{i=1}^n\delta_{x_i},
\end{equation}
one may equivalently sample an index \(I\) uniformly from
\(\{1,\ldots,n\}\) and set \(X_0=x_I\).  In the exact-score unconditional
theory, the reverse dynamics exhibit high-noise Brownian behavior,
intermediate commitment to coarse structure, and low-noise collapse around
individual training samples \citep{biroli2024dynamical}.  The entropy of the
uniform index is \(\log n\).

For a fixed measurement \(r\), conditioning assigns posterior weights
\begin{equation}
\label{eq:background-posterior-weights}
w_i(r)=\Prob(I=i\mid R=r),
\end{equation}
whose Shannon entropy is
\(H(I\mid r)=-\sum_i w_i(r)\log w_i(r)\).  The state-resolved information
\(\MI(I;X_t\mid R=r)\) measures how much of this conditional index uncertainty
has been resolved at noise level \(t\).  Section~\ref{sec:collapse} develops
this decomposition, and Section~\ref{sec:regime-map-synthesis} relates it to
the measurement--state information of
Proposition~\ref{prop:force-information-rate}.

%% file: sections/related_work.tex
\runinhead{Diffusion dynamics and information identities.}
Diffusion models generate samples by reversing a stochastic noising process
\citep{sohl2015deep,song2019generative,ho2020denoising,song2021score}.
Denoising score matching connects denoising to score estimation
\citep{vincent2011connection}.
Exact-score analyses interpret this reversal as progressive uncertainty
resolution and identify Brownian, speciation, and empirical-collapse behavior
in unconditional models
\citep{biroli2024dynamical,raya2023spontaneous,sclocchi2025phase,
li2024critical,ambrogioni2025thermodynamics}.  Extensions address discrete
state spaces \citep{takahashi2026discrete}, multimodal and more general class
structure \citep{albrychiewicz2026multimodal,achilli2026speciation}, and
controlled or trained-model dynamics
\citep{lu2026steering,handke2026entropic}.
The information identities used below have complementary origins.  Relative
entropy along a common diffusion semigroup dissipates at the relative
Fisher-information rate \citep{foellmer1985entropy}, while time reversal gives
the associated path-measure relations
\citep{anderson1982reverse,haussmann1986time,leonard2012girsanov}.
The I-MMSE identity connects Gaussian-channel information growth to
conditional mean-square error \citep{guo2005mutual} and has been applied to
diffusion likelihoods \citep{kong2023information}.  MMG estimates mutual
information by integrating the gap between conditional and unconditional
denoising errors along a diffusion channel \citep{yu2026mmg}; score-difference
path energies likewise yield KL and mutual-information estimators
\citep{franzese2024minde}.  Fisher and Tweedie identities connect scores to
conditional means \citep{louis1982finding,efron2011tweedie}.  Here these
established identities are coupled at a fixed inverse-problem measurement to
connect force energy, marginal KL dissipation, reverse-path KL, and empirical
information resolution before a sampler approximation is chosen.

\runinhead{Posterior scores and conditional samplers.}
Pretrained diffusion priors support likelihood-guided samplers
\citep{song2021medical,jalal2021robust,chung2023diffusion}.
Likelihood-score approximations address Gaussian and quantized measurements
\citep{meng2023quantized,meng2024qcssgmplus,meng2022diffusion}.  Other
approaches include operator-aware spectral, null-space, and pseudoinverse methods
\citep{kawar2022denoising,wang2023zero,song2023pigdm}; posterior-moment
approximations \citep{boys2024tweedie}; denoiser-based methods
\citep{kadkhodaie2021solving}; and variational or plug-and-play formulations
\citep{mardani2023variational,zhu2023diffpir}.
\citet{mammadov2026exactposterior} derived a closed-form posterior-score
representation for linear Gaussian inverse problems and used it as an Exact
Posterior Score (EPS) training objective.  \citet{jiao2026pddim} instead
developed a coordinate-wise sampler that switches between prior and measurement
predictors according to singular-direction signal-to-noise ratios, using DDIM
updates \citep{song2021ddim}.  Particle methods provide another route: TDS
retains asymptotic exactness through sequential Monte Carlo
\citep{wu2023practical}, and MCGdiff samples intermediate linear-inverse
posteriors \citep{cardoso2024monte}.  Related work estimates denoising-posterior
covariance \citep{peng2024improving} or uses conditional mutual information as
an algorithmic objective \citep{hamidi2025conditional}.

Trajectory-level analyses address the error of approximate samplers.
\citet{delgadino2026feynman} characterize accumulated bias and stability of
DPS-like paths through a Feynman--Kac representation, while
\citet{crafts2026stabilized} study likelihood-weighted path-space control.
These works construct conditional samplers or compare approximate controlled
paths with a target.  Our complementary question precedes that choice: what
information law does the exact posterior--prior score difference obey?  In
particular, our path identity compares the exact prior and posterior reverse
laws generated by the same forward semigroup, rather than the bias of an
approximate path.

\runinhead{Empirical priors, conditional entropy, and memorization.}
For empirical or codebook priors, unconditional collapse is governed by the
information needed to identify a support element \citep{biroli2024dynamical}.
\citet{hunt2026bird} characterize a memorization--generalization boundary by
comparing the information in restricted noisy observations with \(\log n\),
where \(n\) is the number of training samples.
\citet{nguyen2026lemmse} derive a Local-Equivariant MMSE approximation for
trained convolutional inverse solvers under empirical distributions.  For our
fixed-measurement empirical posterior, the measurement supplies
\(\log n-H(I\mid r)\) nats of index information and leaves \(H(I\mid r)\) to
reverse denoising.  This connects to memorization and extraction
\citep{carlini2023extracting,somepalli2023diffusion}, although our formal
results remain restricted to explicit linear likelihoods and calibrated
posterior weights.

\runinhead{Operator geometry.}
Classical Gaussian inverse problems make the interaction between prior
covariance and measurement subspaces explicit.  Diffusion restoration methods
exploit the same singular-vector and row/null geometry
\citep{kawar2022denoising,wang2023zero,song2023pigdm}.  Our matched-spectrum
result fixes the singular values and varies only their orientation relative to
an anisotropic prior.  At the clean endpoint, selecting the principal prior
subspace is the classical Gaussian D-optimal design problem
\citep{chaloner1995bayesian}; our result extends the same orientation ordering
to every finite Gaussian-channel SNR and hence to the reverse-path information
budget.  The neural mask experiment asks only whether singular spectra suffice
for a frozen learned trajectory statistic, rather than testing the Gaussian
extremal formula.

%% file: sections/posterior_score.tex
\subsection{Posterior Score and Denoiser Identities}
\label{sec:foundational-identities}

Under the conditional-independence
assumption~\eqref{eq:indep-assump}, Bayes' rule gives
\begin{equation}
\label{eq:posterior-bayes-factorization}
p_t(x\mid r)
=
\frac{p_t(x)p(r\mid X_t=x)}{p_R(r)},
\qquad
p(r\mid X_t=x)
=
\E\!\left[p(r\mid X_0)\mid X_t=x\right],
\end{equation}
whenever the relevant densities exist.  Differentiating in \(x\) yields
\begin{equation}
\label{eq:posterior-score-factorization}
\nabla_x\log p_t(x\mid r)
=
\nabla_x\log p_t(x)+\nabla_x\log p(r\mid X_t=x).
\end{equation}
The second term,
\[
g_t(x,r):=\nabla_x\log p(r\mid X_t=x),
\]
is the smoothed likelihood force introduced in the Introduction.
It averages the clean-space likelihood over \(X_0\mid X_t=x\), rather than
evaluating that likelihood at a point estimate of the clean signal.

Fisher's latent-variable score identity
\(\nabla_y\log\int p(y,z)\,\dd z
=\E[\nabla_y\log p(y,Z)\mid y]\)
\citep{louis1982finding} turns this force into a difference of conditional
denoisers.

\begin{lemma}[Conditional Tweedie identity]
\label{thm:posterior-tweedie}
Suppose that \(\E\|X_0\|^2<\infty\) and that differentiation may be
interchanged with the noising integrals in
\eqref{eq:noised-distribution} and \eqref{eq:noised-posterior-law}.  For
\(t>0\), define
\[
m_t(x)=\E[X_0\mid X_t=x],
\qquad
m_t^r(x)=\E[X_0\mid X_t=x,R=r].
\]
Then, for every \(r\) with \(0<p_R(r)<\infty\),
\begin{equation}
\label{eq:posterior-tweedie-score}
\nabla_x\log p_t(x\mid r)=\frac{a_tm_t^r(x)-x}{\Delta_t},
\end{equation}
and the smoothed likelihood force is
\begin{equation}
\label{eq:general-denoiser-guidance}
g_t(x,r):=\nabla_x\log p(r\mid X_t=x)
=
\frac{a_t}{\Delta_t}\{m_t^r(x)-m_t(x)\}.
\end{equation}
\end{lemma}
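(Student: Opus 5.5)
The plan is to prove the unconditional Tweedie formula first, then repeat the argument for the posterior, and finally subtract the two scores.

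\textbf{Unconditional formula.} I differentiate the noising integral \eqref{eq:noised-distribution} under the expectation, which the hypothesis permits. From \eqref{eq:forward-kernel},
\[
\nabla_x q_t(x\mid x_0)=-\frac{x-a_tx_0}{\Delta_t}\,q_t(x\mid x_0).
\]
Hence
\[
\nabla_x p_t(x)=\E\!\left[\frac{a_tX_0-x}{\Delta_t}\,q_t(x\mid X_0)\right].
\]
Dividing by \(p_t(x)>0\) turns the expectation into one under the conditional law of \(X_0\) given \(X_t=x\), whose density with respect to \(P_0\) is \(q_t(x\mid x_0)/p_t(x)\). This gives
\[
\nabla_x\log p_t(x)=\frac{a_tm_t(x)-x}{\Delta_t}.
\]
For \(t>0\), the Gaussian kernel is smooth and strictly positive, so \(p_t>0\) everywhere. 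The condition \(\E\|X_0\|^2<\infty\) guarantees that \(m_t(x)\) is finite; this is the same step as Fisher's identity \citep{louis1982finding}.

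\textbf{Posterior formula \eqref{eq:posterior-tweedie-score}.} I run the identical computation starting from \eqref{eq:noised-posterior-law}, with \(P_0\) replaced by the clean posterior \(P_0(\cdot\mid r)\). This law is well defined because \(0<p_R(r)<\infty\). Two facts remain to be checked.

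\emph{Integrability.} \(X_0\) must be integrable under the posterior, or at least under the law of \(X_0\) given \(X_t=x,R=r\). The kernel \(q_t(x\mid x_0)\) is bounded by \((2\pi\Delta_t)^{-d/2}\). Combined with the Gaussian factor \(\|x-a_tx_0\|\,q_t\), which is also bounded in \(x_0\), this makes \(\E[\|a_tX_0-x\|\,q_t(x\mid X_0)\mid R=r]\) finite. So integrability comes from the kernel itself and does not require a posterior moment assumption.

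\emph{Identification of the conditional law.} The normalized measure \(q_t(x\mid x_0)\,P_0(dx_0\mid r)/p_t(x\mid r)\) must be the law of \(X_0\) given \((X_t=x,R=r)\). This is where the conditional-independence assumption \eqref{eq:indep-assump} enters. Because \(R\perp X_t\mid X_0\), the joint density of \((X_0,X_t,R)\) factors as \(P_0(dx_0)\,q_t(x\mid x_0)\,p(r\mid x_0)\). Bayes' rule in \(x_0\) then yields exactly this normalized measure. The result is
\[
\nabla_x\log p_t(x\mid r)=\frac{a_tm_t^r(x)-x}{\Delta_t}.
\]

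\textbf{Force formula \eqref{eq:general-denoiser-guidance}.} By \eqref{eq:posterior-bayes-factorization}, \(\log p(r\mid X_t=x)=\log p_t(x\mid r)-\log p_t(x)+\log p_R(r)\), and the last term is constant in \(x\). Taking gradients and subtracting the two Tweedie formulas cancels the \(-x/\Delta_t\) terms, leaving
\[
g_t(x,r)=\frac{a_t}{\Delta_t}\{m_t^r(x)-m_t(x)\}.
\]

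The main obstacle is the measure-theoretic step in the posterior formula. Conditioning on \(R=r\) at a single point must be made rigorous, and the conditional expectation \(m_t^r(x)\) must be shown to be a version defined for every \(x\), not just almost every \(x\). I would handle this by defining \(m_t^r(x)\) explicitly through the kernel-weighted formula above. That version is continuous in \(x\) and agrees with the abstract conditional expectation almost surely. Positivity of \(p_t(\cdot\mid r)\) follows from positivity of the kernel, so the logarithms and ratios above are well defined.
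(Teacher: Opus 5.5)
Your proposal is correct and follows essentially the same route as the paper. You apply Fisher's latent-variable score identity to both \(p_t\) and \(p_t(\cdot\mid r)\), unpacking it explicitly by differentiating the Gaussian kernel under the noising integral. You use \(R\perp X_t\mid X_0\) to identify the law of \(X_0\) given \((X_t=x,R=r)\), and then subtract the two scores so the \(-x/\Delta_t\) terms cancel. Your extra remarks make explicit details the paper leaves implicit: the bounded factor \(\|x-a_tx_0\|\,q_t(x\mid x_0)\) gives posterior integrability at every \(x\) without a posterior moment assumption, and the kernel-weighted formula supplies an everywhere-defined continuous version of \(m_t^r\).
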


This conditional form of Tweedie's formula is standard
\citep{efron2011tweedie,boys2024tweedie,mammadov2026exactposterior}.  Here it
provides the bridge from the exact posterior score to the force-energy and
information identities below.

\subsection{Fixed-Measurement Posterior Information Budget}
\label{sec:fixed-measurement-budget}

The score difference in Lemma~\ref{thm:posterior-tweedie} is also the relative
score of two laws propagated by the same Ornstein--Uhlenbeck semigroup.  Its
energy therefore controls the loss of posterior--prior distinguishability at
each fixed measurement.

\begin{theorem}[Fixed-measurement posterior information budget]
\label{thm:fixed-r-entropy-dissipation}
Fix a measurement value $r$ with $0<p_R(r)<\infty$ and define
\[
q_t^r(x):=p_t(x\mid r),
\qquad
\mathcal K_t(r):=\KL(q_t^r\|p_t).
\]
Let $0<s<t<\infty$.  Assume $\mathcal K_s(r)<\infty$, the relative Fisher
information
\[
\int q_u^r(x)
\left\|\nabla\log\frac{q_u^r(x)}{p_u(x)}\right\|^2\dd x
\]
is integrable over $u\in[s,t]$, and the integration-by-parts boundary terms
vanish.  Then
\begin{equation}
\label{eq:fixed-r-kl-integrated}
\mathcal K_s(r)-\mathcal K_t(r)
=
\int_s^t
\E_{X_u\mid r}\|g_u(X_u,r)\|^2\,\dd u.
\end{equation}
Consequently, for almost every $t>0$ (and at every differentiability point),
\begin{equation}
\label{eq:fixed-r-kl-dissipation}
-\frac{\dd}{\dd t}\mathcal K_t(r)
=
\E_{X_t\mid r}\|g_t(X_t,r)\|^2.
\end{equation}
Averaging over the measurement gives
\begin{equation}
\label{eq:joint-mutual-information-dissipation}
\MI(R;X_s)-\MI(R;X_t)
=
\int_s^t\E\|g_u(X_u,R)\|^2\,\dd u,
\qquad
-\frac{\dd}{\dd t}\MI(R;X_t)
=
\E\|g_t(X_t,R)\|^2
\end{equation}
whenever the derivative exists.  Thus force energy is the forward-time
dissipation rate of posterior--prior relative entropy, or the corresponding
reverse-time assimilation rate.
\end{theorem}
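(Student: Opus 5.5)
The plan is the classical de Bruijn/relative-entropy dissipation computation for two densities evolved by the same Fokker--Planck equation, followed by averaging over \(R\).

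\emph{Step 1: both laws solve the same forward equation.} Both \(q_t^r\) and \(p_t\) are obtained by pushing an initial law through the Ornstein--Uhlenbeck kernel \eqref{eq:forward-kernel}. The law \(q_t^r\) starts from \(P_0(\cdot\mid r)\), and \(p_t\) starts from \(P_0\). Hence, for \(t>0\), both satisfy
\[
\partial_t \rho = \nabla\cdot(x\rho) + \Delta\rho = \nabla\cdot\bigl(\rho(x+\nabla\log\rho)\bigr).
\]
For \(t>0\) these densities are smooth and strictly positive, because they are Gaussian convolutions of a probability measure.

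\emph{Step 2: differentiate the relative entropy.} Write \(h_t=q_t^r/p_t\). Then
\[
\frac{\dd}{\dd t}\int q_t^r\log h_t = \int \partial_t q_t^r \log h_t + \int \partial_t q_t^r - \int h_t\,\partial_t p_t .
\]
The middle term vanishes by mass conservation. Now insert the Fokker--Planck equation and integrate by parts; the hypothesis is that the boundary terms vanish. The drift contributions cancel between the first and third terms, since both laws share the same drift \(-x\). What remains is
\[
-\int q_t^r \nabla\log q_t^r\cdot\nabla\log h_t + \int p_t\nabla\log p_t\cdot\nabla h_t .
\]
Using \(\nabla h_t = h_t\nabla\log h_t\), this equals
\[
-\int q_t^r\,\|\nabla\log h_t\|^2 .
\]
By \eqref{eq:posterior-score-factorization}, \(\nabla\log h_t=s_t^r-s_t^{\rm prior}=g_t(\cdot,r)\).

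This formal computation is where the care is needed, and I expect it to be the main obstacle. To make it rigorous, I would run the argument on a smooth truncation: replace \(\log h\) by a truncated version, or multiply by a cutoff \(\chi_R(x)\). This yields the identity in integrated form, \eqref{eq:fixed-r-kl-integrated}, on \([s,t]\). I would then remove the cutoff using three inputs: the vanishing-boundary assumption, the integrability of the relative Fisher information on \([s,t]\), and \(\mathcal K_s(r)<\infty\). The last one, combined with monotonicity, also makes \(\mathcal K_u(r)\) finite for \(u\ge s\). Working with the integrated form avoids having to prove differentiability of \(\mathcal K_t\) directly.

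\emph{Step 3: the pointwise rate.} Since \(\mathcal K_t(r)\) is an integral of an \(L^1_{\rm loc}\) function of \(t\), it is absolutely continuous. The Lebesgue differentiation theorem therefore gives \eqref{eq:fixed-r-kl-dissipation} at almost every \(t\), and at every continuity point of the integrand.

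\emph{Step 4: average over the measurement.} Use the identity
\[
\MI(R;X_t)=\E_R\bigl[\KL(p_t(\cdot\mid R)\|p_t)\bigr]=\E_R\,\mathcal K_t(R).
\]
Integrate \eqref{eq:fixed-r-kl-integrated} against \(p_R\). Tonelli's theorem applies because the integrand is nonnegative, so the order of \(\E_R\) and \(\int_s^t\) may be exchanged. This gives the first identity in \eqref{eq:joint-mutual-information-dissipation}, with the right side equal to \(\int_s^t\E\|g_u(X_u,R)\|^2\,\dd u\) under the joint law. The derivative form then follows wherever the derivative exists. Finally, running time backwards reinterprets this dissipation as a reverse-time assimilation rate.
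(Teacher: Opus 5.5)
Your proposal is correct and follows the paper's proof essentially step for step. Both laws solve the same OU Fokker--Planck equation, the drift terms cancel after integration by parts, the diffusion terms reduce to $-\int q_u^r\|\nabla\log h_u\|^2$, Bayes' factorization identifies $\nabla\log h_u$ with $g_u(\cdot,r)$, and averaging with Tonelli gives the mutual-information form. The only difference is the order of steps: you obtain the integrated identity first (via a cutoff) and recover the rate by Lebesgue differentiation, which gives it at almost every $t$ and at continuity points of the integrand, whereas the paper differentiates $\mathcal K_u(r)$ directly under its assumed regularity and then integrates.
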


This is the relative de Bruijn identity for a common diffusion semigroup
\citep{foellmer1985entropy}.  Equation~\eqref{eq:fixed-r-kl-dissipation}
uses diffusion time; Proposition~\ref{prop:force-information-rate} below
reparameterizes its measurement average by Gaussian-channel signal-to-noise
ratio.

\begin{lemma}[Regularity in the solvable model classes]
\label{lem:budget-regularity}
The regularity conditions of
Theorem~\ref{thm:fixed-r-entropy-dissipation}, together with the endpoint
limits used below, hold in either of the following settings.
\begin{enumerate}
\item \(P_0=\sum_{i=1}^n\pi_i\delta_{x_i}\) has finite support with
\(\pi_i>0\), and \(P_0(\cdot\mid r)=\sum_iw_i(r)\delta_{x_i}\).
If the support points are pairwise distinct, then
\[
\lim_{t\to0^+}\mathcal K_t(r)
=\sum_{i=1}^n w_i(r)\log\frac{w_i(r)}{\pi_i}.
\]
\item \(P_0=\calN(\mu,\Sigma)\) and
\(P_0(\cdot\mid r)=\calN(\mu_r,S_r)\), with
\(\Sigma\succ0\) and \(S_r\succ0\).
\end{enumerate}
In both cases \(\lim_{t\to\infty}\mathcal K_t(r)=0\).  On every compact
interval \(0<s<t<\infty\), the reverse path laws below have finite relative
entropy.
\end{lemma}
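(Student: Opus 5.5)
We treat the two settings separately, in each case verifying (i) finiteness of \(\mathcal K_s(r)\), (ii) integrability of the relative Fisher information on compact \([s,t]\subset(0,\infty)\), (iii) vanishing of the integration-by-parts boundary terms, and then (iv) the endpoint limits and (v) finite relative entropy of the reverse path laws. The common tool is explicit Gaussian-tail control: for \(t>0\) both \(q_t^r\) and \(p_t\) are smooth, strictly positive, and bounded above and below by Gaussians in \(x\), uniformly for \(t\) in compact subintervals of \((0,\infty)\).

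\emph{Finite support.} Here \(p_t(x)=\sum_i\pi_i q_t(x\mid x_i)\) and \(q_t^r=\sum_i w_i q_t(\cdot\mid x_i)\), so the density ratio \(q_t^r/p_t=\sum_i w_i q_t(\cdot\mid x_i)/\sum_i\pi_iq_t(\cdot\mid x_i)\) lies in \([\min_i w_i/\pi_i,\max_i w_i/\pi_i]\). This gives \(\mathcal K_t(r)\le\max_i\log(w_i/\pi_i)<\infty\) for all \(t\). By Lemma~\ref{thm:posterior-tweedie}, \(g_t=(a_t/\Delta_t)(m_t^r-m_t)\), and both denoisers are convex combinations of the \(x_i\). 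Hence \(\|g_t\|\le 2(a_t/\Delta_t)\max_i\|x_i\|\), which is bounded on compact intervals and yields (ii). The boundary terms involve \(\log(q_t^r/p_t)\) (bounded) times fluxes of the form \(q_t^r\nabla\log q_t^r\) and \(x q_t^r\), which decay like Gaussians, so (iii) follows. For \(t\to0^+\), the posterior weights \(\Prob(I=i\mid X_t=x)\) concentrate, since the points are distinct. By dominated convergence (the integrand is bounded by the log-ratio bound), \(\mathcal K_t(r)\to\KL(w\|\pi)\); equivalently, data processing gives \(\mathcal K_t\le\KL(w\|\pi)\), and lower semicontinuity gives the reverse inequality. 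For \(t\to\infty\), \(a_t\to0\), so both mixtures converge to \(\calN(0,\Id)\) with ratio tending to one uniformly on compacts and remaining bounded, giving \(\mathcal K_t\to0\).

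\emph{Gaussian case.} The two marginals are \(\calN(a_t\mu_r,a_t^2S_r+\Delta_t\Id)\) and \(\calN(a_t\mu,a_t^2\Sigma+\Delta_t\Id)\), so \(\mathcal K_t\) and the relative Fisher information have closed forms (trace, log-det, and quadratic terms) that are continuous in \(t>0\). This makes (i) and (ii) immediate, and (iii) holds because polynomial-times-Gaussian integrands vanish at infinity. As \(t\to\infty\), both covariances tend to \(\Id\) and both means to \(0\), so the closed-form KL tends to \(0\).

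\emph{Reverse path laws.} For both settings, on \([s,t]\) the posterior and prior reverse processes share a diffusion coefficient and differ in drift by \(2g\), with \(\E\|g\|^2\) bounded on \([s,t]\) by the above. Their initial marginals at time \(t\) have \(\mathcal K_t<\infty\). Girsanov's theorem in the finite-entropy form \citep{leonard2012girsanov} then gives relative entropy equal to \(\mathcal K_t+\int_s^t\E\|g_u\|^2\,\dd u<\infty\); the needed well-posedness comes from the drifts being locally Lipschitz with linear growth, which holds since the scores are smooth with \(\nabla\log p_t\) affine-plus-bounded.

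The main obstacle is the small-noise limit in the discrete case. The dominated-convergence argument needs the pointwise limit of \(q_t^r/p_t\) to be identified on the region where \(X_t\) is near \(a_tx_i\). I would handle it by rescaling \(x=a_tx_i+\sqrt{\Delta_t}z\), under which the ratio converges to \(w_i/\pi_i\) for each fixed \(z\) because the points are separated.
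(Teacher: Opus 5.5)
Your proposal is correct. The ratio bound, the Tweedie bound \(\|g_t\|\le 2(a_t/\Delta_t)\max_i\|x_i\|\), the Gaussian-tail handling of boundary terms, the large-noise limit, and the closed-form Gaussian case all match the paper. The differences are in two steps.

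For the clean endpoint in the finite-support case, the paper gets the lower bound by data processing through a nearest-center decoder \(X_u\mapsto\hat I\). The decoder's output laws under \(q_u^r\) and \(p_u\) converge to \(w(r)\) and \(\pi\). You instead give two arguments:
\begin{itemize}
\item weak lower semicontinuity of KL, where distinctness enters only to identify \(\KL(\sum_iw_i\delta_{x_i}\|\sum_i\pi_i\delta_{x_i})\) with \(\sum_iw_i\log(w_i/\pi_i)\);
\item dominated convergence after the rescaling \(x=a_tx_i+\sqrt{\Delta_t}\,z\).
\end{itemize}
The semicontinuity route is shorter and needs no decoder. The rescaling route is more explicit, but it relies on boundedness of \(\log(q_t^r/p_t)\), i.e.\ \(\min_iw_i(r)>0\). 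That holds for the Gaussian likelihood, but not for an arbitrary posterior on the atoms.

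For finiteness of the reverse path relative entropy, the paper uses the chain rule for forward path laws that share the OU kernel, so the path KL equals \(\mathcal K_s(r)\). It then uses invariance of KL under the measurable time-reversal bijection, which needs no SDE theory. Your Girsanov route additionally requires three things: identifying the time-reversed laws with the reverse SDE solutions, uniqueness in law, and a true-martingale condition. The last is a bounded drift difference in the finite case, and a linear-growth criterion such as Bene\v{s}'s in the Gaussian case. In exchange, your route exhibits directly the decomposition \(\mathcal K_t(r)+\int_s^t\E\|g_u\|^2\,\dd u\) of Corollary~\ref{cor:reverse-path-kl}.

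You also leave implicit the Gaussian clean-endpoint limit used in Corollary~\ref{cor:integrated-head-start}. It follows at once from your closed form, which is continuous up to \(t=0\) because \(\Sigma,S_r\succ0\).
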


\begin{corollary}[Exact posterior--prior reverse-path relative entropy]
\label{cor:reverse-path-kl}
Fix $T>t>0$.  On the reverse clock $u\in[0,T-t]$, let
$\mathbb Q^{r,\mathrm{rev}}_{[t,T]}$ and
$\mathbb P^{\mathrm{rev}}_{[t,T]}$ be the time-reversal pushforwards of the
forward OU path laws initialized from $q_t^r$ and $p_t$, respectively.
When the noised marginals are smooth and positive, they coincide with the
reverse processes \eqref{eq:posterior-reverse-sde} and
\eqref{eq:prior-reverse-sde}, initialized from $q_T^r$ and $p_T$.
Assume the hypotheses of
Theorem~\ref{thm:fixed-r-entropy-dissipation} on \([t,T]\).  Then
\begin{equation}
\label{eq:reverse-path-kl-identity}
\KL\!\left(
\mathbb Q^{r,\mathrm{rev}}_{[t,T]}
\middle\|
\mathbb P^{\mathrm{rev}}_{[t,T]}
\right)
=
\mathcal K_t(r)
=
\mathcal K_T(r)+
\int_t^T\E_{X_u\mid r}\|g_u(X_u,r)\|^2\,\dd u.
\end{equation}
Under the normalization $\dd X_t=-X_t\dd t+\sqrt2\dd W_t$, the coefficient of
the integral is one: the two reverse drifts differ by $2g_t$.
\end{corollary}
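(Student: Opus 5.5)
The plan is to compare the two forward path laws first and then transfer the result to the reverse clock. Let \(\mathbb Q^r_{[t,T]}\) and \(\mathbb P_{[t,T]}\) be the laws on \(C([t,T];\R^d)\) of the forward OU process \eqref{eq:forward-ou-sde}, started at time \(t\) from \(q_t^r\) and \(p_t\). Both use the same Markov transition kernel, so by disintegrating at the initial point the Radon--Nikodym derivative depends only on \(X_t\):
\[
\frac{\dd\mathbb Q^r_{[t,T]}}{\dd\mathbb P_{[t,T]}}(\omega)=\frac{q_t^r}{p_t}(\omega_t).
\]
Hence \(\KL(\mathbb Q^r_{[t,T]}\Vert\mathbb P_{[t,T]})=\KL(q_t^r\Vert p_t)=\mathcal K_t(r)\). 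This is the chain rule for relative entropy with identical conditional path laws.

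Next, time reversal \(\omega\mapsto(\omega_{T-u})_{u\in[0,T-t]}\) is a measurable bijection of path space. Relative entropy is invariant under measurable bijections, since the data-processing inequality applies in both directions. The reversed pushforwards therefore have the same KL, which gives the first equality in \eqref{eq:reverse-path-kl-identity}. Under smoothness and positivity of the marginals, the Haussmann--Pardoux theorem identifies the pushforwards with the reverse SDEs \eqref{eq:posterior-reverse-sde} and \eqref{eq:prior-reverse-sde}. Their initial laws are \(q_T^r\) and \(p_T\) by construction. The second equality is then Theorem~\ref{thm:fixed-r-entropy-dissipation} applied on \([s,t]\to[t,T]\), rearranged as \(\mathcal K_t=\mathcal K_T+\int_t^T\E\|g_u\|^2\dd u\).

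As a consistency check, and to explain the ``coefficient one'' remark, I would also compute the KL via Girsanov on the reverse clock. By the chain rule at time \(u=0\), the KL equals
\[
\KL(q_T^r\Vert p_T)+\E\bigl[\text{path KL given }Z_0\bigr].
\]
The two reverse drifts differ by \(2\{s^r_{T-u}-s^{\rm prior}_{T-u}\}=2g_{T-u}\), by Lemma~\ref{thm:posterior-tweedie} and \eqref{eq:posterior-score-factorization}. The diffusion coefficient is \(\sqrt2\), so the diffusion matrix is \(2\Id\). The Girsanov density \citep{leonard2012girsanov} therefore gives conditional KL
\[
\tfrac12\int_0^{T-t}\E\|2g_{T-u}\|^2/2\,\dd u=\int_t^T\E_{X_u\mid r}\|g_u\|^2\dd u .
\]
The factor \(\tfrac12\cdot 4/2=1\) is the claimed unit coefficient. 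This independently reproduces the budget.

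The main obstacle is rigor in the Girsanov route. It requires finite-entropy Girsanov, meaning finite energy of the drift difference rather than Novikov's condition; Léonard's result supplies this once \(\int_t^T\E\|g_u\|^2<\infty\), which the hypotheses give. The data-processing argument avoids this issue entirely. So I would make that argument the proof and keep the Girsanov computation only to justify the drift-difference interpretation.
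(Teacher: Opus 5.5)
Your proposal is correct and follows the paper's proof essentially step for step: the chain rule for two forward OU path laws that share a transition kernel gives $\mathcal K_t(r)$; time reversal, as a measurable bijection, transfers this to the reverse laws; Theorem~\ref{thm:fixed-r-entropy-dissipation} on $[t,T]$ gives the second equality; and the Girsanov computation (drift difference $2g$, diffusion matrix $2\Id_d$) is used only as the unit-coefficient check, which your version helpfully extends by including the terminal term $\KL(q_T^r\Vert p_T)$ explicitly. One small correction to your side remark: L\'eonard's theorem takes finite path relative entropy as its hypothesis, which your first step already supplies through $\mathcal K_t(r)<\infty$, and finite drift energy is its conclusion rather than its input.
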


The path identity uses the exact time-\(T\) prior and posterior marginals.
Initializing both reverse processes from a common approximate Gaussian adds a
separate terminal mismatch.

\begin{corollary}[High-noise reverse-path proximity]
\label{cor:high-noise-path-proximity}
Under Corollary~\ref{cor:reverse-path-kl}, assume
$\Tr\Cov(X_0)<\infty$, and set
\[
\gamma_t=\frac{a_t^2}{\Delta_t},
\qquad
Y_{\gamma_t}=\frac{X_t}{\sqrt{\Delta_t}}.
\]
Then, for every $T>t>0$,
\begin{equation}
\label{eq:average-path-kl-bound}
\begin{aligned}
\E_R\KL\!\left(
\mathbb Q^{R,\mathrm{rev}}_{[t,T]}
\middle\|
\mathbb P^{\mathrm{rev}}_{[t,T]}
\right)
&=\MI(R;X_t)=\MI(R;Y_{\gamma_t})\\
&\le \MI(X_0;Y_{\gamma_t})\\
&\le \frac12\log\det\!\left(\Id_d+\gamma_t\Cov(X_0)\right)\\
&\le \frac{\gamma_t}{2}\Tr\Cov(X_0).
\end{aligned}
\end{equation}
Consequently, if $d^{-1}\Tr\Cov(X_0)\le C_0$ and
$\Delta_t\ge\Delta_0>0$, the average reverse-path KL per coordinate is at
most $C_0a_t^2/(2\Delta_0)$.  Pinsker's inequality and Jensen's inequality also
give
\begin{equation}
\label{eq:average-path-tv-bound}
\E_R\left\|
\mathbb Q^{R,\mathrm{rev}}_{[t,T]}
-
\mathbb P^{\mathrm{rev}}_{[t,T]}
\right\|_{\rm TV}
\le
\sqrt{\frac{\gamma_t}{4}\Tr\Cov(X_0)}.
\end{equation}
Here
\(\|\mu-\nu\|_{\rm TV}:=\sup_B|\mu(B)-\nu(B)|
=\tfrac12\int|\dd\mu-\dd\nu|\).
The KL bound is naturally normalized by dimension.  The total-variation bound
vanishes in fixed dimension as \(a_t\to0\), and along dimension sequences for
which \(d a_t^2\to0\) on the high-noise window.
\end{corollary}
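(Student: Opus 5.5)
The plan is to prove the chain in \eqref{eq:average-path-kl-bound} one link at a time, starting from the path identity and ending with the trace bound, and then to obtain \eqref{eq:average-path-tv-bound} by Pinsker's inequality.

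\emph{First equality.} Averaging Corollary~\ref{cor:reverse-path-kl} over \(R\) gives
\(\E_R\KL(\mathbb Q^{R,\mathrm{rev}}_{[t,T]}\|\mathbb P^{\mathrm{rev}}_{[t,T]})=\E_R\mathcal K_t(R)\).
By definition, \(\E_R\KL(p_t(\cdot\mid R)\|p_t)=\MI(R;X_t)\). This step is pure bookkeeping, since the corollary already delivers the equality for each \(r\).

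\emph{Rescaling and data processing.} The map \(x\mapsto x/\sqrt{\Delta_t}\) is a bijection, so \(\MI(R;X_t)=\MI(R;Y_{\gamma_t})\). Here
\[
Y_{\gamma_t}=\sqrt{\gamma_t}\,X_0+\xi,
\]
because \(a_t/\sqrt{\Delta_t}=\sqrt{\gamma_t}\). Under \eqref{eq:indep-assump} we have the Markov chain \(R\to X_0\to Y_{\gamma_t}\), so data processing gives \(\MI(R;Y_{\gamma_t})\le\MI(X_0;Y_{\gamma_t})\).

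\emph{Gaussian maximum-entropy bound.} Write \(\MI(X_0;Y)=h(Y)-h(Y\mid X_0)\). The conditional term is \(h(Y\mid X_0)=h(\xi)=\tfrac d2\log(2\pi e)\). The covariance of \(Y\) is \(\Id_d+\gamma_t\Cov(X_0)\), which is finite because \(\Tr\Cov(X_0)<\infty\). Since \(Y\) has a density (it is convolved with Gaussian noise), the maximum-entropy principle gives
\[
h(Y)\le\tfrac12\log\bigl((2\pi e)^d\det(\Id_d+\gamma_t\Cov(X_0))\bigr).
\]
Subtracting the two entropies yields the log-det bound. One point needs care: \(X_0\) may be discrete, as for an empirical prior, so \(h(X_0)\) is unavailable. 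However, only \(h(Y)\) and \(h(Y\mid X_0)\) enter the argument, and both are finite. The last inequality follows from \(\log\det(\Id+M)=\sum_i\log(1+\lambda_i)\le\sum_i\lambda_i=\Tr M\) for \(M\succeq0\).

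\emph{Per-coordinate consequence.} Dividing by \(d\) and using \(\gamma_t=a_t^2/\Delta_t\le a_t^2/\Delta_0\) gives the per-coordinate bound \(C_0a_t^2/(2\Delta_0)\).

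\emph{Total variation.} For each \(r\), Pinsker's inequality (with TV normalized as a supremum over sets) gives
\[
\|\mathbb Q^{r,\mathrm{rev}}_{[t,T]}-\mathbb P^{\mathrm{rev}}_{[t,T]}\|_{\rm TV}\le\sqrt{\tfrac12\KL}.
\]
Taking \(\E_R\) and applying Jensen's inequality to the concave square root gives
\[
\E_R\|\cdot\|_{\rm TV}\le\sqrt{\tfrac12\E_R\KL}\le\sqrt{\tfrac12\cdot\tfrac{\gamma_t}{2}\Tr\Cov(X_0)}=\sqrt{\tfrac{\gamma_t}{4}\Tr\Cov(X_0)}.
\]
The closing remarks then follow: in fixed dimension \(\gamma_t\to0\) as \(a_t\to0\), and along dimension sequences the bound is of order \(\sqrt{d\,a_t^2}\).

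\emph{Main obstacle.} The only delicate point is measurability: \(\E_R\KL\) must be a well-defined integral that equals \(\MI(R;X_t)\), even for \(r\) where the hypotheses of Corollary~\ref{cor:reverse-path-kl} are only assumed to hold. I would handle this by requiring the corollary's hypotheses for \(P_R\)-almost every \(r\). I would then use the chain rule for relative entropy on path space, whose time-\(t\) marginal term is \(\mathcal K_t(r)\) and whose conditional-path term vanishes because both path laws share the same OU transition kernel. This route confirms the identity without any integrability beyond the nonnegativity of the integrand.
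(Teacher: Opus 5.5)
Your proposal is correct and follows the paper's proof step for step: you average the path identity to get \(\MI(R;X_t)\), use invariance under the invertible rescaling, apply data processing through \(R\to X_0\to Y_{\gamma_t}\), use the Gaussian maximum-entropy bound followed by \(\log\det(\Id_d+M)\le\Tr M\), and finish with Pinsker plus Jensen for the total-variation bound. Your extra remarks are sound refinements rather than a different route: only the chain-rule half of Corollary~\ref{cor:reverse-path-kl} is needed, its hypotheses should hold for \(P_R\)-almost every \(r\), and a discrete \(X_0\) causes no trouble because only \(h(Y)\) and \(h(Y\mid X_0)\) enter.
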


\begin{corollary}[Integrated head start]
\label{cor:integrated-head-start}
Suppose that $P_0(\cdot\mid r)\ll P_0$,
$\KL(P_0(\cdot\mid r)\|P_0)<\infty$, and the endpoint limits of
Theorem~\ref{thm:fixed-r-entropy-dissipation} hold.  Then
\begin{equation}
\label{eq:integrated-force-clean-kl}
\int_0^\infty
\E_{X_t\mid r}\|g_t(X_t,r)\|^2\,\dd t
=
\KL\!\left(P_0(\cdot\mid r)\middle\|P_0\right).
\end{equation}
For the uniform empirical prior on pairwise distinct points,
\begin{equation}
\label{eq:integrated-force-head-start}
\int_0^\infty
\E_{X_t\mid r}\|g_t(X_t,r)\|^2\,\dd t
=
\sum_{i=1}^n w_i(r)\log\{n w_i(r)\}
=
\log n-H(I\mid r).
\end{equation}
At fixed \(r\), the information supplied by the measurement before reverse
denoising is therefore the accumulated force energy along the full noising
path.
\end{corollary}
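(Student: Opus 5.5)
The plan is to take the integrated budget \eqref{eq:fixed-r-kl-integrated} of Theorem~\ref{thm:fixed-r-entropy-dissipation} on a compact window \([s,t]\) and send \(s\to0^+\) and \(t\to\infty\). For fixed \(0<s<t\),
\[
\int_s^t\E_{X_u\mid r}\|g_u(X_u,r)\|^2\,\dd u=\mathcal K_s(r)-\mathcal K_t(r).
\]
The integrand is nonnegative, so monotone convergence gives the left side of \eqref{eq:integrated-force-clean-kl} as the limit of the left side. It then suffices to establish the two endpoint limits
\[
\lim_{t\to\infty}\mathcal K_t(r)=0,
\qquad
\lim_{s\to0^+}\mathcal K_s(r)=\KL\bigl(P_0(\cdot\mid r)\Vert P_0\bigr).
\]
These are exactly the endpoint limits assumed in the statement, and Lemma~\ref{lem:budget-regularity} verifies them in the solvable classes.

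To justify the \(s\to0^+\) limit in general, and not just take it as a hypothesis, I would argue two inequalities.
\begin{itemize}
\item[(i)] \emph{Upper bound.} By data processing, \(q_s^r\) and \(p_s\) are images of \(P_0(\cdot\mid r)\) and \(P_0\) under the same Markov kernel \(q_s\). Hence \(\mathcal K_s(r)\le\KL(P_0(\cdot\mid r)\Vert P_0)\) for all \(s\), and \(\mathcal K_s(r)\) is nonincreasing in \(s\).
\item[(ii)] \emph{Lower bound.} As \(s\to0\), \(q_s^r\to P_0(\cdot\mid r)\) and \(p_s\to P_0\) weakly, since \(a_s\to1\) and \(\Delta_s\to0\). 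Lower semicontinuity of KL under joint weak convergence then gives \(\liminf_{s\to0}\mathcal K_s(r)\ge\KL(P_0(\cdot\mid r)\Vert P_0)\).
\end{itemize}
Together these give the limit.

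The large-time limit \(\mathcal K_t(r)\to0\) follows from the same monotonicity together with mixing of the OU semigroup. For the main line I would simply invoke the assumed endpoint limit. Finiteness of \(\KL(P_0(\cdot\mid r)\Vert P_0)\) ensures the telescoping yields a finite value rather than \(\infty-\infty\).

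For the empirical specialization \eqref{eq:integrated-force-head-start}, the clean laws are \(P_0=\frac1n\sum_i\delta_{x_i}\) and \(P_0(\cdot\mid r)=\sum_i w_i(r)\delta_{x_i}\). Absolute continuity holds automatically. Part~1 of Lemma~\ref{lem:budget-regularity}, with \(\pi_i=1/n\) and pairwise distinct support, gives the \(s\to0^+\) limit \(\sum_i w_i(r)\log\{n w_i(r)\}\). Expanding the logarithm gives
\[
\sum_i w_i(r)\log\{n w_i(r)\}=\log n+\sum_i w_i\log w_i=\log n-H(I\mid r).
\]

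The main obstacle is the interchange of limits: applying Theorem~\ref{thm:fixed-r-entropy-dissipation} requires its regularity on every compact \([s,t]\). The statement's phrase ``the endpoint limits of Theorem~\ref{thm:fixed-r-entropy-dissipation} hold'' is read to include this. Beyond that, the proof is a monotone-convergence passage, and I expect no further difficulty.
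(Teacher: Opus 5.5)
Your proposal is correct and follows the paper's proof: send $s\to0^+$ and $t\to\infty$ in \eqref{eq:fixed-r-kl-integrated} using the assumed endpoint limits, then evaluate $\sum_i w_i(r)\log\{n w_i(r)\}=\log n-H(I\mid r)$ for the uniform empirical prior on distinct points. Your added data-processing and lower-semicontinuity argument for the $s\to0^+$ limit is valid and goes slightly beyond the paper, which simply assumes that limit in general and verifies it for the solvable classes in Lemma~\ref{lem:budget-regularity}.
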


Lemma~\ref{lem:budget-regularity} verifies these conditions for the finite
empirical and linear-Gaussian model classes used below.

\subsection{Joint-Law Force Energy and High-Noise Scaling}
\label{sec:high-noise-dominance}

Averaging over the random measurement yields an additional orthogonality of
the prior score and the measurement force.

\begin{lemma}[Joint score orthogonality and Fisher-energy decomposition]
\label{prop:joint-fisher-pythagoras}
Under the joint law of $(X_t,R)$, assume the relevant squared scores are
integrable and that differentiation may be passed under the \(\nu\)-integral
in \(p_t(x)=\int p_t(x,r)\,\dd\nu(r)\).  Write
$s_t^R(X_t)=s_t^r(X_t)|_{r=R}$.  Then
\begin{equation}
\label{eq:force-conditional-mean-zero}
\E[g_t(X_t,R)\mid X_t]=0,
\end{equation}
and therefore the score second-moment matrices obey
\begin{equation}
\label{eq:matrix-fisher-pythagoras}
\E\!\left[s_t^R(X_t)s_t^R(X_t)^\top\right]
=
\E\!\left[s_t^{\rm prior}(X_t)s_t^{\rm prior}(X_t)^\top\right]
+
\E\!\left[g_t(X_t,R)g_t(X_t,R)^\top\right].
\end{equation}
Taking the trace gives
\begin{equation}
\label{eq:joint-fisher-pythagoras}
\E\|s_t^R(X_t)\|^2
=
\E\|s_t^{\rm prior}(X_t)\|^2
+
\E\|g_t(X_t,R)\|^2,
\end{equation}
and, for every deterministic vector $v$,
\begin{equation}
\label{eq:directional-fisher-pythagoras}
\E|v^\top s_t^R(X_t)|^2
=
\E|v^\top s_t^{\rm prior}(X_t)|^2
+
\E|v^\top g_t(X_t,R)|^2.
\end{equation}
\end{lemma}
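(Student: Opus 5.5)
The plan is to prove the conditional mean-zero property \eqref{eq:force-conditional-mean-zero} first. The three energy decompositions then follow by expanding a square and using the tower property to kill the cross term.

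For the mean-zero step, fix \(x\) with \(p_t(x)>0\). Write the joint density of \((X_t,R)\) with respect to Lebesgue\(\times\nu\) as
\[
p_t(x,r)=p_t(x)\,p(r\mid X_t=x),
\]
which is the factorization \eqref{eq:posterior-bayes-factorization}. Then \(g_t(x,r)=\nabla_x\log p(r\mid X_t=x)=\nabla_x p(r\mid X_t=x)/p(r\mid X_t=x)\) wherever the likelihood is positive. The conditional law of \(R\) given \(X_t=x\) has \(\nu\)-density \(p(r\mid X_t=x)\), so
\[
\E[g_t(X_t,R)\mid X_t=x]=\int \nabla_x p(r\mid X_t=x)\,\dd\nu(r)=\nabla_x\int p(r\mid X_t=x)\,\dd\nu(r)=\nabla_x 1=0.
\]
The interchange of \(\nabla_x\) and the \(\nu\)-integral is exactly the hypothesis about passing differentiation under \(p_t(x)=\int p_t(x,r)\,\dd\nu(r)\). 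To use that hypothesis in this form, I will phrase the computation through \(p_t(x,r)\): \(\nabla_x p_t(x)=\int\nabla_x p_t(x,r)\,\dd\nu(r)\). Dividing by \(p_t(x)\) shows that the prior score equals \(\E[s_t^R(X_t)\mid X_t=x]\), since \(\nabla_x\log p_t(x,r)=s_t^r(x)\) (the factor \(p_R(r)\) does not depend on \(x\)). Subtracting \(s_t^{\rm prior}(x)\) and using \eqref{eq:posterior-score-factorization} gives the claim. Points with zero likelihood form a conditional null set and do not affect the integral.

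For the matrix decomposition, write \(s_t^R=s_t^{\rm prior}+g_t\) and expand \(s_t^R s_t^{R\top}\). The cross term is
\[
\E[s_t^{\rm prior}(X_t)\,g_t(X_t,R)^\top]=\E\bigl[s_t^{\rm prior}(X_t)\,\E[g_t(X_t,R)\mid X_t]^\top\bigr]=0,
\]
and its transpose vanishes in the same way. This gives \eqref{eq:matrix-fisher-pythagoras}. Taking traces gives \eqref{eq:joint-fisher-pythagoras}, and sandwiching with \(v^\top(\cdot)v\) gives \eqref{eq:directional-fisher-pythagoras}.

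The only delicate step is integrability: the tower-property manipulation needs the cross term to be integrable. This follows from Cauchy--Schwarz, since the squared norms of \(s_t^{\rm prior}\) and \(g_t\) are integrable. Integrability of \(g_t\) is assumed directly, or, if only \(s_t^R\) and \(s_t^{\rm prior}\) are assumed square integrable, it follows from \(g_t=s_t^R-s_t^{\rm prior}\). With that in place, conditioning on \(X_t\) inside the expectation is justified, and the argument is otherwise routine.
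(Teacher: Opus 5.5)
Your proposal is correct and follows essentially the same route as the paper: you prove $\E[g_t(X_t,R)\mid X_t]=0$ by passing $\nabla_x$ under the $\nu$-integral of the conditional measurement density, then use the tower property to eliminate the cross moments $\E[s_t^{\rm prior}(X_t)g_t(X_t,R)^\top]$ before expanding, taking the trace, and applying the quadratic form. Two small refinements of yours go slightly beyond the paper's text. Writing the interchange through $p_t(x,r)$ matches the stated hypothesis more literally, and the Cauchy--Schwarz integrability check makes explicit a step the paper leaves implicit.
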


\begin{corollary}[High-noise decay of the smoothed likelihood force]
\label{cor:general-high-noise-energy}
Under the assumptions of Lemma~\ref{thm:posterior-tweedie}, the force
\(g_t\) satisfies, under the joint law of \((X_t,R)\),
\begin{equation}
\label{eq:general-guidance-energy-identity}
\E\|g_t(X_t,R)\|^2
=
\frac{a_t^2}{\Delta_t^2}
\E\big\|\E[X_0\mid X_t,R]-\E[X_0\mid X_t]\big\|^2.
\end{equation}
Moreover,
\begin{equation}
\label{eq:general-guidance-energy-bound}
\E\|g_t(X_t,R)\|^2
\le
\frac{a_t^2}{\Delta_t^2}\Tr\Cov(X_0).
\end{equation}
Consequently, along any dimension sequence for which
\(d^{-1}\Tr\Cov(X_0)=O(1)\), and on any high-noise window where
\(\Delta_t\ge \Delta_0>0\),
\begin{equation}
\label{eq:general-high-noise-scaling}
\frac1d\E\|g_t(X_t,R)\|^2=O(a_t^2).
\end{equation}
\end{corollary}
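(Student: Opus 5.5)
The plan is to start from Lemma~\ref{thm:posterior-tweedie}, which expresses the force pointwise as
\[
g_t(x,r)=\frac{a_t}{\Delta_t}\{m_t^r(x)-m_t(x)\}.
\]
Substituting \(r=R\), \(x=X_t\), squaring and taking expectations under the joint law of \((X_t,R)\) gives \eqref{eq:general-guidance-energy-identity} directly. The only point to check is that \(m_t^R(X_t)\) is a version of \(\E[X_0\mid X_t,R]\). This holds because \(m_t^r(x)\) was defined as the conditional mean given \(X_t=x,R=r\), and the conditional-independence assumption \eqref{eq:indep-assump} makes the joint law of \((X_0,X_t,R)\) well defined with the kernel \(q_t\). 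Integrability follows from \(\E\|X_0\|^2<\infty\), since both conditional means are then in \(L^2\).

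For the bound \eqref{eq:general-guidance-energy-bound}, I would use the \(L^2\) projection structure. Set \(\mathcal F_1=\sigma(X_t)\subseteq\mathcal F_2=\sigma(X_t,R)\), and write \(U=\E[X_0\mid\mathcal F_2]-\E[X_0\mid\mathcal F_1]\). The residual \(X_0-\E[X_0\mid\mathcal F_2]\) is orthogonal to every \(\mathcal F_2\)-measurable square-integrable vector, and in particular to \(U\). Pythagoras then gives
\[
\E\|X_0-\E[X_0\mid X_t]\|^2=\E\|X_0-\E[X_0\mid X_t,R]\|^2+\E\|U\|^2 .
\]
Hence
\[
\E\|U\|^2\le \E\|X_0-\E[X_0\mid X_t]\|^2\le \E\|X_0-\E X_0\|^2=\Tr\Cov(X_0),
\]
where the last inequality holds because the constant \(\E X_0\) is \(\mathcal F_1\)-measurable and the conditional mean minimizes mean-square error. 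Multiplying by \(a_t^2/\Delta_t^2\) yields \eqref{eq:general-guidance-energy-bound}. An equivalent route is to note that \(\E\|U\|^2=\E\|\E[X_0\mid\mathcal F_2]\|^2-\E\|\E[X_0\mid\mathcal F_1]\|^2\), apply the conditional Jensen inequality, and center first by \(\E X_0\).

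The scaling statement \eqref{eq:general-high-noise-scaling} then follows by dividing by \(d\) and using \(d^{-1}\Tr\Cov(X_0)\le C_0\) together with \(\Delta_t\ge\Delta_0\). This gives \(d^{-1}\E\|g_t\|^2\le C_0a_t^2/\Delta_0^2\), with a constant uniform in \(d\) and in \(t\) on the window.

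None of these steps is deep; the main thing to get right is the measure-theoretic identification in the first step. The pointwise Tweedie formula holds for every \(r\) with \(0<p_R(r)<\infty\). That set has \(p_R\)-full measure, so the evaluation at \(R\) is almost surely well defined. It also agrees with the regular conditional expectation, because \(p_t(x\mid r)\) and the conditional law of \(X_0\) given \((X_t,R)=(x,r)\) are built from the same Bayes factorization \eqref{eq:posterior-bayes-factorization}.
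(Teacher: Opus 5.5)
Your proof is correct. The identity step and the scaling step match the paper's, but you obtain the bound \eqref{eq:general-guidance-energy-bound} from a different Pythagorean split.

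With $U=\E[X_0\mid X_t,R]$ and $V=\E[X_0\mid X_t]=\E[U\mid X_t]$, the paper writes $\E\|U-V\|^2=\E\|U-\E X_0\|^2-\E\|V-\E X_0\|^2$. It drops the second term and bounds $\E\|U-\E X_0\|^2$ by $L^2$-contraction. This is the ``equivalent route'' you mention at the end. Your main route instead splits the prior denoising error, $\E\|X_0-V\|^2=\E\|X_0-U\|^2+\E\|U-V\|^2$, and drops the posterior denoising error.

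Both routes reach $\Tr\Cov(X_0)$, but the intermediate bounds behave differently.
\begin{itemize}
\item The paper's intermediate quantity, $\Tr\Cov(\E[X_0\mid X_t,R])$, is asymptotically tight in the high-noise regime the corollary targets. As $a_t\to0$ the discarded term $\E\|V-\E X_0\|^2$ vanishes, and the bound tends to $\Tr\Cov(\E[X_0\mid R])$, the sharp leading coefficient in \eqref{eq:initial-measurement-information-slope}.
\item Your discarded term tends instead to $\Tr\Cov(X_0)-\Tr\Cov(\E[X_0\mid R])$, so your intermediate bound (the prior MMSE) is looser there.
\end{itemize}

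What your route buys is a direct link to the I-MMSE machinery. It reads $\E\|g_t\|^2\le(2a_t^2/\Delta_t^2)\,\frac{\dd}{\dd\gamma}\MI(X_0;Y_\gamma)$ at $\gamma=\gamma_t$. Compared with \eqref{eq:force-energy-information-rate}, this is a derivative-level data-processing inequality.
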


The scaling is quadratic in the signal coefficient \(a_t\), under a finite
per-coordinate second moment.  The condition
\(\Delta_t\ge\Delta_0>0\) specifies the high-noise window and prevents the
factor \(\Delta_t^{-2}\) from diverging.

\begin{lemma}[Automatic prior-score energy lower bound]
\label{lem:prior-score-lower-bound}
Assume $\Tr\Cov(X_0)<\infty$ and let $\mu_t=\E X_t$.  For every $t>0$,
\begin{equation}
\label{eq:automatic-prior-score-lower}
\frac1d\E\|s_t^{\rm prior}(X_t)\|^2
\ge
\frac{d}{\Tr\Cov(X_t)}
=
\frac{d}{a_t^2\Tr\Cov(X_0)+d\Delta_t}.
\end{equation}
In particular, if $d^{-1}\Tr\Cov(X_0)\le C_0$, then
\begin{equation}
\label{eq:automatic-prior-score-lower-C0}
\frac1d\E\|s_t^{\rm prior}(X_t)\|^2
\ge
\frac1{a_t^2C_0+\Delta_t}.
\end{equation}
\end{lemma}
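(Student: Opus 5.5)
The plan is to lower-bound the score energy by an integration-by-parts (Stein) identity together with Cauchy--Schwarz, and then use the explicit covariance of the noised variable. For \(t>0\), \(p_t\) is a Gaussian convolution, so it is smooth and positive. Because \(\Tr\Cov(X_0)<\infty\), the variable \(X_t\) has finite second moments, with \(\mu_t=a_t\E X_0\).

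First I would establish the identity
\[
\E\bigl[s_t^{\rm prior}(X_t)(X_t-\mu_t)^\top\bigr]=-\Id_d .
\]
The cleanest route avoids boundary terms in \(x\) by using the Tweedie representation. By Lemma~\ref{thm:posterior-tweedie} in the unconditional case, \(s_t^{\rm prior}(x)=\E[\nabla_x\log q_t(x\mid X_0)\mid X_t=x]\). Hence
\[
s_t^{\rm prior}(X_t)=-\E[\xi\mid X_t]/\sqrt{\Delta_t}.
\]
By the tower property,
\[
\E[s_t^{\rm prior}(X_t)(X_t-\mu_t)^\top]
=-\Delta_t^{-1/2}\E[\xi(a_t(X_0-\E X_0)+\sqrt{\Delta_t}\xi)^\top]
=-\Id_d,
\]
using \(\xi\perp X_0\) and \(\E\xi=0\). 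Integrability holds because \(\E\|s_t^{\rm prior}\|^2\le \E\|\xi\|^2/\Delta_t=d/\Delta_t\) by Jensen's inequality. If the interchange assumption of Lemma~\ref{thm:posterior-tweedie} is not wanted here, I would verify the representation directly. Differentiating under the integral in \eqref{eq:noised-distribution} is justified for a Gaussian kernel against any probability measure, locally uniformly in \(x\).

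Second, I take the trace of this identity: \(\E\langle s_t^{\rm prior}(X_t),X_t-\mu_t\rangle=-d\). Cauchy--Schwarz then gives
\[
d^2\le \E\|s_t^{\rm prior}(X_t)\|^2\,\E\|X_t-\mu_t\|^2 .
\]
Since \(\E\|X_t-\mu_t\|^2=\Tr\Cov(X_t)\), this yields \(\E\|s_t^{\rm prior}\|^2\ge d^2/\Tr\Cov(X_t)\). Dividing by \(d\) gives the first inequality in \eqref{eq:automatic-prior-score-lower}.

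Third, independence of \(X_0\) and \(\xi\) gives \(\Cov(X_t)=a_t^2\Cov(X_0)+\Delta_t\Id_d\). Taking the trace produces the stated denominator \(a_t^2\Tr\Cov(X_0)+d\Delta_t\). Substituting \(\Tr\Cov(X_0)\le dC_0\) and cancelling \(d\) gives \eqref{eq:automatic-prior-score-lower-C0}.

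The only delicate point is justifying the Stein identity without boundary-term assumptions. Routing it through the noise-prediction form \(s_t^{\rm prior}=-\E[\xi\mid X_t]/\sqrt{\Delta_t}\) handles this, and it requires only finite second moments. The rest is elementary.
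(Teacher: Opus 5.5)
Your proof is correct and follows the paper's argument: the identity $\E\langle s_t^{\rm prior}(X_t),X_t-\mu_t\rangle=-d$, Cauchy--Schwarz against $\Tr\Cov(X_t)$, and $\Cov(X_t)=a_t^2\Cov(X_0)+\Delta_t\Id_d$. The one difference is how you obtain the identity: the paper invokes integration by parts without addressing boundary terms, while you derive it from $s_t^{\rm prior}(X_t)=-\E[\xi\mid X_t]/\sqrt{\Delta_t}$ and the tower property, which needs only finite second moments and so avoids boundary-term control altogether.
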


\begin{corollary}[Likelihood-to-prior force ratio]
\label{cor:general-relative-energy}
Assume the conditions of Corollary~\ref{cor:general-high-noise-energy} and
$d^{-1}\Tr\Cov(X_0)\le C_0$.  Then, for every $t>0$,
\begin{equation}
\label{eq:general-relative-energy-ratio}
\frac{\E\|g_t(X_t,R)\|^2}
{\E\|s_t^{\rm prior}(X_t)\|^2}
\le
\frac{C_0(a_t^2C_0+\Delta_t)}{\Delta_t^2}\,a_t^2.
\end{equation}
Consequently, on every high-noise window where $\Delta_t\ge\Delta_0>0$,
\begin{equation}
\label{eq:general-relative-energy-ratio-window}
\frac{\E\|g_t(X_t,R)\|^2}
{\E\|s_t^{\rm prior}(X_t)\|^2}
\le
\frac{C_0(C_0+1)}{\Delta_0^2}\,a_t^2.
\end{equation}
\end{corollary}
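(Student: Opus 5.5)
The plan is to divide the upper bound of Corollary~\ref{cor:general-high-noise-energy} by the lower bound of Lemma~\ref{lem:prior-score-lower-bound}; both results are already available.

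First, the numerator. By \eqref{eq:general-guidance-energy-bound} together with \(\Tr\Cov(X_0)\le dC_0\), we have
\[
\E\|g_t(X_t,R)\|^2\le \frac{a_t^2}{\Delta_t^2}\,dC_0 .
\]
This rests on \eqref{eq:general-guidance-energy-identity}. One then bounds
\(\E\|\E[X_0\mid X_t,R]-\E[X_0\mid X_t]\|^2\) by \(\E\|X_0-\E[X_0\mid X_t]\|^2\), which holds because conditional expectation is an \(L^2\) projection onto the smaller \(\sigma\)-field. The latter quantity is in turn at most \(\Tr\Cov(X_0)\), since \(\E X_0\) is a competitor estimator.

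Second, the denominator. By \eqref{eq:automatic-prior-score-lower-C0},
\[
\E\|s_t^{\rm prior}(X_t)\|^2\ge \frac{d}{a_t^2C_0+\Delta_t}>0 .
\]
In particular the ratio is well defined. The lemma itself follows from Cauchy--Schwarz applied to \(\E[(X_t-\mu_t)^\top s_t^{\rm prior}(X_t)]=-d\), which is obtained by integration by parts. I take it as given here.

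Dividing the first bound by the second gives
\[
\frac{\E\|g_t\|^2}{\E\|s_t^{\rm prior}\|^2}\le \frac{a_t^2 dC_0}{\Delta_t^2}\cdot\frac{a_t^2C_0+\Delta_t}{d},
\]
which is exactly \eqref{eq:general-relative-energy-ratio}. For the window statement, use \(a_t^2\le1\) and \(\Delta_t\le 1\) to bound the numerator factor by \(C_0+1\), and use \(\Delta_t\ge\Delta_0\) to bound \(\Delta_t^{-2}\le\Delta_0^{-2}\). This yields \eqref{eq:general-relative-energy-ratio-window}.

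I do not expect a real obstacle. The only point needing care is that the hypotheses of both earlier results are met. Lemma~\ref{lem:prior-score-lower-bound} needs only \(\Tr\Cov(X_0)<\infty\), which is implied by \(d^{-1}\Tr\Cov(X_0)\le C_0\). The corollary's assumptions are inherited directly from the statement.
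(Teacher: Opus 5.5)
Your proposal is correct and matches the paper's proof: you combine \eqref{eq:general-guidance-energy-bound} with $\Tr\Cov(X_0)\le dC_0$, divide by the lower bound \eqref{eq:automatic-prior-score-lower-C0}, and on the window use $a_t^2C_0+\Delta_t\le C_0+1$ and $\Delta_t^{-2}\le\Delta_0^{-2}$. Your aside re-deriving the numerator bound via $\E\|X_0-\E[X_0\mid X_t]\|^2$ is a valid alternative to the paper's route through $\E\|\E[X_0\mid X_t,R]-\E X_0\|^2$, although the step rests on the orthogonality of $X_0-\E[X_0\mid X_t,R]$ to the $(X_t,R)$-measurable difference, not on a projection onto the smaller $\sigma$-field.
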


The lower bound makes a separate prior-score nondegeneracy assumption
unnecessary.  Thus, under the same second-moment condition, the exact force is
small both absolutely and relative to the prior score at high noise.  For
bounded empirical priors, Appendix~A gives the sharper asymptotic
$d^{-1}\E\|s_t^{\rm prior}(X_t)\|^2=\Delta_t^{-1}+o(1)$.

\subsection{Measurement Assimilation and I-MMSE}
\label{sec:assimilation-flow}

Rescaling the noised state gives the canonical Gaussian channel
\begin{equation}
\label{eq:canonical-gaussian-channel}
Y_\gamma=\sqrt{\gamma}\,X_0+Z,
\qquad
Z\sim\calN(0,\Id_d),
\qquad
Y_{\gamma_t}=\frac{X_t}{\sqrt{\Delta_t}},
\qquad
\gamma_t=\frac{a_t^2}{\Delta_t}.
\end{equation}
Here \(Z\) is independent of \((X_0,R)\).
Because \(X_t\mapsto Y_{\gamma_t}\) is invertible,
\(\MI(R;Y_{\gamma_t})=\MI(R;X_t)\).  Moreover,
\(R\perp Y_\gamma\mid X_0\).  Define the measurement-assimilation information
at channel SNR \(\gamma\) by
\begin{equation}
\label{eq:measurement-assimilation-information}
\mathcal M_\gamma:=\MI(R;Y_\gamma),
\end{equation}
which measures the information shared by the measurement and the current noisy
state.  Conditional I-MMSE \citep{guo2005mutual} identifies its derivative
with the denoiser difference in
\eqref{eq:general-denoiser-guidance}.

\begin{proposition}[I-MMSE representation of measurement-force energy]
\label{prop:force-information-rate}
Under the assumptions of Lemma~\ref{thm:posterior-tweedie}, let
$\widehat m_\gamma(Y_\gamma)=\E[X_0\mid Y_\gamma]$ and
$\widehat m_\gamma^R(Y_\gamma,R)=\E[X_0\mid Y_\gamma,R]$.  Then, for
$\gamma>0$ (and by right differentiation at $\gamma=0$),
\begin{equation}
\label{eq:force-information-derivative}
\frac{\dd}{\dd\gamma}\mathcal M_\gamma
=
\frac12\E\big\|
\widehat m_\gamma^R(Y_\gamma,R)-\widehat m_\gamma(Y_\gamma)
\big\|^2.
\end{equation}
Consequently,
\begin{equation}
\label{eq:force-information-integral}
\mathcal M_\gamma
=
\frac12\int_0^\gamma
\E\big\|
\widehat m_u^R(Y_u,R)-\widehat m_u(Y_u)
\big\|^2\,\dd u.
\end{equation}
At $\gamma_t=a_t^2/\Delta_t$,
\begin{equation}
\label{eq:force-energy-information-rate}
\E\|g_t(X_t,R)\|^2
=
\left.
\frac{2a_t^2}{\Delta_t^2}
\frac{\dd}{\dd\gamma}\mathcal M_\gamma
\right|_{\gamma=\gamma_t}.
\end{equation}
\end{proposition}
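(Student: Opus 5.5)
The plan is to write the measurement information as a difference of two Gaussian-channel informations and differentiate each with the I-MMSE formula. Because \(R\perp Y_\gamma\mid X_0\), the chain rule gives
\[
\MI(X_0;Y_\gamma)=\MI(X_0,R;Y_\gamma)=\MI(R;Y_\gamma)+\MI(X_0;Y_\gamma\mid R),
\]
so that \(\mathcal M_\gamma=\MI(X_0;Y_\gamma)-\MI(X_0;Y_\gamma\mid R)\). The unconditional term is the standard I-MMSE identity of \citet{guo2005mutual}:
\[
\frac{\dd}{\dd\gamma}\MI(X_0;Y_\gamma)=\tfrac12\,\mathrm{mmse}(\gamma)=\tfrac12\E\|X_0-\widehat m_\gamma(Y_\gamma)\|^2,
\]
which holds under \(\E\|X_0\|^2<\infty\). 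For the conditional term, I apply the same identity for each fixed \(r\) to the input law \(P_0(\cdot\mid r)\); the noise \(Z\) is independent of \((X_0,R)\), so the channel is unchanged. Averaging over \(R\) then gives \(\tfrac{\dd}{\dd\gamma}\MI(X_0;Y_\gamma\mid R)=\tfrac12\E\|X_0-\widehat m^R_\gamma\|^2\).

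Subtracting, the rate is \(\tfrac12\bigl(\E\|X_0-\widehat m_\gamma\|^2-\E\|X_0-\widehat m^R_\gamma\|^2\bigr)\). Since \(\widehat m_\gamma\) is a function of \(Y_\gamma\), which is measurable with respect to \(\sigma(Y_\gamma,R)\), the orthogonality principle applies. Expanding \(X_0-\widehat m_\gamma=(X_0-\widehat m^R_\gamma)+(\widehat m^R_\gamma-\widehat m_\gamma)\), the cross term vanishes. The difference of MMSEs is therefore \(\E\|\widehat m^R_\gamma-\widehat m_\gamma\|^2\), which gives \eqref{eq:force-information-derivative}. The integral form \eqref{eq:force-information-integral} follows by integrating from \(0\), using \(\mathcal M_0=0\) (since \(Y_0=Z\) is independent of \(R\)).

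The main obstacle is justifying the termwise derivative and the integration, including right-differentiability at \(\gamma=0\). I first try to use the fact that both informations are finite and absolutely continuous in \(\gamma\) on compact intervals. The bound \(\MI(X_0;Y_\gamma)\le\frac{\gamma}{2}\Tr\Cov(X_0)\), already used in Corollary~\ref{cor:high-noise-path-proximity}, together with the boundedness of the MMSEs by \(\Tr\Cov(X_0)\) and \(\E\Tr\Cov(X_0\mid R)\), makes each integrated identity \(\MI=\tfrac12\int_0^\gamma\mathrm{mmse}\) valid. Subtracting the two integrated identities then gives \eqref{eq:force-information-integral} without any differentiation interchange. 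The derivative, including the right derivative at \(0\), follows from continuity of the MMSE in \(\gamma\), which is known for finite-variance inputs.

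Finally, for \eqref{eq:force-energy-information-rate}, \(Y_{\gamma_t}\) is an invertible function of \(X_t\), so \(\widehat m_{\gamma_t}(Y_{\gamma_t})=m_t(X_t)\) and \(\widehat m^R_{\gamma_t}=m^R_t(X_t)\). Lemma~\ref{thm:posterior-tweedie} gives \(g_t=\frac{a_t}{\Delta_t}(m_t^R-m_t)\), hence
\[
\E\|g_t\|^2=\frac{a_t^2}{\Delta_t^2}\E\|\widehat m^R_{\gamma_t}-\widehat m_{\gamma_t}\|^2=\frac{2a_t^2}{\Delta_t^2}\left.\frac{\dd}{\dd\gamma}\mathcal M_\gamma\right|_{\gamma=\gamma_t}.
\]
This identity is consistent with \eqref{eq:general-guidance-energy-identity}.
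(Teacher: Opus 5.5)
Your proposal is correct and follows essentially the same route as the paper's proof. Both arguments use the chain-rule split $\mathcal M_\gamma=\MI(X_0;Y_\gamma)-\MI(X_0;Y_\gamma\mid R)$, unconditional and per-$r$ conditional I-MMSE averaged over $R$ with MMSE continuity to differentiate, MMSE orthogonality, $\mathcal M_0=0$ for the integral form, and conditional Tweedie for the force-energy rate; the only cosmetic difference is that you subtract the integrated I-MMSE identities before differentiating, whereas the paper differentiates first and then integrates.
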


Because $\gamma_t=a_t^2/\Delta_t$ satisfies
$\dd\gamma_t/\dd t=-2a_t^2/\Delta_t^2$ under the VP schedule,
\eqref{eq:force-energy-information-rate} gives
\begin{equation}
\label{eq:physical-time-assimilation-rate}
-\frac{\dd}{\dd t}\mathcal M_{\gamma_t}
=\E\|g_t(X_t,R)\|^2,
\end{equation}
the measurement average of \eqref{eq:fixed-r-kl-dissipation}.  Hence
\(\mathcal M_{\gamma_t}\) decreases under forward noising, and the same
information is acquired under reverse denoising at the force-energy rate.

At zero channel SNR, the right derivative is
\begin{equation}
\label{eq:initial-measurement-information-slope}
\left.\frac{\dd}{\dd\gamma}\mathcal M_\gamma\right|_{\gamma=0^+}
=
\frac12\E\big\|\E[X_0\mid R]-\E X_0\big\|^2
=
\frac12\Tr\Cov\!\big(\E[X_0\mid R]\big).
\end{equation}
Thus the initial growth is determined by the clean-signal variance explained
by the measurement.  Combining
\eqref{eq:initial-measurement-information-slope} with
\eqref{eq:force-energy-information-rate} recovers the high-noise scaling in
Corollary~\ref{cor:general-high-noise-energy}, now with leading constant
\(\Tr\Cov(\E[X_0\mid R])\).  In the linear Gaussian model,
$\Cov(\E[X_0\mid R])=\Sigma A^\top Q^{-1}A\Sigma$, where
$Q=A\Sigma A^\top+\Gamma$, matching the leading coefficient in
\eqref{eq:likelihood-energy-leading}.

\(\mathcal M_\gamma\) averages over random measurements.  It is distinct from
the fixed-\(r\) empirical-index information introduced in
Section~\ref{sec:collapse}, which tracks the resolution of clean explanations
still compatible with one realized measurement.

\subsection{Linear-Gaussian Closed-Form Specialization}
\label{sec:gaussian-force-specialization}

The preceding identities do not require a Gaussian prior.  To make the role
of operator geometry explicit, we now specialize to the linear Gaussian
model, where the force and its energy are available in closed form.

\begin{proposition}[Linear Gaussian smoothed likelihood and force]
\label{thm:linear-gaussian-guidance}
Let \(X_0\sim\calN(0,\Sigma)\) and
\(R=AX_0+\varepsilon\), where \(\varepsilon\sim\calN(0,\Gamma)\),
\(\Gamma\succ0\), and \(\varepsilon\) is independent of \(X_0\) and of the
forward noising variable used to form \(X_t\).  Define
\begin{equation}
\label{eq:linear-gaussian-BQ}
B_t=a_t^2\Sigma+\Delta_t\Id_d,
\qquad
Q=A\Sigma A^\top+\Gamma.
\end{equation}
Then
\begin{equation}
\label{eq:linear-gaussian-smoothed-likelihood}
R\mid X_t=x\sim \calN(M_tx,\Sigma_{R\mid t}),
\end{equation}
where
\begin{equation}
\label{eq:linear-gaussian-M-Sigma}
M_t=a_tA\Sigma B_t^{-1},
\qquad
\Sigma_{R\mid t}=Q-a_t^2A\Sigma B_t^{-1}\Sigma A^\top.
\end{equation}
Consequently, the exact measurement force is
\begin{equation}
\label{eq:linear-gaussian-guidance-force}
g_t(x,r)=M_t^\top\Sigma_{R\mid t}^{-1}(r-M_tx).
\end{equation}
Under the joint law of \((X_t,R)\),
\begin{equation}
\label{eq:linear-gaussian-guidance-energy}
\frac1d\E\|g_t(X_t,R)\|^2
=
\frac1d\operatorname{Tr}
\left(M_t^\top\Sigma_{R\mid t}^{-1}M_t\right).
\end{equation}
Moreover, under the VP schedule \eqref{eq:forward-marginal} (so that
\(\Delta_t=1-a_t^2\)), if along a dimension sequence
\(\lambda_{\min}(\Gamma)\ge\gamma_0>0\) and
\(d^{-1}\|A\Sigma\|_F^2\le C_0\), then as \(a_t\to0\),
\begin{equation}
\label{eq:high-noise-guidance-decay}
\frac1d\E\|g_t(X_t,R)\|^2=O(a_t^2),
\end{equation}
which is the Gaussian specialization of
Corollary~\ref{cor:general-high-noise-energy}.
\end{proposition}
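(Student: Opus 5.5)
The argument has three parts: a joint-Gaussian conditioning computation for the law of \(R\mid X_t\), a gradient of the resulting Gaussian log-density for the force, and a trace bound for the high-noise rate.

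\emph{Conditional law.} Write \(X_t=a_tX_0+\sqrt{\Delta_t}\xi\) with \(\xi\) independent of \((X_0,\varepsilon)\). Then \((X_t,R)\) is jointly Gaussian with mean zero and
\[
\Cov(X_t)=B_t,\qquad \Cov(R)=Q,\qquad \Cov(R,X_t)=a_tA\Sigma .
\]
Since \(\Delta_t>0\), \(B_t\succ0\), and the Gaussian conditioning formula gives \eqref{eq:linear-gaussian-smoothed-likelihood}. The conditional mean is \(M_tx\) with \(M_t=a_tA\Sigma B_t^{-1}\). The conditional covariance is the Schur complement \(\Sigma_{R\mid t}=Q-a_t^2A\Sigma B_t^{-1}\Sigma A^\top\).

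This Schur complement equals \(A(\Sigma-a_t^2\Sigma B_t^{-1}\Sigma)A^\top+\Gamma\). The bracket is the covariance of \(X_0\mid X_t\), hence positive semidefinite, so \(\Sigma_{R\mid t}\succeq\Gamma\succ0\) and is invertible.

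\emph{Force.} The force \eqref{eq:linear-gaussian-guidance-force} is the \(x\)-gradient of \(-\tfrac12(r-M_tx)^\top\Sigma_{R\mid t}^{-1}(r-M_tx)\). The normalizing constant does not depend on \(x\), so it drops out.

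\emph{Energy identity.} Under the joint law, \(R-M_tX_t\) given \(X_t\) is \(\calN(0,\Sigma_{R\mid t})\). Hence
\[
\E\|g_t\|^2=\Tr\!\bigl(M_t^\top\Sigma_{R\mid t}^{-1}\,\E[(R-M_tX_t)(R-M_tX_t)^\top]\,\Sigma_{R\mid t}^{-1}M_t\bigr)=\Tr\!\bigl(M_t^\top\Sigma_{R\mid t}^{-1}M_t\bigr),
\]
which is \eqref{eq:linear-gaussian-guidance-energy}.

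\emph{High-noise rate.} Since \(\Sigma_{R\mid t}\succeq\Gamma\), we have \(\Sigma_{R\mid t}^{-1}\preceq\gamma_0^{-1}\Id\). This gives
\[
\Tr\!\bigl(M_t^\top\Sigma_{R\mid t}^{-1}M_t\bigr)\le\gamma_0^{-1}\|M_t\|_F^2\le\gamma_0^{-1}a_t^2\|A\Sigma\|_F^2\,\|B_t^{-1}\|_{\rm op}^2 .
\]
Because \(B_t\succeq\Delta_t\Id\), we get \(\|B_t^{-1}\|_{\rm op}\le\Delta_t^{-1}\). Combining,
\[
\frac1d\E\|g_t\|^2\le\frac{C_0}{\gamma_0}\,\frac{a_t^2}{\Delta_t^2}.
\]
As \(a_t\to0\), \(\Delta_t=1-a_t^2\to1\), so the right-hand side is \(O(a_t^2)\) uniformly along the dimension sequence. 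This is \eqref{eq:high-noise-guidance-decay}.

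The bound is consistent with Corollary~\ref{cor:general-high-noise-energy}, though it is derived directly here. The only mildly delicate step is the positive-definiteness bound \(\Sigma_{R\mid t}\succeq\Gamma\), and the Schur-complement argument above handles it. No other obstacle is expected; the rest is Gaussian bookkeeping.
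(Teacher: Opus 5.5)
Your proposal is correct and follows the paper's proof essentially step for step. Both use joint-Gaussian conditioning of $(X_t,R)$ with covariance blocks $B_t$, $Q$, $a_tA\Sigma$, the trace computation from $\Cov(R-M_tX_t)=\Sigma_{R\mid t}$, and the bound $\Sigma_{R\mid t}\succeq\Gamma$ obtained from $\Cov(X_0\mid X_t)\succeq0$. The only cosmetic difference is that you use $\|B_t^{-1}\|_{\rm op}\le\Delta_t^{-1}$ for every $t$, giving $C_0a_t^2/(\gamma_0\Delta_t^2)$, whereas the paper restricts to $\Delta_t\ge 1/2$ to get the constant $4C_0/\gamma_0$.
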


The formulas follow by conditioning the joint Gaussian pair \((X_t,R)\).  The
identity
\(\Sigma_{R\mid t}=A\Cov(X_0\mid X_t)A^\top+\Gamma\) shows that the residual in
\eqref{eq:linear-gaussian-guidance-force} is weighted by the inverse predictive
covariance, which accounts for both measurement noise and the uncertainty about
\(X_0\) remaining at noise level \(t\).  The factor \(M_t^\top\) then maps this
measurement-space quantity back to signal space.  In fixed dimension
\eqref{eq:linear-gaussian-guidance-force} gives \(g_t=O(a_t)\) for bounded
\(x,r\).  In high dimension the normalized energy is the appropriate
statement; Section~\ref{sec:consistency} further resolves it by row and
singular directions.

\begin{corollary}[Gaussian relative-energy decay]
\label{cor:relative-energy}
Under Proposition~\ref{thm:linear-gaussian-guidance}, the prior score is
\[
s_t^{\rm prior}(x)=-B_t^{-1}x.
\]
Then, under \(X_t\sim p_t\),
\begin{equation}
\label{eq:prior-score-energy}
\frac1d\E\|s_t^{\rm prior}(X_t)\|^2
=\frac1d\operatorname{Tr}\!\left(B_t^{-1}\right)
=\frac1d\sum_{i=1}^d\frac{1}{a_t^2\lambda_i+\Delta_t},
\end{equation}
where \(\{\lambda_i\}\) are the eigenvalues of \(\Sigma\).  Under the schedule
\eqref{eq:forward-marginal}, \(\Delta_t\to1\) as \(a_t\to0\), so this converges
to \(1\) and the prior-score energy is \(\Theta(1)\); the same conclusion holds
for any VP schedule with \(\Delta_t\to\Delta_\infty>0\).  In fixed
dimension, the likelihood-force energy has the leading-order expansion
\begin{equation}
\label{eq:likelihood-energy-leading}
\frac1d\E\|g_t(X_t,R)\|^2
=\frac{a_t^2}{\Delta_t^2}\,\frac1d\operatorname{Tr}\!\left(\Sigma A^\top Q^{-1}A\Sigma\right)
+O(a_t^4),
\end{equation}
and therefore
\begin{equation}
\label{eq:rho-ratio}
\rho_t^{\rm global}
:=
\frac{\E\|g_t(X_t,R)\|^2}
{\E\|s_t^{\rm prior}(X_t)\|^2}
=O(a_t^2).
\end{equation}
In the isotropic case \(\Sigma=\sigma_x^2\Id_d\), \(A=\Id_d\),
\(\Gamma=\sigma_y^2\Id_d\),
\(\rho_t^{\rm global}\sim a_t^2\sigma_x^4/(\sigma_x^2+\sigma_y^2)\).
\end{corollary}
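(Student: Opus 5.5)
The prior-score formula comes first, since everything else rests on it. Under \(X_0\sim\calN(0,\Sigma)\) we have \(X_t\sim\calN(0,B_t)\) with \(B_t=a_t^2\Sigma+\Delta_t\Id_d\). Hence \(s_t^{\rm prior}(x)=-B_t^{-1}x\), and
\[
\E\|s_t^{\rm prior}(X_t)\|^2=\Tr(B_t^{-1}B_tB_t^{-1})=\Tr(B_t^{-1}).
\]
Diagonalizing \(\Sigma\) gives \eqref{eq:prior-score-energy}. As \(a_t\to0\) with \(\Delta_t\to1\) (or \(\to\Delta_\infty>0\)), each summand \(1/(a_t^2\lambda_i+\Delta_t)\) tends to \(1/\Delta_\infty\). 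In fixed dimension this immediately gives \(\Theta(1)\).

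For the force energy, start from \eqref{eq:linear-gaussian-guidance-energy}. In fixed dimension I would expand \(M_t\) and \(\Sigma_{R\mid t}\) in \(a_t\) with \(\Delta_t\) kept as a factor. Write \(B_t^{-1}=\Delta_t^{-1}(\Id_d+a_t^2\Sigma/\Delta_t)^{-1}=\Delta_t^{-1}\Id_d+O(a_t^2)\). Then
\[
M_t=\frac{a_t}{\Delta_t}A\Sigma+O(a_t^3),\qquad \Sigma_{R\mid t}=Q-\frac{a_t^2}{\Delta_t}A\Sigma^2A^\top+O(a_t^4)=Q+O(a_t^2),
\]
so that \(\Sigma_{R\mid t}^{-1}=Q^{-1}+O(a_t^2)\), using \(Q\succeq\Gamma\succ0\). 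Substituting, \(M_t^\top\Sigma_{R\mid t}^{-1}M_t=\frac{a_t^2}{\Delta_t^2}\Sigma A^\top Q^{-1}A\Sigma+O(a_t^4)\), and taking the trace gives \eqref{eq:likelihood-energy-leading}. As a cross-check, the leading coefficient agrees with \(\Tr\Cov(\E[X_0\mid R])\) from \eqref{eq:initial-measurement-information-slope}.

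The ratio bound \eqref{eq:rho-ratio} then follows by dividing: the numerator is \(O(a_t^2)\) and the denominator is bounded below. One can also cite Corollary~\ref{cor:general-relative-energy} directly, since \(\Tr\Cov(X_0)=\Tr\Sigma<\infty\); that makes the statement uniform on high-noise windows without appealing to the fixed-dimension expansion.

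For the isotropic case, with \(\Sigma=\sigma_x^2\Id_d\), \(A=\Id_d\), \(\Gamma=\sigma_y^2\Id_d\), we get \(Q=(\sigma_x^2+\sigma_y^2)\Id_d\). The leading force energy per coordinate is therefore \(a_t^2\sigma_x^4/(\Delta_t^2(\sigma_x^2+\sigma_y^2))\), and the prior-score energy per coordinate is \(1/(a_t^2\sigma_x^2+\Delta_t)\to1\). Since \(\Delta_t\to1\), this gives \(\rho_t^{\rm global}\sim a_t^2\sigma_x^4/(\sigma_x^2+\sigma_y^2)\). Alternatively one can compute exactly with scalar quantities, because everything is a multiple of the identity.

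The main obstacle is only bookkeeping: making the \(O(a_t^4)\) remainder rigorous. The claim is in fixed dimension, so I would argue by analyticity of the matrix inverses in the parameter \(a_t^2\) near \(0\), noting \(\Delta_t=1-a_t^2\) is itself a polynomial in \(a_t^2\). This yields a convergent Neumann expansion with a uniform remainder and avoids any dimension-uniform estimates.
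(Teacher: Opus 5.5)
Your proposal is correct and follows essentially the same route as the paper. The paper uses the Gaussian marginal $\calN(0,B_t)$ to get the score and $\Tr(B_t^{-1})$, then the fixed-dimension expansions $M_t=(a_t/\Delta_t)A\Sigma+O(a_t^3)$ and $\Sigma_{R\mid t}^{-1}=Q^{-1}+O(a_t^2)$ for the leading term, and concludes the ratio bound because the denominator tends to $\Delta_\infty^{-1}$. Your analyticity argument for the remainder and the alternative appeal to Corollary~\ref{cor:general-relative-energy} are valid refinements that the paper does not need.
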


The measurement term is present at every noise level, but at high noise its
energy is small relative to the prior score.  This statement concerns relative
vector-field strength; it does not impose a global monotonicity or a universal
scalar crossover time.

\runinhead{Scope of the energy statements.}
The high-noise bounds in Corollaries~\ref{cor:general-high-noise-energy}
and~\ref{cor:general-relative-energy} average over the joint law of
$(X_t,R)$.  At fixed \(r\), the energy
\(\E_{X_t\mid r}\|g_t(X_t,r)\|^2\) has the exact interpretation
\eqref{eq:fixed-r-kl-dissipation}, while its high-noise coefficient may depend
on \(r\).  The Fisher decomposition in
Lemma~\ref{prop:joint-fisher-pythagoras} is likewise a joint-law
orthogonality and need not hold at fixed \(r\).

When the measurement force is concentrated in a low-dimensional row or
spectral subspace, a full-space normalization can be diluted by prior-score
energy in the remaining directions.  Section~\ref{sec:consistency} therefore
introduces row and direction-wise ratios alongside the global summary.

%% file: sections/solvable_reductions.tex
We now use three solvable models to isolate how conditioning modifies the
unconditional regime picture.  A conditional Gaussian mixture separates class
information supplied by the measurement from class information acquired during
denoising.  A finite empirical posterior gives an exact accounting of the
sample-index uncertainty that remains after measurement.  A linear Gaussian
model resolves the measurement force across operator--prior directions.  Each
model isolates one mechanism rather than approximating a common data
distribution.  Proofs are given in Appendix~A.

\subsection{Residual Class Information in Conditional Gaussian Mixtures}
\label{sec:speciation}
\input{sections/speciation}

\subsection{Empirical Explanation Resolution}
\label{sec:collapse}
\input{sections/collapse}

\subsection{Directional Information Allocation and Operator--Prior Alignment}
\label{sec:consistency}
\input{sections/consistency}

%% file: sections/speciation.tex
The first model asks how conditioning changes class-level speciation.  A
measurement may already favor one class, so raw class commitment no longer
isolates information acquired during denoising.  In a symmetric Gaussian
mixture, the distinction is exact: conditioning induces a static field in the
class weights and replaces the original class separation by the part left
unresolved by the measurement.  Only the latter can generate new class
commitment along the reverse trajectory.  Let
\begin{equation}
\label{eq:binary-class-prior}
C\in\{+1,-1\},\qquad P(C=+1)=P(C=-1)=1/2,
\end{equation}
and
\begin{equation}
\label{eq:binary-gmm-prior}
X_0\mid C=c\sim\calN(cm,\Sigma).
\end{equation}
The measurement is \(R=AX_0+\varepsilon\), with
\(\varepsilon\sim\calN(0,\Gamma)\) and \(\Gamma\succ0\).  Define
\begin{equation}
\label{eq:gmm-QK}
Q=A\Sigma A^\top+\Gamma,
\qquad
K=\Sigma A^\top Q^{-1}.
\end{equation}
Here \(Q\) is the within-class measurement covariance; the marginal covariance
of \(R\) also contains the between-class contribution \(Amm^\top A^\top\).

\begin{proposition}[Posterior Gaussian-mixture geometry]
\label{thm:conditional-gmm}
For \(c\in\{\pm1\}\),
\begin{equation}
\label{eq:gmm-clean-posterior}
X_0\mid r,C=c\sim\calN(Kr+c\delta,S),
\end{equation}
where
\begin{equation}
\label{eq:gmm-posterior-S-delta}
S=\Sigma-\Sigma A^\top Q^{-1}A\Sigma,
\qquad
\delta=(\Id_d-KA)m.
\end{equation}
The posterior class weights are
\begin{equation}
\label{eq:posterior-class-weights}
P(C=\pm1\mid r)
=
\alpha_\pm(r)
=
\frac{e^{\pm \eta(r)}}{2\cosh \eta(r)},
\qquad
\eta(r)=(Am)^\top Q^{-1}r.
\end{equation}
With \(\Lambda_t=a_t^2S+\Delta_t\Id_d\) and \(Z_t=X_t-a_tKr\), write
\(\widetilde p_t(z\mid r)=p_t(z+a_tKr\mid r)\) for the noised posterior density
in the shifted coordinate.  Then
\begin{equation}
\label{eq:gmm-noised-posterior-mixture}
\widetilde p_t(z\mid r)
=
\alpha_+\calN(z;a_t\delta,\Lambda_t)
+
\alpha_-\calN(z;-a_t\delta,\Lambda_t),
\end{equation}
or equivalently
\begin{equation}
\label{eq:gmm-external-field-form}
\widetilde p_t(z\mid r)
\propto
\exp\{-z^\top \Lambda_t^{-1}z/2\}
\cosh\{\eta(r)+a_t\delta^\top \Lambda_t^{-1}z\}.
\end{equation}
\end{proposition}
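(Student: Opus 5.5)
The proof has three parts: condition on the class, update the class weights by Bayes' rule, and then push each class component through the Gaussian forward kernel.

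\emph{Within-class posterior.} Conditional on \(C=c\), the pair \((X_0,R)\) is jointly Gaussian with
\[
X_0\sim\calN(cm,\Sigma),\qquad R\sim\calN(cAm,Q),\qquad \Cov(X_0,R)=\Sigma A^\top.
\]
Standard Gaussian conditioning gives
\[
X_0\mid r,c\sim\calN\bigl(cm+K(r-cAm),\,\Sigma-KA\Sigma\bigr).
\]
The mean rewrites as \(Kr+c(\Id_d-KA)m=Kr+c\delta\). The covariance equals \(S\), and it does not depend on \(c\); this is what makes the shift below common to both components.

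\emph{Class weights.} By Bayes' rule with a uniform class prior,
\[
P(C=c\mid r)\propto\calN(r;cAm,Q).
\]
Expanding the quadratic form, the terms \(r^\top Q^{-1}r\) and \((Am)^\top Q^{-1}Am\) are the same for both signs and cancel on normalization. What remains is \(\exp\{c\,(Am)^\top Q^{-1}r\}=e^{c\eta(r)}\). Dividing by \(e^{\eta}+e^{-\eta}=2\cosh\eta\) gives the stated \(\alpha_\pm(r)\).

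\emph{Noised posterior.} The standing assumption \eqref{eq:indep-assump} says \(R\perp X_t\mid X_0\), and \(C\) is part of the clean-signal description, so the forward noise is independent of \((C,R)\) given \(X_0\). Then
\[
X_t\mid r,c=a_tX_0+\sqrt{\Delta_t}\,\xi
\]
with \(X_0\mid r,c\) Gaussian as above and \(\xi\) independent. Hence
\[
X_t\mid r,c\sim\calN\bigl(a_tKr+ca_t\delta,\;a_t^2S+\Delta_t\Id_d\bigr)=\calN\bigl(a_tKr+ca_t\delta,\;\Lambda_t\bigr).
\]
Mixing over \(c\) with weights \(\alpha_\pm\) and translating by \(a_tKr\) (unit Jacobian) yields \eqref{eq:gmm-noised-posterior-mixture}. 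For \eqref{eq:gmm-external-field-form}, expand each Gaussian exponent:
\[
-\tfrac12 z^\top\Lambda_t^{-1}z\;\pm\;a_t\delta^\top\Lambda_t^{-1}z\;-\;\tfrac12 a_t^2\delta^\top\Lambda_t^{-1}\delta .
\]
The last term and the normalizing constant are common to both components and are absorbed into the proportionality. Multiplying by \(\alpha_\pm\propto e^{\pm\eta}\) and summing gives \(e^{\eta+a_t\delta^\top\Lambda_t^{-1}z}+e^{-(\eta+a_t\delta^\top\Lambda_t^{-1}z)}\), which is proportional to \(\cosh\{\eta+a_t\delta^\top\Lambda_t^{-1}z\}\).

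There is no real obstacle in this argument. The step needing the most care is the invertibility of \(\Lambda_t\) when \(S\) is singular (for example in the noiseless limit, which is excluded here since \(\Gamma\succ0\)). Positive semidefiniteness of \(S\) is all that is needed: the Schur complement \(S\) is PSD, so \(\Lambda_t\succeq\Delta_t\Id_d\succ0\) for \(t>0\), and all Gaussian densities above are well defined.
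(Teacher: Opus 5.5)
Your proposal is correct and follows the paper's proof essentially step for step: within-class Gaussian conditioning with the gain $K$, then Bayes' rule, where the sign-symmetric quadratic terms cancel to leave $e^{\pm\eta(r)}$, then pushing each class component through the OU kernel and expanding the exponents into the $\cosh$ form. Your remark that $S\succeq0$ forces $\Lambda_t\succeq\Delta_t\Id_d\succ0$ is a small well-definedness check that the paper leaves implicit.
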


The shift \(Kr\) moves the posterior center.  The scalar \(\eta(r)\) is the
class evidence already contained in the measurement, while
\(\delta=(\Id_d-KA)m\) is the class separation that remains available to the
noisy state.  Thus \(\eta(r)\) acts as a static external field and \(\delta\)
governs dynamic speciation.  When \(A=0\), \(K=0\), \(\eta=0\),
\(\delta=m\), and \(S=\Sigma\), recovering the unconditional symmetric-mixture
analysis of \citet{biroli2024dynamical}.

\begin{corollary}[Posterior score and dynamic class evidence]
\label{cor:conditional-gmm-score-evidence}
Under the model of Proposition~\ref{thm:conditional-gmm}, the posterior score in
the shifted coordinate is
\begin{equation}
\label{eq:conditional-gmm-score}
\nabla_z\log \widetilde p_t(z\mid r)
=
-\Lambda_t^{-1}z
+
a_t\Lambda_t^{-1}\delta
\tanh\{\eta(r)+a_t\delta^\top \Lambda_t^{-1}z\},
\end{equation}
and the posterior class evidence decomposes as
\begin{equation}
\label{eq:posterior-class-evidence}
\frac12
\log\frac{P(C=+1\mid z,r)}{P(C=-1\mid z,r)}
=
\eta(r)+a_t\delta^\top \Lambda_t^{-1}z.
\end{equation}
\end{corollary}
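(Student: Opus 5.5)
Both claims follow by differentiating the mixture representation \eqref{eq:gmm-noised-posterior-mixture} from Proposition~\ref{thm:conditional-gmm}. The score identity uses the equivalent external-field form \eqref{eq:gmm-external-field-form}; the evidence identity uses Bayes' rule inside the mixture.

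\emph{Score.} Take logarithms in \eqref{eq:gmm-external-field-form}. The normalizing constant depends on \(r\) and \(t\) but not on \(z\), so it drops out:
\[
\log\widetilde p_t(z\mid r)=\mathrm{const}-\tfrac12 z^\top\Lambda_t^{-1}z+\log\cosh\{\eta(r)+a_t\delta^\top\Lambda_t^{-1}z\}.
\]
The gradient of the quadratic term is \(-\Lambda_t^{-1}z\), since \(\Lambda_t\) is symmetric. For the second term, \((\log\cosh)'=\tanh\), and the chain rule gives \(\nabla_z(a_t\delta^\top\Lambda_t^{-1}z)=a_t\Lambda_t^{-1}\delta\). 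Together these give \eqref{eq:conditional-gmm-score}.

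The step needing care is the passage from \eqref{eq:gmm-noised-posterior-mixture} to \eqref{eq:gmm-external-field-form}, because the evidence identity also depends on it. Expand
\[
\alpha_\pm\calN(z;\pm a_t\delta,\Lambda_t)\propto\exp\Bigl\{\pm\eta-\tfrac12 z^\top\Lambda_t^{-1}z\pm a_t\delta^\top\Lambda_t^{-1}z-\tfrac12a_t^2\delta^\top\Lambda_t^{-1}\delta\Bigr\}.
\]
Here the common factor \(1/(2\cosh\eta)\) is absorbed into the constant. The last term is the same for both classes, so it is also absorbed into the constant. Summing the two components then yields the \(\cosh\) form.

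\emph{Evidence.} In the shifted coordinate, \(C\) is the mixture label. The noised law given \((r,C=c)\) is the component \(\calN(z;ca_t\delta,\Lambda_t)\): this follows from \eqref{eq:gmm-clean-posterior}, pushed forward by \(X_t=a_tX_0+\sqrt{\Delta_t}\xi\) and shifted by \(a_tKr\). By Bayes' rule, \(P(C=c\mid z,r)\) is therefore proportional to \(\alpha_c\calN(z;ca_t\delta,\Lambda_t)\).

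From the expansion above, the ratio of the \(c=+1\) and \(c=-1\) terms equals \(\exp\{2\eta(r)+2a_t\delta^\top\Lambda_t^{-1}z\}\), because the shared quadratic and constant terms cancel. Taking half the logarithm gives \eqref{eq:posterior-class-evidence}. Conditioning on \(Z_t=z\) is the same as conditioning on \(X_t\), since \(r\) is fixed and the shift is deterministic.

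As a consistency check, the score equals \(-\Lambda_t^{-1}z+a_t\Lambda_t^{-1}\delta\,(\alpha_+(z)-\alpha_-(z))\), where \(\alpha_\pm(z)=P(C=\pm1\mid z,r)\). Since \(\alpha_+(z)-\alpha_-(z)=\tanh\) of the half log-odds, the two statements agree.
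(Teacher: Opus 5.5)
Your proposal is correct and follows essentially the same route as the paper: differentiate the logarithm of the $\cosh$ external-field form to obtain the score, and take the Bayes ratio of the two weighted mixture components, where the weights contribute $2\eta(r)$ and the Gaussian ratio contributes $2a_t\delta^\top\Lambda_t^{-1}z$. Your re-derivation of the $\cosh$ form (which the paper carries out in its proof of the noised posterior law) and your $\tanh$ consistency check are correct additions beyond what the paper's proof of this corollary states.
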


The log-odds thus split into a static part and a part created by the current
noisy state: raw class commitment mixes measurement-supplied information with
information revealed dynamically by denoising.

\begin{remark}[Shared-covariance multiclass extension]
The same re-referencing holds beyond two classes.  If
$C\in\{1,\ldots,K_c\}$, $P(C=c)=\pi_c$, and
$X_0\mid C=c\sim\calN(m_c,\Sigma)$ with a shared covariance, then the same
$K$ and $S$ give
\[
X_0\mid r,C=c\sim
\calN\!\left(Kr+(\Id_d-KA)m_c,S\right),
\]
while
\[
P(C=c\mid r)\propto
\pi_c\exp\!\left\{-\tfrac12(r-Am_c)^\top Q^{-1}(r-Am_c)\right\}.
\]
Every pairwise class separation is therefore replaced by
$(\Id_d-KA)(m_c-m_{c'})$, and the noised posterior score averages the residual
means with softmax weights.  The binary symmetric case is special because it
reduces to one coordinate and admits a pitchfork reference.  With
class-dependent covariances, the gains and residual covariances also depend on
the class, and neither the common residual map nor a single
\(\kappa_t\)-threshold remains.
\end{remark}

\begin{proposition}[Baseline-corrected posterior cloning]
\label{prop:posterior-speciation}
Define the conditional cloning observable
\begin{equation}
\label{eq:conditional-cloning}
\phi_r(t)
=
\E_{Z_t\mid r}
\left[
\sum_{c\in\{\pm1\}}P(C=c\mid Z_t,r)^2
\right].
\end{equation}
Then
\begin{equation}
\label{eq:cloning-baseline}
\phi_r(\infty)
=
\alpha_+^2+
\alpha_-^2
=
\sum_cP(C=c\mid r)^2.
\end{equation}
Therefore the dynamic class-speciation signal is
\begin{equation}
\label{eq:baseline-corrected-cloning}
\Delta\phi_r(t)=\phi_r(t)-\phi_r(\infty),
\end{equation}
or, when \(\phi_r(\infty)<1\),
\begin{equation}
\label{eq:normalized-excess-cloning}
\widetilde\phi_r(t)
=
\frac{\phi_r(t)-\phi_r(\infty)}
{1-\phi_r(\infty)}.
\end{equation}
\end{proposition}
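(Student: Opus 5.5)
The content of Proposition~\ref{prop:posterior-speciation} is the baseline identity \eqref{eq:cloning-baseline}. The remaining displays \eqref{eq:baseline-corrected-cloning} and \eqref{eq:normalized-excess-cloning} are definitions; the normalization only needs \(\phi_r(\infty)<1\), which holds whenever \(\eta(r)\) is finite, since then \(\alpha_\pm\in(0,1)\). The plan is to write the state-dependent posterior class probabilities in closed form using Corollary~\ref{cor:conditional-gmm-score-evidence}, pass to the limit \(t\to\infty\), and justify exchanging the limit with the expectation over \(Z_t\mid r\).

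First, by \eqref{eq:posterior-class-evidence},
\[
P(C=\pm1\mid Z_t=z,r)=\frac{e^{\pm h_t(z)}}{2\cosh h_t(z)},
\qquad
h_t(z)=\eta(r)+a_t\delta^\top\Lambda_t^{-1}z .
\]
Hence \(\sum_c P(C=c\mid z,r)^2=\tfrac12\{1+\tanh^2 h_t(z)\}\), which is a bounded continuous function of \(h_t\). By the same computation, \(\alpha_+^2+\alpha_-^2=\tfrac12\{1+\tanh^2\eta(r)\}\). It therefore suffices to show that \(h_t(Z_t)\to\eta(r)\) in probability under \(Z_t\mid r\) as \(t\to\infty\).

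Second, I establish this convergence using the mixture representation \eqref{eq:gmm-noised-posterior-mixture}. Under \(Z_t\mid r\) we have \(Z_t=\pm a_t\delta+\Lambda_t^{1/2}\xi\), so
\[
h_t(Z_t)-\eta(r)=a_t\delta^\top\Lambda_t^{-1}Z_t .
\]
Its conditional mean is bounded by \(a_t^2\|\delta\|^2\|\Lambda_t^{-1}\|\), and its variance equals \(a_t^2\delta^\top\Lambda_t^{-1}\delta\). Since \(\Lambda_t=a_t^2S+\Delta_t\Id_d\to\Id_d\) as \(a_t=e^{-t}\to0\), both quantities are \(O(a_t^2)\to0\). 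Then bounded convergence applied to \(\tfrac12\{1+\tanh^2 h_t\}\) gives \(\phi_r(t)\to\tfrac12\{1+\tanh^2\eta(r)\}=\sum_c P(C=c\mid r)^2\). Equivalently, this is the tower-property fact that the \(Z_t\)-conditional class probabilities revert to \(P(C=c\mid r)\) once \(Z_t\) becomes independent of \(C\) given \(r\).

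The only step needing care is interpreting \(\phi_r(\infty)\) as a limit and exchanging it with the expectation. Boundedness of the integrand by \(1\) handles the exchange, and the explicit Gaussian moments handle the convergence, so I do not expect any real obstacle. I would also note, though it is not needed for the proof, that \(\phi_r(t)\ge\phi_r(\infty)\) for every \(t\). This follows from Jensen's inequality: \(\E[P(C=c\mid Z_t,r)\mid r]=\alpha_c\), so the expected sum of squares is at least \(\sum_c\alpha_c^2\). It justifies reading \(\Delta\phi_r\ge0\) as the dynamically acquired excess class commitment.
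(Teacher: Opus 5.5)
Your proposal is correct and follows the paper's argument. As $a_t\to0$ and $\Lambda_t\to\Id_d$, the two noised components merge, so $P(C=c\mid Z_t,r)\to\alpha_c$ and $\phi_r(t)\to\sum_c\alpha_c^2$. Your version is more careful than the paper's: you show $h_t(Z_t)-\eta(r)=a_t\delta^\top\Lambda_t^{-1}Z_t\to0$ in $L^2$ and justify exchanging the limit and expectation by bounded convergence. The variance you quote, $\kappa_t$, is the within-component variance; the mixture adds an $O(\kappa_t^2)$ term, which is harmless.
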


By the Markov property of the exact reverse dynamics, \(\phi_r(t)\) is the
probability that two independent exact-reverse continuations launched from
the same state commit to the same class---the conditional analogue of the
cloning experiment of \citet{biroli2024dynamical}, with baseline
\(\alpha_+^2+\alpha_-^2\) rather than the uninformed \(1/2\).

\begin{corollary}[One-dimensional reduction and zero-field reference]
\label{cor:posterior-speciation-coordinate}
Under the model of Proposition~\ref{thm:conditional-gmm}, with \(\phi_r\) as in
\eqref{eq:conditional-cloning}, let
\begin{equation}
\label{eq:posterior-speciation-strength}
\kappa_t=a_t^2\delta^\top \Lambda_t^{-1}\delta,
\qquad
G_\pm(s)=\calN(s;\pm\sqrt{\kappa_t},1).
\end{equation}
Then
\begin{equation}
\label{eq:one-dimensional-cloning-integral}
\phi_r(t)
=
\int_{\R}
\frac{\alpha_+^2G_+(s)^2+\alpha_-^2G_-(s)^2}
{\alpha_+G_+(s)+\alpha_-G_-(s)}\,\dd s.
\end{equation}
In the zero-field reference case \(\eta(r)=0\), the origin \(z=0\) is a local
maximum of \(\log \widetilde p_t(z\mid r)\) if and only if
\(\kappa_t\le1\); for
\(\kappa_t>1\) it loses local maximality, becoming a local minimum in the
one-dimensional reduction and a saddle when \(d>1\).  Since
\(\kappa_t\) is nondecreasing along the reverse trajectory, and strictly
increasing whenever \(\delta\neq0\), the double-well landscape forms as
\(\kappa_t\) first crosses the threshold
\begin{equation}
\label{eq:posterior-speciation-threshold}
\kappa_t=a_t^2\delta^\top \Lambda_t^{-1}\delta=1.
\end{equation}
\end{corollary}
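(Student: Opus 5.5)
The plan is to reduce everything to a single scalar coordinate along the residual separation direction, using the shifted mixture representation \eqref{eq:gmm-noised-posterior-mixture} from Proposition~\ref{thm:conditional-gmm} and the class-evidence formula \eqref{eq:posterior-class-evidence} from Corollary~\ref{cor:conditional-gmm-score-evidence}.

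\emph{Cloning integral.} For \(\delta\neq0\), set \(u=\Lambda_t^{-1/2}\delta/\|\Lambda_t^{-1/2}\delta\|\) and \(s=u^\top\Lambda_t^{-1/2}Z_t\). Under class \(c\), \(\Lambda_t^{-1/2}Z_t\sim\calN(ca_t\Lambda_t^{-1/2}\delta,\Id_d)\), so \(s\sim\calN(c\sqrt{\kappa_t},1)=G_c\), since \(\|a_t\Lambda_t^{-1/2}\delta\|^2=\kappa_t\). The orthogonal components of \(\Lambda_t^{-1/2}Z_t\) are standard Gaussian and do not depend on the class. Hence the class posterior depends on \(Z_t\) only through \(s\):
\[
P(C=c\mid Z_t,r)=\frac{\alpha_cG_c(s)}{\alpha_+G_+(s)+\alpha_-G_-(s)}.
\]
This is consistent with \eqref{eq:posterior-class-evidence}, because \(a_t\delta^\top\Lambda_t^{-1}z=\sqrt{\kappa_t}\,s\). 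The marginal density of \(s\) is \(\alpha_+G_++\alpha_-G_-\). Squaring the posterior and integrating against this marginal gives \eqref{eq:one-dimensional-cloning-integral}. The case \(\delta=0\) is trivial: then \(\kappa_t=0\), \(G_+=G_-\), and the formula returns \(\alpha_+^2+\alpha_-^2\).

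\emph{Local maximality at the origin.} Take \(\eta=0\) and differentiate \eqref{eq:conditional-gmm-score} at \(z=0\), using \(\tanh(0)=0\) and \(\tanh'(0)=1\). This gives the Hessian
\[
H=-\Lambda_t^{-1}+a_t^2\Lambda_t^{-1}\delta\delta^\top\Lambda_t^{-1},
\]
and the origin is a critical point. Conjugating by \(\Lambda_t^{1/2}\) shows that \(H\) is congruent to \(-\Id_d+a_t^2vv^\top\) with \(v=\Lambda_t^{-1/2}\delta\). That matrix has eigenvalue \(\kappa_t-1\) along \(v\) and \(-1\) on \(v^\perp\).

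\begin{itemize}
\item If \(\kappa_t<1\), \(H\) is negative definite, so the origin is a strict local maximum.
\item If \(\kappa_t>1\), the direction \(v\) has a positive eigenvalue. The origin is then a local minimum of the one-dimensional reduction and a saddle when \(d>1\), because the \(d-1\) orthogonal directions still curve downward.
\item The borderline case \(\kappa_t=1\) is the step I expect to be the main obstacle, since the Hessian is degenerate. There I would restrict to the line \(z=\tau\Lambda_t^{1/2}v/\|v\|\). Along it the log-density reduces to \(-\tfrac{y^2}{2}+\log\cosh(\sqrt{\kappa_t}\,y)\) in the scalar coordinate \(y\), whose fourth-order term is \(-\kappa_t^2y^4/12<0\). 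Orthogonal directions keep strictly negative curvature and decouple exactly in whitened coordinates, because the \(\cosh\) term depends only on \(s\). So the origin is still a strict local maximum at \(\kappa_t=1\).
\end{itemize}

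\emph{Monotonicity of \(\kappa_t\).} Diagonalize \(S=\sum_i\sigma_ie_ie^\top_i\) with \(\sigma_i\ge0\). Then
\[
\kappa_t=\sum_i\frac{a_t^2(e_i^\top\delta)^2}{a_t^2\sigma_i+1-a_t^2}.
\]
Each summand is increasing in \(a_t^2\) (the map \(x\mapsto x/(x\sigma+1-x)\) has derivative \(1/(\cdot)^2>0\)), and strictly increasing when \(e_i^\top\delta\neq0\). Since \(a_t\) increases along the reverse trajectory, \(\kappa_t\) is nondecreasing there and strictly increasing when \(\delta\neq0\). It runs continuously from \(0\) as \(t\to\infty\), so the double-well first appears exactly at the threshold crossing \eqref{eq:posterior-speciation-threshold}.
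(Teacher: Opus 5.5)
Your proposal is correct and follows essentially the same route as the paper. Both arguments whiten and project onto the residual-separation direction for the cloning integral, use the rank-one congruence \(-\Id_d+a_t^2vv^\top\) for the Hessian at the origin, settle \(\kappa_t=1\) via the quartic term in the separable whitened coordinates, and diagonalize \(S\) to differentiate \(\kappa_t\) in \(a_t^2\). The only difference is minor: you check the \(\delta=0\) case of the cloning integral directly rather than by continuity, which is slightly cleaner.
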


The level \(\kappa_t=1\) describes a local change in the posterior
log-density, and hence in the exact reverse drift, rather than a transition of
every stochastic path.  It is specific to the symmetric zero-field reference.
When \(\eta(r)\ne0\) and the residual separation is nonzero, the origin is no
longer stationary and the pitchfork is replaced by a biased crossover;
\(\Delta\phi_r\) or
\(\widetilde\phi_r\) then measures class information acquired beyond the
measurement baseline.  The model proves monotonicity of \(\kappa_t\), but makes
no corresponding claim for the baseline-corrected cloning observable at
nonzero field.  When \(A=0\), the criterion reduces to the unconditional one
with \(\delta=m\) and \(S=\Sigma\).

The residual nature of speciation is especially transparent in singular
coordinates.  If \(\Sigma=\sigma_x^2\Id_d\), \(\Gamma=\sigma_y^2\Id_{d_y}\), and
\(A=U\diag(s_i)V^\top\), write \(m_i=v_i^\top m\) and
\(\delta_i=v_i^\top\delta\).  Then
\begin{equation}
\label{eq:residual-coordinate-shrinkage}
\delta_i=\frac{\sigma_y^2}{\sigma_y^2+\sigma_x^2s_i^2}m_i.
\end{equation}
When \(s_i^2\sigma_x^2\gg\sigma_y^2\), the residual \(\delta_i\) is strongly
attenuated.  Null directions satisfy \(s_i=0\) and retain
\(\delta_i=m_i\).  Conditioning therefore reallocates class-level speciation
toward directions that the measurement has not resolved.

%% file: sections/collapse.tex
The second model concerns resolution of individual explanations in a finite
empirical support.  Here ``collapse'' retains the meaning used in the
unconditional regime theory: the noisy state progressively identifies a
support index.  Conditioning changes the candidate set and its entropy.  Let
\begin{equation}
\label{eq:collapse-empirical-prior}
P_0^{\rm emp}=\frac1n\sum_{i=1}^n\delta_{x_i},
\end{equation}
and let \(I\) denote the support index.  Under the linear Gaussian likelihood
of Section~\ref{sec:background}, the posterior weights at a fixed measurement
\(r\) are
\begin{equation}
\label{eq:empirical-posterior-weights}
w_i(r)=\Prob(I=i\mid r)
=
\frac{
\exp\{-\frac12\|Ax_i-r\|_{\Gamma^{-1}}^2\}
}{
\sum_j \exp\{-\frac12\|Ax_j-r\|_{\Gamma^{-1}}^2\}
}.
\end{equation}
For \(\Gamma=\sigma_y^2\Id_{d_y}\), this reduces to
\begin{equation}
\label{eq:isotropic-empirical-weights}
w_i(r)\propto
\exp\left\{-\frac{\|Ax_i-r\|^2}{2\sigma_y^2}\right\}.
\end{equation}
For \(t>0\), noising the posterior empirical distribution gives
\begin{equation}
\label{eq:noised-empirical-posterior}
p_t^{\rm emp}(x\mid r)
=
\sum_{i=1}^n w_i(r)\calN(x;a_tx_i,\Delta_t\Id_d).
\end{equation}
Nonuniform empirical priors are handled by multiplying the likelihood factors
in \eqref{eq:empirical-posterior-weights} by the prior index probabilities.

\begin{proposition}[Posterior index-information accounting]
\label{thm:posterior-collapse}
For \(t>0\), the posterior responsibility of index \(i\) at a noisy state
\(X_t=x\) is
\begin{equation}
\label{eq:posterior-index-responsibility}
\Prob(I=i\mid x,r)
=
\frac{
w_i(r)\exp\{-\|x-a_tx_i\|^2/(2\Delta_t)\}
}{
\sum_j w_j(r)\exp\{-\|x-a_tx_j\|^2/(2\Delta_t)\}
}.
\end{equation}
Moreover,
\begin{equation}
\label{eq:posterior-collapse-entropy-identity}
h(X_t\mid r)
=
\frac d2\log(2\pi e\Delta_t)
+
H(I\mid r)
-
H(I\mid X_t,r).
\end{equation}
\end{proposition}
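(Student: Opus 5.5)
The plan is to treat the noised empirical posterior as a latent-variable model with index \(I\) and to apply Bayes' rule and the entropy chain rule. Fix \(r\) with \(0<p_R(r)<\infty\). Under the standing assumption \eqref{eq:indep-assump}, conditionally on \(R=r\) the pair \((I,X_t)\) has joint law \(\Prob(I=i\mid r)\,q_t(x\mid x_i)=w_i(r)\calN(x;a_tx_i,\Delta_t\Id_d)\). The reason is that \(X_0=x_I\), the forward noise is independent of \((I,R)\), and the likelihood depends on \(I\) only through \(x_I\). Summing over \(i\) recovers \eqref{eq:noised-empirical-posterior}, so this density is strictly positive and smooth for \(t>0\).

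\emph{Responsibility formula.} Bayes' rule in \(i\) at fixed \((x,r)\) gives \(\Prob(I=i\mid x,r)=w_i(r)q_t(x\mid x_i)/\sum_jw_j(r)q_t(x\mid x_j)\). The normalizing factor \((2\pi\Delta_t)^{-d/2}\) of the kernel \eqref{eq:forward-kernel} is common to all terms and cancels, which yields \eqref{eq:posterior-index-responsibility}.

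\emph{Entropy identity.} We compute the mixed discrete--continuous joint entropy \(h(I,X_t\mid r)\) in two ways, using the chain rule with respect to counting measure times Lebesgue measure.
\begin{itemize}
\item Conditioning on \(I\) first gives \(h(I,X_t\mid r)=H(I\mid r)+h(X_t\mid I,r)\). Given \(I=i\) and \(r\), \(X_t\) is exactly \(\calN(a_tx_i,\Delta_t\Id_d)\), whose differential entropy is \(\frac d2\log(2\pi e\Delta_t)\) independently of \(i\). Hence \(h(X_t\mid I,r)=\frac d2\log(2\pi e\Delta_t)\).
\item Conditioning on \(X_t\) first gives \(h(I,X_t\mid r)=h(X_t\mid r)+H(I\mid X_t,r)\).
\end{itemize}
Equating the two expansions gives \eqref{eq:posterior-collapse-entropy-identity}. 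Equivalently, \(\MI(I;X_t\mid R=r)=h(X_t\mid r)-h(X_t\mid I,r)=H(I\mid r)-H(I\mid X_t,r)\), which is the form I would write to make the bookkeeping transparent.

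\emph{Main obstacle.} The only delicate point is justifying these manipulations, i.e.\ showing that every quantity is finite so that no \(\infty-\infty\) arises. The terms \(H(I\mid r)\le\log n\) and \(0\le H(I\mid X_t,r)\le\log n\) are trivially finite. For \(h(X_t\mid r)\), the mixture density is bounded above by \((2\pi\Delta_t)^{-d/2}\). It also has finite second moment, so its differential entropy is finite, being bounded above by the Gaussian entropy at the same covariance. A cleaner route is to write \(\log p_t^{\rm emp}(x\mid r)=\log q_t(x\mid x_i)+\log w_i(r)-\log\Prob(I=i\mid x,r)\) pointwise and take expectations under the joint law. Each term is then integrable, since the first is quadratic in \(x\) under Gaussian tails and the other two are bounded by the finite support once we average over \(i\). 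This produces the identity directly and simultaneously certifies the finiteness.
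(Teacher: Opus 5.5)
Your proposal is correct and follows the paper's proof essentially verbatim: Bayes' rule in the index, with the common Gaussian normalizer cancelling, followed by expanding the mixed entropy \(h(I,X_t\mid r)\) two ways, where \(h(X_t\mid I,r)=\frac d2\log(2\pi e\Delta_t)\) does not depend on \(i\). Your finiteness check is a correct addition that the paper leaves implicit. Either version works: the density bound \((2\pi\Delta_t)^{-d/2}\) together with a finite second moment, or taking expectations of the pointwise Bayes decomposition of \(\log p_t^{\rm emp}(x\mid r)\).
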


The identity is stated for \(t>0\) because the empirical posterior is discrete
at \(t=0\).  If the support points are pairwise distinct, then
\(H(I\mid X_t,r)\to0\) as \(t\to0\); duplicated points leave index ambiguity
that the reverse dynamics cannot resolve.  Averaging the pointwise entropy
\(H(I\mid X_t=x,r)\) over draws from the noised posterior gives an unbiased
Monte Carlo estimate of \(H(I\mid X_t,r)\).  This notion of collapse is
sample-index decoding among measurement-compatible points; it does not define
a continuous-space MAP estimator or neural mode collapse.

Without conditioning, a uniform empirical prior has index entropy \(\log n\).
After observing \(r\), the residual budget is
\[
H(I\mid r)=-\sum_i w_i(r)\log w_i(r)\le \log n.
\]
For a fixed measurement,
\begin{equation}
\label{eq:pointwise-index-information-gain}
\log n-H(I\mid r)
=
\KL\big(P(I\mid r)\,\|\,\mathrm{Unif}_n\big).
\end{equation}
Corollary~\ref{cor:integrated-head-start} gives this static head start an exact
dynamical representation:
\begin{equation}
\label{eq:collapse-head-start-force-budget}
\log n-H(I\mid r)
=
\int_0^\infty
\E_{X_t\mid r}\|g_t(X_t,r)\|^2\,\dd t.
\end{equation}
Thus the measurement-induced reduction of the empirical index budget is also
the total fixed-$r$ posterior--prior force-energy budget along the noising
path.  Averaging over the random measurement gives
\begin{equation}
\label{eq:average-index-information-gain}
H(I\mid R)=\log n-\MI(I;R).
\end{equation}
Adding the information contributed by the current noisy state yields
\begin{equation}
\label{eq:two-stage-index-information}
H(I\mid R,X_t)
=
\log n-\MI(I;R)-\MI(I;X_t\mid R).
\end{equation}
Hence the budget is determined by the entropy of the compatible explanation
set: informative measurements leave few plausible indices, weak measurements
leave a budget near \(\log n\), and at \(A=0\) the unconditional budget of
\citet{biroli2024dynamical} is recovered.

Define the cumulative posterior-resolution information at a fixed measurement,
\begin{equation}
\label{eq:index-information-acquired}
\mathcal J_t(r)
:=
\MI(I;X_t\mid R=r)
=
H(I\mid r)-H(I\mid X_t,r).
\end{equation}
This finite-\(n\) chain-rule quantity is cumulative rather than instantaneous.
The exact decomposition
\[
\log n-H(I\mid X_t,r)
=
\big[\log n-H(I\mid r)\big]+\mathcal J_t(r)
\]
separates the pointwise measurement head start from the explanation resolution
supplied by the noisy state.  For distinct support points,
\(\mathcal J_t(r)\to H(I\mid r)\) at the clean endpoint.  This process is
distinct from the measurement-assimilation information
\(\mathcal M_\gamma=\MI(R;Y_\gamma)\) of
Proposition~\ref{prop:force-information-rate}; their exact relation after
averaging over \(R\) is given in Section~\ref{sec:regime-map-synthesis}.

We next use a Gaussian conditional-information surrogate to obtain a
descriptive time scale for this finite-support process.

\begin{remark}[Gaussian matching scale for conditional index resolution]
\label{cor:collapse-time}
Suppose that, before empirical components separate, the clean posterior is
approximated by
\begin{equation}
\label{eq:continuous-posterior-approx}
X_0\mid r\approx\calN(\mu_r,S_r).
\end{equation}
Define the Gaussian conditional-information surrogate
\begin{equation}
\label{eq:gaussian-conditional-information}
\mathcal I_t^{\rm G}(r)
:=
\MI_{\rm G}(X_0;X_t\mid R=r)
=
\frac12\log\det
\left(\Id_d+\frac{a_t^2}{\Delta_t}S_r\right).
\end{equation}
For \(q\in(0,1]\), define the fractional Gaussian resolution time
\(t_q^{\rm G}(r)\), when it exists, by
\begin{equation}
\label{eq:fractional-gaussian-resolution-time}
\mathcal I_{t_q^{\rm G}}^{\rm G}(r)=qH(I\mid r).
\end{equation}
If \(S_r=s_r\Id_d\), \(s_r>0\), and
\(\alpha_r=d^{-1}H(I\mid r)>0\), then
\begin{equation}
\label{eq:isotropic-fractional-resolution-time}
t_q^{\rm G}(r)
=
\frac12\log
\left[
1+\frac{s_r}{e^{2q\alpha_r}-1}
\right].
\end{equation}
The full-budget matching scale is \(t_C(r):=t_1^{\rm G}(r)\).
\end{remark}

The estimate does not follow from Proposition~\ref{thm:posterior-collapse}: it
replaces a finite weighted codebook by a Gaussian population channel and
identifies resolution scales rather than a universal sharp transition.
The likelihood-width study in Appendix~\ref{sec:app-exact-details}
evaluates the fractional case
\(q=1/2\); the full-budget case \(q=1\) is more sensitive to the
small-eigenvalue tail of \(S_r\).  If \(H(I\mid r)=0\), the measurement has
already identified the index.  More generally, conditioning changes both the
budget and \(S_r\), so budget reduction alone does not determine how the
matching scale shifts.

%% file: sections/consistency.tex
The third model asks how measurement influence is distributed across the
reverse vector field.  The preceding class and index quantities describe what
uncertainty remains; force-to-prior ratios describe the directions in which
the measurement-induced field becomes appreciable.  For a low-rank operator,
unobserved directions can dominate a global prior-score norm, making row-space
and direction-wise ratios more informative than a single aggregate.

\runinhead{Global and fixed-measurement ratios.}
The joint-law aggregate ratio is
\begin{equation}
\label{eq:global-rho-consistency}
\rho_t^{\rm global}
=
\frac{\E\|g_t(X_t,R)\|^2}
{\E\|s_t^{\rm prior}(X_t)\|^2},
\qquad
s_t^{\rm prior}(x)=\nabla_x\log p_t(x).
\end{equation}
Corollary~\ref{cor:general-relative-energy} gives
\(\rho_t^{\rm global}=O(a_t^2)\) at high noise under the same
per-coordinate second-moment condition as the force-energy bound.  For a realized measurement we use, whenever the
denominator is positive,
\begin{equation}
\label{eq:fixed-r-rho}
\rho_t^{\rm global}(r)
=
\frac{\E_{X_t\mid r}\|g_t(X_t,r)\|^2}
{\E_{X_t\mid r}\|s_t^{\rm prior}(X_t)\|^2},
\end{equation}
a trajectory diagnostic rather than a pointwise consequence of the joint-law
bound.

\runinhead{Row, null, and singular directions.}
With \(P_A=A^\dagger A\) and \(P_A^\perp=\Id_d-P_A\), define, whenever the
corresponding denominator is positive,
\begin{equation}
\label{eq:obs-rho}
\rho_t^{\rm row}(r)
=
\frac{\E_{X_t\mid r}\|P_Ag_t\|^2}
{\E_{X_t\mid r}\|P_As_t^{\rm prior}\|^2},
\qquad
\rho_t^{\rm null}(r)
=
\frac{\E_{X_t\mid r}\|P_A^\perp g_t\|^2}
{\E_{X_t\mid r}\|P_A^\perp s_t^{\rm prior}\|^2}.
\end{equation}
For anisotropic operators the direction-wise form is more informative.  Let
\(v_i\) be a right singular vector of \(A\), with singular value \(s_i\), and
write
\[
g_{t,i}(x,r)=v_i^\top g_t(x,r),
\qquad
s_{t,i}^{\rm prior}(x)=v_i^\top s_t^{\rm prior}(x).
\]
Then
\begin{equation}
\label{eq:spectral-rho}
\rho_i(t,r)
=
\frac{\E_{X_t\mid r}|g_{t,i}(X_t,r)|^2}
{\E_{X_t\mid r}|s_{t,i}^{\rm prior}(X_t)|^2},
\qquad
\rho_t^{\rm row}(r)
=
\frac{\sum_{i:s_i>0}\E_{X_t\mid r}|g_{t,i}|^2}
{\sum_{i:s_i>0}\E_{X_t\mid r}|s_{t,i}^{\rm prior}|^2},
\end{equation}
again when the denominators are positive.  These are continuous measures of
relative vector-field strength, not intrinsic phase boundaries.  When the
expectation is instead over the joint law of \((X_t,R)\), we write
\begin{equation}
\label{eq:joint-spectral-rho}
\rho_i(t)
=
\frac{\E|g_{t,i}(X_t,R)|^2}
{\E|s_{t,i}^{\rm prior}(X_t)|^2},
\qquad
\rho_t^{\rm row}
=
\frac{\sum_{i:s_i>0}\E|g_{t,i}(X_t,R)|^2}
{\sum_{i:s_i>0}\E|s_{t,i}^{\rm prior}(X_t)|^2}.
\end{equation}

\begin{theorem}[Aligned anisotropic Gaussian allocation]
\label{prop:spectral-consistency}
Let $X_0\sim\calN(0,\Sigma)$, where
\[
\Sigma=V\diag(\lambda_1,\ldots,\lambda_d)V^\top,
\qquad \lambda_i>0.
\]
Let $A\in\R^{d_y\times d}$ have full row rank and suppose the measurement
is aligned with the prior eigenbasis,
\[
A=U\begin{bmatrix}\diag(s_1,\ldots,s_{d_y})&0\end{bmatrix}V^\top,
\qquad
\Gamma=\sigma_y^2\Id_{d_y},
\]
with $s_i=0$ for $i>d_y$.  Define
\begin{equation}
\label{eq:anisotropic-Bv}
B_{i,t}=a_t^2\lambda_i+\Delta_t,
\qquad
v_{i,t}=\frac{\lambda_i\Delta_t}{B_{i,t}},
\qquad
\tau_i^2=\left(\lambda_i^{-1}+s_i^2/\sigma_y^2\right)^{-1},
\qquad
\widetilde B_{i,t}=a_t^2\tau_i^2+\Delta_t,
\end{equation}
where $\tau_i^2=\lambda_i$ when $s_i=0$.  Under the joint law of $(X_t,R)$,
\begin{equation}
\label{eq:anisotropic-spectral-energies}
\E|s_{t,i}^{\rm prior}(X_t)|^2=\frac1{B_{i,t}},
\qquad
\E|g_{t,i}(X_t,R)|^2
=
\frac{a_t^2s_i^2\lambda_i^2}
{B_{i,t}^2(s_i^2v_{i,t}+\sigma_y^2)},
\end{equation}
and
\begin{equation}
\label{eq:anisotropic-rho}
\rho_i(t)
=
\frac{a_t^2s_i^2\lambda_i^2}
{B_{i,t}(s_i^2v_{i,t}+\sigma_y^2)}
=
\frac{B_{i,t}}{\widetilde B_{i,t}}-1.
\end{equation}
Consequently:
\begin{enumerate}
\item As $a_t\to0$,
\[
\rho_i(t)=
\frac{a_t^2s_i^2\lambda_i^2}{s_i^2\lambda_i+\sigma_y^2}+O(a_t^4).
\]
\item For $s_i>0$, $\rho_i(t)$ is strictly increasing along the reverse
trajectory, with low-noise limit $s_i^2\lambda_i/\sigma_y^2$, and
\begin{equation}
\label{eq:variance-halving-reference}
\rho_i(t)=1\quad\Longleftrightarrow\quad
\widetilde B_{i,t}=\frac12B_{i,t}.
\end{equation}
Thus, within this aligned linear-Gaussian model, $\rho_i=1$ is a
variance-halving, equal-Fisher-energy reference level.  It is crossed at a
unique interior noise level iff $s_i^2\lambda_i>\sigma_y^2$; when
$s_i^2\lambda_i=\sigma_y^2$, it is attained only at the clean endpoint.
\item Since the exact force is null on unmeasured aligned coordinates,
\begin{equation}
\label{eq:anisotropic-weighted-dilution}
\rho_t^{\rm global}=\omega_t\rho_t^{\rm row},
\qquad
\omega_t=
\frac{\sum_{i:s_i>0}B_{i,t}^{-1}}{\sum_{i=1}^dB_{i,t}^{-1}}.
\end{equation}
For an isotropic prior, $\omega_t=d_y/d$.
\end{enumerate}
\end{theorem}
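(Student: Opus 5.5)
The plan is to rotate into the prior eigenbasis and reduce everything to independent scalar channels. Set \(\tilde x=V^\top x\) and \(\tilde r=U^\top r\). Because \(V\) is orthogonal and \(\Gamma=\sigma_y^2\Id_{d_y}\) is rotation invariant, coordinate \(i\) of the rotated model is
\[
\tilde X_{0,i}\sim\calN(0,\lambda_i),\qquad \tilde R_i=s_i\tilde X_{0,i}+\tilde\varepsilon_i\ (i\le d_y),\qquad \tilde X_{t,i}=a_t\tilde X_{0,i}+\sqrt{\Delta_t}\,\xi_i .
\]
These triples are mutually independent across \(i\). The force and score components \(g_{t,i}=v_i^\top g_t\) and \(s^{\rm prior}_{t,i}\) are exactly the scores of this rotated model. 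By independence, the smoothed likelihood \(p(r\mid X_t=x)\) factorizes over \(i\), so \(g_{t,i}\) depends only on \((\tilde x_i,\tilde r_i)\), and \(g_{t,i}\equiv0\) for \(i>d_y\). This last fact is the ``force is null on unmeasured aligned coordinates'' claim.

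Next I will apply Proposition~\ref{thm:linear-gaussian-guidance} and Corollary~\ref{cor:relative-energy} in each scalar coordinate. The prior score is \(-\tilde x_i/B_{i,t}\), whose energy is \(1/B_{i,t}\). For the force I use \(M_t=a_ts_i\lambda_i/B_{i,t}\) and
\[
\Sigma_{R\mid t}=s_i^2\lambda_i+\sigma_y^2-a_t^2s_i^2\lambda_i^2/B_{i,t}=s_i^2 v_{i,t}+\sigma_y^2 ,
\]
where \(v_{i,t}=\Var(X_{0,i}\mid X_{t,i})\). Then \(\E|g_{t,i}|^2=M_t^2/\Sigma_{R\mid t}\), which gives \eqref{eq:anisotropic-spectral-energies}. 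Dividing by \(1/B_{i,t}\) yields the first form of \eqref{eq:anisotropic-rho}.

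For the second form, I expect the cleanest route is Lemma~\ref{prop:joint-fisher-pythagoras} in its directional form \eqref{eq:directional-fisher-pythagoras}. It gives \(\E|s^R_{t,i}|^2=1/B_{i,t}+\E|g_{t,i}|^2\). Given \(\tilde r_i\), the noised posterior in coordinate \(i\) is Gaussian with variance \(a_t^2\tau_i^2+\Delta_t=\widetilde B_{i,t}\), so its score energy is \(1/\widetilde B_{i,t}\). Hence \(\rho_i=B_{i,t}/\widetilde B_{i,t}-1\). As a cross-check I would verify the algebra directly, via \(B_{i,t}-\widetilde B_{i,t}=a_t^2(\lambda_i-\tau_i^2)=a_t^2\lambda_i^2s_i^2/(s_i^2\lambda_i+\sigma_y^2)\) and a short simplification. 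This identity is the step most prone to algebra slips, so I will verify it both ways.

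The consequences then follow from the form \(\rho_i=B_{i,t}/\widetilde B_{i,t}-1\), with \(b=a_t^2\in(0,1)\) increasing along the reverse trajectory and \(\Delta_t=1-b\).
\begin{enumerate}
\item Expanding in \(b\) gives the leading term \(b(\lambda_i-\tau_i^2)=b\,s_i^2\lambda_i^2/(s_i^2\lambda_i+\sigma_y^2)\), with an \(O(b^2)\) remainder.
\item Write \(\rho_i+1=(1+b(\lambda_i-1))/(1+b(\tau_i^2-1))\). Its \(b\)-derivative has the sign of \((\lambda_i-1)-(\tau_i^2-1)=\lambda_i-\tau_i^2>0\) when \(s_i>0\), so \(\rho_i\) is strictly increasing. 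At \(b\to1\) the limit is \(\lambda_i/\tau_i^2-1=s_i^2\lambda_i/\sigma_y^2\). Together with \(\rho_i\to0\) as \(b\to0\) and strict monotonicity, this gives the unique crossing of level \(1\) iff \(s_i^2\lambda_i>\sigma_y^2\), and attainment only at the endpoint under equality. The equivalence \(\rho_i=1\Leftrightarrow\widetilde B_{i,t}=B_{i,t}/2\) is immediate.
\item Since \(g_{t,i}=0\) for \(s_i=0\), the global numerator equals the row numerator. Both denominators are sums of \(B_{i,t}^{-1}\), which gives \(\omega_t\). For \(\Sigma=\lambda\Id_d\) all \(B_{i,t}\) are equal, so \(\omega_t=d_y/d\).
\end{enumerate}
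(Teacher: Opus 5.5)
Your proposal is correct and follows essentially the same route as the paper: rotate into the common eigenbasis, obtain the directional energies by scalar Gaussian conditioning, derive $\rho_i=B_{i,t}/\widetilde B_{i,t}-1$ from the directional Fisher decomposition with posterior noised variance $\widetilde B_{i,t}$, get strict monotonicity from the $a_t^2$-derivative $(\lambda_i-\tau_i^2)/\widetilde B_{i,t}^2>0$, and obtain the dilution factor from the shared numerator. Your additional direct algebraic check of the second form of $\rho_i$ is a harmless redundancy.
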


The identity \eqref{eq:anisotropic-rho} also follows from the directional
Fisher-energy decomposition: the posterior noisy variance is
$\widetilde B_{i,t}$, so the posterior and prior Fisher energies are
$\widetilde B_{i,t}^{-1}$ and $B_{i,t}^{-1}$.  The equal-energy level therefore
has a posterior-contraction meaning; it is not an intrinsic phase boundary.

At the clean endpoint, maximizing Gaussian measurement information over a
row-orthonormal design is the classical D-optimal principal-subspace problem
\citep{chaloner1995bayesian}.  The dynamical question is whether this
orientation ordering persists throughout the noising family, where the
represented measurement information is only partial.

\begin{theorem}[Matched-spectrum orientation dependence]
\label{thm:matched-spectrum-alignment}
Let $X_0\sim\calN(0,\Sigma)$ with eigenvalues
$\lambda_1\ge\cdots\ge\lambda_d>0$, let $1\le d_y\le d$, let
$A\in\R^{d_y\times d}$, and let $R_A=AX_0+\varepsilon$ with
$\varepsilon\sim\calN(0,\sigma_y^2\Id_{d_y})$.  Assume
$AA^\top=\Id_{d_y}$.  Thus every admissible $A$ has the same nonzero singular
values; isotropic measurement noise leaves orientation relative to \(\Sigma\)
as the only design variable.  Define
\[
C_\gamma=(\Sigma^{-1}+\gamma\Id_d)^{-1},
\qquad
\Sigma_A^{\rm post}
=
(\Sigma^{-1}+\sigma_y^{-2}A^\top A)^{-1},
\qquad
\psi_\gamma(\lambda)
=
\frac12\log
\frac{\sigma_y^2+\lambda}
{\sigma_y^2+\lambda/(1+\gamma\lambda)}.
\]
Then
\begin{equation}
\label{eq:orientation-assimilation-general}
\mathcal M_\gamma(A)=\MI(R_A;Y_\gamma)
=
\frac12\log
\frac{\det(\sigma_y^2\Id_{d_y}+A\Sigma A^\top)}
{\det(\sigma_y^2\Id_{d_y}+AC_\gamma A^\top)}.
\end{equation}
Moreover, among all row-orthonormal rank-$d_y$ measurements,
\begin{align}
\label{eq:orientation-clean-information-bounds}
\frac12\sum_{i=d-d_y+1}^{d}
\log\!\left(1+\frac{\lambda_i}{\sigma_y^2}\right)
&\le \MI(X_0;R_A)
\le
\frac12\sum_{i=1}^{d_y}
\log\!\left(1+\frac{\lambda_i}{\sigma_y^2}\right),\\
\label{eq:orientation-volume-contraction}
2\MI(X_0;R_A)
&=
\log\frac{\det\Sigma}{\det\Sigma_A^{\rm post}},\\
\label{eq:orientation-assimilation-bounds}
\sum_{i=d-d_y+1}^{d}\psi_\gamma(\lambda_i)
&\le \mathcal M_\gamma(A)
\le
\sum_{i=1}^{d_y}\psi_\gamma(\lambda_i).
\end{align}
For every $\gamma>0$, $\psi_\gamma$ is strictly increasing.  Equality in the
upper (respectively, lower) bounds is attained by choosing the row space of
$A$ to be the span of the largest (respectively, smallest) $d_y$ prior
eigenvectors.  If the boundary eigenvalue is repeated, the extremizing row
space need not be unique: the required dimensions may be chosen arbitrarily
within that eigenspace.  Hence equal operator
singular values do not determine clean
measurement information, posterior covariance-volume contraction, or
reverse-path assimilation once the prior is anisotropic.
\end{theorem}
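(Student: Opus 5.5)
The proof has two parts: Gaussian conditioning gives the closed forms, and a variational argument over row-orthonormal designs gives the extremal bounds. The closed forms are the easy part; the extremal bound \eqref{eq:orientation-assimilation-bounds} for finite $\gamma$ is the expected obstacle.

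\emph{Closed forms.} Since $(R_A,Y_\gamma,X_0)$ is jointly Gaussian, I would write $\mathcal M_\gamma(A)=h(R_A)-h(R_A\mid Y_\gamma)$.
\begin{itemize}
\item The marginal covariance of $R_A$ is $\sigma_y^2\Id_{d_y}+A\Sigma A^\top$.
\item Given $Y_\gamma$, the remaining uncertainty in $X_0$ has covariance $\Cov(X_0\mid Y_\gamma)=(\Sigma^{-1}+\gamma\Id_d)^{-1}=C_\gamma$.
\item Because $\varepsilon$ is independent of $(X_0,Y_\gamma)$, the conditional covariance of $R_A$ given $Y_\gamma$ is $AC_\gamma A^\top+\sigma_y^2\Id_{d_y}$.
\end{itemize}
Taking the difference of Gaussian entropies gives \eqref{eq:orientation-assimilation-general}. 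Letting $\gamma\to\infty$ (so $C_\gamma\to0$) gives $\MI(X_0;R_A)=\tfrac12\log\det(\Id_{d_y}+\sigma_y^{-2}A\Sigma A^\top)$. The determinant lemma $\det(\Sigma^{-1}+\sigma_y^{-2}A^\top A)\det\Sigma=\det(\Id_{d_y}+\sigma_y^{-2}A\Sigma A^\top)$ then gives the volume identity \eqref{eq:orientation-volume-contraction}.

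\emph{Clean bounds.} For \eqref{eq:orientation-clean-information-bounds}, $AA^\top=\Id_{d_y}$ makes $A\Sigma A^\top$ a compression of $\Sigma$. By the Poincar\'e separation (Cauchy interlacing) theorem its eigenvalues $\mu_1\ge\cdots\ge\mu_{d_y}$ satisfy $\lambda_{j+d-d_y}\le\mu_j\le\lambda_j$. Since $\log(1+\mu/\sigma_y^2)$ is increasing, the bounds follow term by term. Equality holds exactly when the row space is the top (respectively, bottom) eigenspace, with the stated freedom inside a repeated boundary eigenspace.

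\emph{Assimilation bounds.} The difficulty here is that $A\Sigma A^\top$ and $AC_\gamma A^\top$ need not commute, so interlacing of each determinant separately does not control their ratio. I would instead use an integral representation in the SNR. Since $\tfrac{\dd}{\dd u}C_u=-C_u^2$ and $C_0=\Sigma$,
\[
\mathcal M_\gamma(A)=\frac12\int_0^\gamma \Tr\!\left(C_uA^\top(\sigma_y^2\Id_{d_y}+AC_uA^\top)^{-1}AC_u\right)\dd u
=\frac12\int_0^\gamma\Bigl[\Tr C_u-\Tr\bigl(C_u^{-1}+\sigma_y^{-2}P\bigr)^{-1}\Bigr]\dd u,
\]
with $P=A^\top A$ a rank-$d_y$ orthogonal projector. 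This matches Proposition~\ref{prop:force-information-rate}: the integrand is the MMSE gap. The bound then reduces to a pointwise-in-$u$ claim. The matrix $C_u$ has the same eigenvectors as $\Sigma$ and eigenvalues $c_i(u)=\lambda_i/(1+u\lambda_i)$, still decreasing in $i$. The claim is that $\Tr(C_u^{-1}+\sigma_y^{-2}P)^{-1}$ is minimized (respectively, maximized) over rank-$d_y$ projectors by the top (respectively, bottom) eigenspace of $C_u$.

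To prove this claim, I would first try writing $F(P)=\Tr C-\Tr(C^{-1}+\sigma_y^{-2}P)^{-1}$ as $\Tr(C^2A^\top(\sigma_y^2+ACA^\top)^{-1}A)$. I would then use a Courant--Fischer or Ky Fan argument, treating the maximal MMSE reduction as a sum over $d_y$ orthogonal directions of $c^2/(\sigma_y^2+c)$-type gains. If that fails, the fallback is a majorization argument: show that the eigenvalue vector of the compressed pair is dominated by the ordered top-$d_y$ choice, and apply Schur-convexity of the relevant function.

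Because the optimizer is the same top eigenspace for every $u$, the pointwise bounds integrate. On an aligned design each coordinate contributes $\tfrac12\int_0^\gamma c_i(u)^2/(\sigma_y^2+c_i(u))\,\dd u=\psi_\gamma(\lambda_i)$, which gives \eqref{eq:orientation-assimilation-bounds}. Strict monotonicity of $\psi_\gamma$ follows because the integrand is positive and increasing in $\lambda$ for each $u$.
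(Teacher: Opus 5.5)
Your closed forms, the volume identity, and the clean-information bounds via Poincar\'e separation match the paper's argument. The gap is in \eqref{eq:orientation-assimilation-bounds}. Everything there rests on the pointwise claim that $F_u(P)=\Tr C_u-\Tr(C_u^{-1}+\sigma_y^{-2}P)^{-1}$ is extremized over rank-$d_y$ projectors by the top and bottom eigenspaces, and for this claim you only list candidate strategies. A direct Ky Fan or Courant--Fischer argument does not apply as stated. The reason is that $F_u=\Tr\bigl((\sigma_y^2\Id_{d_y}+AC_uA^\top)^{-1}AC_u^2A^\top\bigr)$ is not the trace of a compression of a fixed matrix, because the normalizing inverse depends on $A$. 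So splitting $F_u$ into per-direction gains $c^2/(\sigma_y^2+c)$ is exactly what has to be proved, and the majorization fallback is not specified.

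The missing idea is a whitening change of frame. Put $V=\sigma_y^2\Id_d+C_u$, which commutes with $C_u$. Since $AA^\top=\Id_{d_y}$, you have $N:=AVA^\top=\sigma_y^2\Id_{d_y}+AC_uA^\top$. Then $Q=V^{1/2}A^\top N^{-1/2}$ satisfies $Q^\top Q=\Id_{d_y}$ and $F_u=\Tr(Q^\top V^{-1}C_u^2Q)$. The fixed matrix $V^{-1}C_u^2$ has eigenvalues $c_i(u)^2/(\sigma_y^2+c_i(u))$, which increase with $\lambda_i$. Ky Fan's trace inequality then gives your pointwise bounds, with equality for aligned row spaces, and your integration yields \eqref{eq:orientation-assimilation-bounds}.

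The paper applies the same substitution directly to the determinant ratio, with $V_0=\sigma_y^2\Id_d+\Sigma$ and $V_\gamma=\sigma_y^2\Id_d+C_\gamma$. The ratio $\det(AV_0A^\top)/\det(AV_\gamma A^\top)$ is then the determinant of a compression of $V_\gamma^{-1/2}V_0V_\gamma^{-1/2}$, whose eigenvalues are $e^{2\psi_\gamma(\lambda_i)}$. A second Poincar\'e separation finishes the proof without any SNR integral. The non-commutativity you cite as the obstacle is precisely what this whitening removes.

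Once completed, your route proves more than the theorem. The MMSE gap itself is ordered by orientation at every noise level, and hence, by Proposition~\ref{prop:force-information-rate}, so is the joint force energy, not only its integral $\mathcal M_\gamma$.
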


\begin{corollary}[Rank-one orientation contrast]
\label{cor:rank-one-orientation-contrast}
Let $d_y=1$, let $A_i=v_i^\top$ measure the $i$th prior eigenvector, and let
$\lambda_i>\lambda_j$.  Then $A_i$ and $A_j$ have the same singular value,
but for every $\gamma>0$,
\begin{equation}
\label{eq:rank-one-orientation-contrast}
\mathcal M_\gamma(A_i)-\mathcal M_\gamma(A_j)
=
\psi_\gamma(\lambda_i)-\psi_\gamma(\lambda_j)>0.
\end{equation}
They also satisfy
$\MI(X_0;R_{A_i})>\MI(X_0;R_{A_j})$ and
$\det\Sigma_{A_i}^{\rm post}<\det\Sigma_{A_j}^{\rm post}$.
\end{corollary}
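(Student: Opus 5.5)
The plan is to specialize Theorem~\ref{thm:matched-spectrum-alignment} with $d_y=1$. For $A_i=v_i^\top$ we have $A_iA_i^\top=1$, so $A_i$ is row-orthonormal and has singular value one. The same holds for $A_j$, which gives the equal-singular-value claim.

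First, evaluate \eqref{eq:orientation-assimilation-general} directly. Since $v_i$ is an eigenvector of $\Sigma$, it is also an eigenvector of $C_\gamma=(\Sigma^{-1}+\gamma\Id_d)^{-1}$, with eigenvalue $(\lambda_i^{-1}+\gamma)^{-1}=\lambda_i/(1+\gamma\lambda_i)$. Hence
\[
A_i\Sigma A_i^\top=\lambda_i,\qquad A_iC_\gamma A_i^\top=\frac{\lambda_i}{1+\gamma\lambda_i}.
\]
The determinants are then scalars, and \eqref{eq:orientation-assimilation-general} reduces to $\mathcal M_\gamma(A_i)=\psi_\gamma(\lambda_i)$. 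The same computation gives $\mathcal M_\gamma(A_j)=\psi_\gamma(\lambda_j)$, so the difference identity \eqref{eq:rank-one-orientation-contrast} follows at once.

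Second, establish the strict positivity. Theorem~\ref{thm:matched-spectrum-alignment} asserts that $\psi_\gamma$ is strictly increasing for $\gamma>0$, and we may simply invoke this. For self-containment, one can also check it directly. Write
\[
2\psi_\gamma(\lambda)=\log(\sigma_y^2+\lambda)-\log\bigl(\sigma_y^2+\lambda-\gamma\sigma_y^2\lambda\cdot\tfrac{\lambda}{1+\gamma\lambda}\cdot\tfrac{1}{\sigma_y^2}\bigr),
\]
or more simply
\[
\frac{\sigma_y^2+\lambda}{\sigma_y^2+\lambda/(1+\gamma\lambda)}=\frac{(\sigma_y^2+\lambda)(1+\gamma\lambda)}{\sigma_y^2(1+\gamma\lambda)+\lambda}.
\]
Differentiating the log in $\lambda$ gives
\[
\frac{1}{\sigma_y^2+\lambda}+\frac{\gamma}{1+\gamma\lambda}-\frac{\sigma_y^2\gamma+1}{\sigma_y^2(1+\gamma\lambda)+\lambda}.
\]
Clearing denominators, this is positive for $\gamma>0$. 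This short algebra is the only step with any content, and the monotonicity in the theorem already covers it. Then $\lambda_i>\lambda_j$ yields strict positivity.

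Third, handle the clean-endpoint statements by the scalar case of \eqref{eq:orientation-volume-contraction} and the Gaussian channel formula. Here $\MI(X_0;R_{A_i})=\tfrac12\log(1+\lambda_i/\sigma_y^2)$, which is strictly increasing in $\lambda_i$; equivalently, it is the $\gamma\to\infty$ limit of $\psi_\gamma$. By \eqref{eq:orientation-volume-contraction}, $\log\det\Sigma_{A}^{\rm post}=\log\det\Sigma-2\MI(X_0;R_A)$. The common term $\det\Sigma$ cancels, so the mutual-information ordering transfers to the strict ordering $\det\Sigma_{A_i}^{\rm post}<\det\Sigma_{A_j}^{\rm post}$. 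No step is a genuine obstacle; the only thing to verify carefully is the eigenvector reduction of $C_\gamma$.
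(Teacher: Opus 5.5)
Your proposal is correct and follows the paper's proof essentially step for step. Like the paper, you reduce \eqref{eq:orientation-assimilation-general} to $\mathcal M_\gamma(A_i)=\psi_\gamma(\lambda_i)$ through the eigenvector action of $C_\gamma$, invoke strict monotonicity of $\psi_\gamma$, use that $\tfrac12\log(1+\lambda/\sigma_y^2)$ is strictly increasing, and transfer that ordering to posterior volumes via \eqref{eq:orientation-volume-contraction}. Your optional direct derivative check is also right: clearing denominators leaves the positive numerator $\gamma\lambda(2\sigma_y^2+\lambda+\gamma\sigma_y^2\lambda)$, which matches the $\psi_\gamma'$ computed in the paper's proof of the theorem.
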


By Corollary~\ref{cor:reverse-path-kl},
$\mathcal M_{\gamma_t}(A)$ is also the mean posterior--prior reverse-path
relative entropy down to time $t$.  The theorem therefore gives an exact
Gaussian operator--prior alignment effect at the path level: spectrum-only
sufficiency is a symmetry consequence of the isotropic prior, not a property
of the measurement spectrum alone.  Under an equal row-space budget, choosing
the leading prior eigenspace maximizes assimilation at every noise level, not
only at the clean endpoint.  Figure~\ref{fig:anisotropic-alignment} illustrates
the rank-one case.

\begin{figure}[t]
\centering
\includegraphics[width=0.84\linewidth]{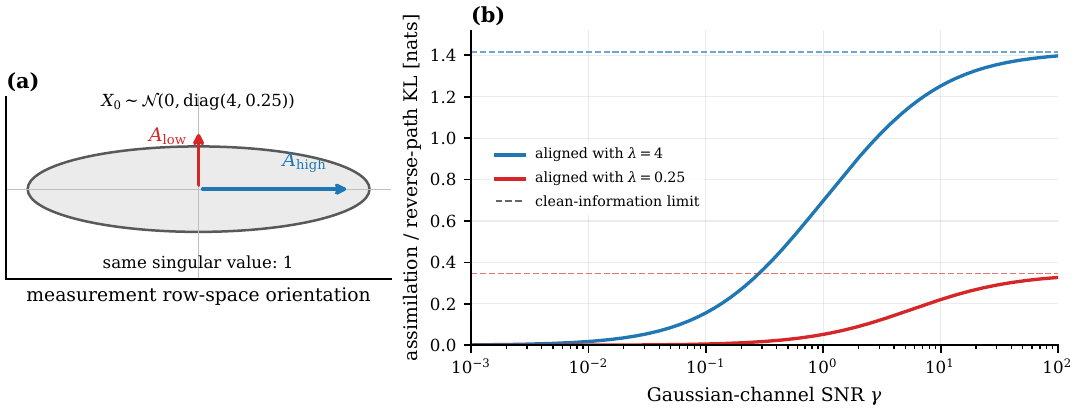}
\caption{Matched-spectrum orientation dependence in an anisotropic Gaussian
prior.  (a)~Two rank-one, row-orthonormal measurements have the same singular
values but select different prior eigendirections.  (b)~Their exact
measurement-assimilation curves
$\mathcal M_{\gamma}(A)=\MI(R_A;Y_\gamma)$ differ throughout the Gaussian
channel and equal the corresponding mean posterior--prior reverse-path KL at
$\gamma=\gamma_t$; the dashed asymptotes are the clean measurement
information, equivalently half the log posterior-covariance volume contraction.
Parameters are
$\Sigma=\operatorname{diag}(4,0.25)$ and $\sigma_y=0.5$; the curves evaluate
Theorem~\ref{thm:matched-spectrum-alignment} in closed form.}
\label{fig:anisotropic-alignment}
\end{figure}

\begin{proposition}[Rank dilution under coordinate-homogeneous score energy]
\label{prop:null-space-dilution}
Fix \(r\), let \(r_A=\operatorname{rank}(A)\), and use an orthonormal
coordinate system aligned with the row space of \(A\).  Assume (i) \(P_A^\perp g_t(\cdot,r)=0\) and (ii)
\(\E_{X_t\mid r}|s_{t,i}^{\rm prior}(X_t)|^2=e_t(r)>0\) for every coordinate.
Then
\begin{equation}
\label{eq:global-row-dilution}
\rho_t^{\rm global}(r)=\frac{r_A}{d}\rho_t^{\rm row}(r).
\end{equation}
\end{proposition}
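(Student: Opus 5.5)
This is a direct computation from the definitions \eqref{eq:fixed-r-rho} and \eqref{eq:obs-rho}. The plan is to write both numerators and both denominators in the aligned orthonormal coordinates, and then use the two hypotheses to collapse the sums.

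\emph{Coordinates.} Fix an orthonormal basis \(e_1,\ldots,e_d\) with \(e_1,\ldots,e_{r_A}\) spanning the row space of \(A\) and the rest spanning its null space. In this basis \(P_A\) is the coordinate projection onto the first \(r_A\) entries and \(P_A^\perp\) onto the remaining ones. Hence, for any vector field \(v\),
\[
\|v\|^2=\|P_Av\|^2+\|P_A^\perp v\|^2,
\qquad
\|P_Av\|^2=\sum_{i\le r_A}|e_i^\top v|^2 .
\]

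\emph{Numerators.} Hypothesis (i) gives \(P_A^\perp g_t(\cdot,r)=0\). Therefore \(\|g_t\|^2=\|P_Ag_t\|^2\) pointwise, and after taking the expectation under \(X_t\mid r\) the two numerators of \(\rho_t^{\rm global}(r)\) and \(\rho_t^{\rm row}(r)\) coincide. Call this common value \(N\).

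\emph{Denominators.} By hypothesis (ii) and the coordinate decomposition, we get \(\E_{X_t\mid r}\|s_t^{\rm prior}\|^2=\sum_{i=1}^d e_t(r)=d\,e_t(r)\). Likewise \(\E_{X_t\mid r}\|P_As_t^{\rm prior}\|^2=r_A\,e_t(r)\). Both are positive because \(e_t(r)>0\) and \(r_A\ge1\), so both ratios are defined. The case \(r_A=0\) is degenerate: \(\rho^{\rm row}\) is undefined and \(g_t=0\), so I would set it aside.

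\emph{Conclusion.} Dividing gives
\[
\rho_t^{\rm global}(r)=\frac{N}{d\,e_t(r)}=\frac{r_A}{d}\cdot\frac{N}{r_A\,e_t(r)}=\frac{r_A}{d}\,\rho_t^{\rm row}(r).
\]
There is no real obstacle here. The only point needing care is to check that hypothesis (ii) is stated in the same aligned orthonormal coordinates used for \(P_A\), so that the per-coordinate energies sum exactly to the projected energies. This holds because the coordinate system is orthonormal and adapted to the row space. The integrability of the squared norms is implicit in the hypothesis that the energies are finite.
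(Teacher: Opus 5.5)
Your proposal is correct and follows the paper's proof: hypothesis (i) gives the global and row ratios the same numerator, hypothesis (ii) makes the denominators \(d\,e_t(r)\) and \(r_A\,e_t(r)\), and dividing yields the claim. Setting aside the degenerate case \(r_A=0\) is a harmless addition that the paper leaves implicit.
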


For \(4\times\) super-resolution, \(r_A/d=1/16\): under the proposition's
hypotheses a global ratio understates the row-space ratio by a factor of
sixteen.  This is an exact isotropic calibration, not a universal law.  In the
aligned anisotropic Gaussian model the exact factor is the Fisher-energy weight
\(\omega_t\) in \eqref{eq:anisotropic-weighted-dilution}.  More generally,
prior coupling across the Euclidean row and null spaces---including a
nonaligned anisotropic Gaussian prior---can give the exact force a nonzero null
component.  These limitations motivate using row/null projectors as diagnostic
coordinates without assuming that conditioning is confined to the row space.

\begin{lemma}[Projected total-covariance decomposition]
\label{prop:posterior-mean-dispersion-calibration}
Fix a measurement value $r$ and $t>0$, assume
$\E[\|X_0\|^2\mid R=r]<\infty$, and let $P$ be an orthogonal projector.  With
$m_t^r(X_t)=\E[X_0\mid X_t,R=r]$,
\begin{equation}
\label{eq:projected-total-covariance}
\begin{aligned}
\Tr\!\left(P\Cov(X_0\mid r)P\right)
&=
\E\!\left[
\Tr\!\left(P\Cov(X_0\mid X_t,r)P\right)
\,\middle|\,R=r
\right]\\
&\quad+
\Tr\Cov\!\left(Pm_t^r(X_t)\mid R=r\right).
\end{aligned}
\end{equation}
The first term is conditional variance that remains after observing the noisy
state; the second is variation of the exact posterior mean across noisy states.
Under the Gaussian noising family, the second term tends to zero as
the channel SNR tends to zero and to
$\Tr(P\Cov(X_0\mid r)P)$ as the channel SNR tends to infinity.  Thus
across-state clean-estimate dispersion is the \emph{resolved} component of a
total-covariance decomposition, not the remaining posterior covariance.
\end{lemma}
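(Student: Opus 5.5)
The identity is the law of total covariance for \(PX_0\) under the conditional law given \(R=r\), with \(X_t\) as the conditioning variable. We work throughout under \(\Prob(\cdot\mid R=r)\), where \(X_0\sim P_0(\cdot\mid r)\) and \(X_t=a_tX_0+\sqrt{\Delta_t}\xi\) with \(\xi\) independent of \((X_0,R)\). This representation uses the standing assumption \(R\perp X_t\mid X_0\): given \(R=r\), \(X_t\) is still obtained from \(X_0\) by the same OU kernel.

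First, since \(\E[\|X_0\|^2\mid R=r]<\infty\) and \(P\) is a contraction, \(PX_0\) is square integrable. Hence \(m_t^r(X_t)\) is well defined and
\[
\Cov(PX_0\mid r)=\E\!\left[\Cov(PX_0\mid X_t,r)\,\middle|\,R=r\right]+\Cov\!\left(\E[PX_0\mid X_t,r]\,\middle|\,R=r\right).
\]
To prove this, write \(PX_0-\E[PX_0\mid r]=(PX_0-Pm_t^r)+(Pm_t^r-\E[PX_0\mid r])\). The cross term has zero expectation by the tower property, because the first bracket has zero conditional mean given \((X_t,r)\) and the second bracket is \(X_t\)-measurable. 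Next, use \(\Cov(PX_0\mid\cdot)=P\Cov(X_0\mid\cdot)P\) and \(\E[PX_0\mid X_t,r]=Pm_t^r(X_t)\) by linearity. Taking traces, and exchanging trace with expectation, gives \eqref{eq:projected-total-covariance}.

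For the limits we parametrize by the channel SNR \(\gamma\) through \(Y_\gamma=\sqrt\gamma X_0+Z\). By the total-covariance identity just proved, it suffices to show that the resolved term \(\E\|P(m^r-\E[X_0\mid r])\|^2\) tends to \(0\) as \(\gamma\to0\) and to \(\Tr(P\Cov(X_0\mid r)P)\) as \(\gamma\to\infty\).

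As \(\gamma\to0\), the posterior mean \(\E[X_0\mid Y_\gamma,r]\) converges in \(L^2\) to \(\E[X_0\mid r]\). One way to see this is to bound the resolved term by the linear estimator's gain: by I-MMSE-type continuity, \(\mathrm{mmse}(\gamma)\to\Tr\Cov(X_0\mid r)\). More directly, \(\mathrm{mmse}(\gamma)\ge\) the MMSE from observing \(Y_\gamma\) together with \(X_0\)'s ... we instead use the standard fact that the conditional MMSE of a Gaussian channel is continuous at \(\gamma=0\) for finite second moment. This fact can be obtained from the bound \(\Tr\Cov(X_0\mid r)-\mathrm{mmse}(\gamma)\le \E\|X_0\|^2\) combined with dominated convergence of the likelihood ratio \(q(Y_\gamma\mid X_0)/\E[q\mid r]\to1\).

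As \(\gamma\to\infty\), \(Y_\gamma/\sqrt\gamma\to X_0\) in \(L^2\). The MMSE is bounded by the error of the estimator \(Y_\gamma/\sqrt\gamma\), which is \(d/\gamma\to0\). Hence the first term vanishes, and the second term tends to the full projected covariance.

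The main obstacle is the low-SNR limit for a general (non-Gaussian) posterior, which needs a uniform integrability/dominated-convergence argument. I would handle it via \(L^2\) martingale convergence in reverse: the \(\sigma\)-fields generated by \(Y_\gamma\) decrease to a trivial-given-\(r\) limit as \(\gamma\to0\). To make this precise, couple all \(\gamma\) through a Brownian motion, \(Y_\gamma=\gamma X_0+W_\gamma\) rescaled, and apply backward martingale convergence in \(L^2\). The high-SNR limit is elementary.
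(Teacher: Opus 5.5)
Your proof of the identity is the paper's argument: the law of total covariance under $\Prob(\cdot\mid R=r)$ with respect to $\sigma(X_t)$, then conjugation by $P$ and a trace. Your high-SNR step is an explicit version of the paper's ``the Gaussian observation reveals $X_0$ in mean square'': the estimator $Y_\gamma/\sqrt\gamma$ has error $d/\gamma$, and $\E\|P(X_0-m_t^r)\|^2\le\E\|X_0-m_t^r\|^2$.

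The gap is the zero-SNR limit. The paper only asserts it. In effect it uses continuity at $\gamma=0$ of the Gaussian-channel MMSE for finite-second-moment inputs, applied to the posterior input law; it cites this from \citet{guo2005mutual} in the proof of Proposition~\ref{prop:force-information-rate}. Citing that fact would close your argument via orthogonality and $\|P\|_{\rm op}\le1$, but the justifications you give do not establish it.

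\begin{itemize}
\item The linear estimator's gain is a \emph{lower} bound on the resolved term, since $\mathrm{mmse}\le\mathrm{lmmse}$, so it cannot bound that term from above.
\item The inequality $\Tr\Cov(X_0\mid r)-\mathrm{mmse}(\gamma)\le\E\|X_0\|^2$ is trivially true and supplies no dominating function. Pointwise convergence of the likelihood ratio at fixed $y$ also does not control the posterior mean at the moving argument $Y_\gamma$ without a further uniform-integrability step.
\item In the backward-martingale plan, the $\sigma$-fields $\sigma(\widetilde Y_\gamma)$ are not nested. Under your coupling $\widetilde Y_\gamma=\gamma X_0+W_\gamma$, $\widetilde Y_{\gamma_1}$ is not a function of $\widetilde Y_{\gamma_2}$ for $\gamma_1<\gamma_2$; the channels are only stochastically degraded. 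So L\'evy's downward theorem does not apply to them.
\end{itemize}

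To make the martingale route work you need three things. First, use the path $\sigma$-fields $\mathcal H_\gamma=\sigma(\widetilde Y_s:s\le\gamma)$. Second, use sufficiency of $\widetilde Y_\gamma$ (Cameron--Martin density $\exp\{\langle x,\widetilde Y_\gamma\rangle-\gamma\|x\|^2/2\}$) to identify $\E[X_0\mid\mathcal H_\gamma,r]$ with $\E[X_0\mid\widetilde Y_\gamma,r]$. Third, use Blumenthal's 0--1 law together with mutual absolute continuity of drifted and driftless Brownian motion on $\mathcal H_\gamma$ to show that $\mathcal H_{0+}$ is trivial and independent of $X_0$ under $\Prob(\cdot\mid R=r)$. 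Your claim that the limit is ``trivial given $r$'' is exactly this unproved step.

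A shorter self-contained fix uses tools already in the paper. Under $\Prob(\cdot\mid R=r)$, for bounded $f$ with $|f|\le M$, Pinsker and Jensen give
\[
\E\bigl|\E[f(X_0)\mid Y_\gamma,r]-\E[f(X_0)\mid r]\bigr|^2
\le4M^2\,\E\bigl\|\Prob(X_0\in\cdot\mid Y_\gamma,r)-\Prob(X_0\in\cdot\mid r)\bigr\|_{\rm TV}
\le2\sqrt2\,M^2\sqrt{\MI(X_0;Y_\gamma\mid R=r)}.
\]
As in Corollary~\ref{cor:high-noise-path-proximity}, $\MI(X_0;Y_\gamma\mid R=r)\le\tfrac{\gamma}{2}\Tr\Cov(X_0\mid r)\to0$.

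\begin{itemize}
\item Apply this bound coordinatewise to the truncation $X_0\mathbf 1\{\|X_0\|\le B\}$.
\item Bound the remainder $U_B=X_0\mathbf 1\{\|X_0\|>B\}$ by the $L^2$ contraction of centered conditional expectation: $\|\E[U_B\mid Y_\gamma,r]-\E[U_B\mid r]\|_{L^2}\le(\E[\|U_B\|^2\mid r])^{1/2}$.
\item This remainder bound vanishes as $B\to\infty$ by the assumed finite conditional second moment.
\end{itemize}

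Hence $\E[X_0\mid Y_\gamma,r]\to\E[X_0\mid r]$ in $L^2$, so the second term of \eqref{eq:projected-total-covariance} tends to zero.
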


%% file: sections/regimes.tex
Sections~4 and~5 introduced several exact quantities: fixed-measurement
posterior--prior divergence, measurement-averaged information, residual class
and sample-index uncertainty, and direction-wise force ratios.  We now connect
these quantities before turning to learned priors.  For an empirical prior, a
chain-rule identity partitions the information represented in the noisy state.
For approximate samplers, a Gaussian moment calculation identifies the
predictive quantities that their measurement updates approximate.

\subsection{Measurement Assimilation and Index Resolution}
\label{sec:regime-map-synthesis}

Consider the uniform empirical prior on distinct points, with \(X_0=x_I\) and
\(I\) uniform on \(\{1,\ldots,n\}\).  The measurement \(R\) and noisy channel
state \(Y_\gamma\) contain information about the same support index.  The
measurement-assimilation information
\(\mathcal M_\gamma=\MI(R;Y_\gamma)\) records the part shared with the
measurement.  Define the complementary resolution information
\begin{equation}
\label{eq:measurement-averaged-resolution-information}
\overline{\mathcal J}_\gamma:=\MI(I;Y_\gamma\mid R),
\end{equation}
which records what the noisy state reveals about the index beyond the
measurement.

Because \(R\perp Y_\gamma\mid I\), the mutual-information chain rule gives the
exact partition
\begin{equation}
\label{eq:assimilation-resolution-decomposition}
\MI(I;Y_\gamma)
=\MI(I,R;Y_\gamma)
=\mathcal M_\gamma+\overline{\mathcal J}_\gamma.
\end{equation}
At the diffusion-channel SNR \(\gamma_t=a_t^2/\Delta_t\), the invertible
rescaling \(Y_{\gamma_t}=X_t/\sqrt{\Delta_t}\) also gives
\begin{equation}
\label{eq:channel-time-index-resolution}
\overline{\mathcal J}_{\gamma_t}
=\E_R\mathcal J_t(R),
\qquad
\mathcal J_t(r)=\MI(I;X_t\mid R=r).
\end{equation}
Thus \(\mathcal M_{\gamma_t}\) and \(\E_R\mathcal J_t(R)\) are two parts of
the same index information represented at noise level \(t\).  At
differentiability points, their rates partition in the same way:
\begin{equation}
\label{eq:assimilation-resolution-rate-decomposition}
\frac{\dd}{\dd\gamma}\MI(I;Y_\gamma)
=\mathcal M_\gamma'+\overline{\mathcal J}_\gamma'.
\end{equation}

All three quantities vanish at \(\gamma=0\).  At the clean-channel endpoint,
the nearest-center decoding argument used in
Lemma~\ref{lem:budget-regularity} yields
\begin{equation}
\label{eq:empirical-clean-information-partition}
\lim_{\gamma\to\infty}\MI(I;Y_\gamma)=\log n,
\qquad
\lim_{\gamma\to\infty}\mathcal M_\gamma
=\log n-H(I\mid R),
\qquad
\lim_{\gamma\to\infty}\overline{\mathcal J}_\gamma=H(I\mid R).
\end{equation}
The measurement therefore supplies an average head start
\(\MI(I;R)=\log n-H(I\mid R)\); reverse denoising resolves the remaining
index entropy \(H(I\mid R)\).  The class and directional analyses in
Section~5 refine this scalar accounting by identifying which distinctions
remain and where measurement information is represented.

Table~\ref{tab:information-dynamics-map} collects the exact coordinates used
throughout the solvable analysis.

\begin{table}[t]
\centering
\footnotesize
\setlength{\tabcolsep}{5pt}
\renewcommand{\arraystretch}{1.14}
\begin{tabular}{@{}
>{\raggedright\arraybackslash}p{3.45cm}
>{\raggedright\arraybackslash}p{4.35cm}
>{\raggedright\arraybackslash}p{5.55cm}@{}}
\toprule
Aspect & Exact coordinates & Role \\
\midrule
Measurement assimilation &
\(\mathcal K_t(r)\), \(\mathcal M_\gamma\),
\(\E\|g_t(X_t,R)\|^2\) &
fixed-measurement information, its measurement average, and the corresponding
force-energy rate \\
Class resolution &
\(\eta(r)\), \(\delta=(\Id_d-KA)m\), \(\widetilde\phi_r(t)\) &
measurement evidence, residual separation, and baseline-corrected commitment \\
Sample-index resolution &
\(H(I\mid r)\), \(\mathcal J_t(r)\),
\(\overline{\mathcal J}_\gamma\) &
remaining index budget and its fixed-measurement and averaged resolution \\
Directional allocation &
\(\rho_i(t)\), \(\mathcal M_\gamma(A)\) &
direction-wise force strength and operator-dependent information assimilation \\
\bottomrule
\end{tabular}
\caption{Summary of the exact coordinates used in Sections~4--5.  The rows
describe complementary aspects of conditioning and need not change at a
common noise level.}
\label{tab:information-dynamics-map}
\end{table}
\FloatBarrier

\subsection{Gaussian-Moment Analysis of Approximate Samplers}
\label{sec:sampler-implications}
\input{sections/algorithms}

%% file: sections/algorithms.tex
The exact smoothed likelihood force depends on the conditional clean-signal
law \(X_0\mid X_t=x\), which is not available for a learned prior.  To compare
common measurement updates on a common probabilistic basis, we approximate
this conditional law by a Gaussian with the same first two moments.  This
calculation separates the roles of the predictive mean, predictive covariance,
and their state dependence.

\runinhead{Local Gaussian approximation.}
Recall the prior denoiser \(m_t(x)=\E[X_0\mid X_t=x]\), and define
\begin{equation}
\label{eq:local-gaussian-moments}
C_t(x)=\Cov(X_0\mid X_t=x),
\qquad
S_t(x)=AC_t(x)A^\top+\Gamma.
\end{equation}
Under the linear Gaussian measurement model, replacing
\(X_0\mid X_t=x\) by \(\calN(m_t(x),C_t(x))\) gives the surrogate
\begin{equation}
\label{eq:local-gaussian-surrogate}
p(r\mid X_t=x)
\approx
\calN\!\left(r;Am_t(x),S_t(x)\right),
\qquad
\ell_t^{\rm G}(x;r)
:=\log\calN\!\left(r;Am_t(x),S_t(x)\right).
\end{equation}
Write \(e_t(x)=r-Am_t(x)\), and let \(D m_t(x)\) denote the Jacobian of
\(m_t\).  If \(m_t\) and \(C_t\) are differentiable, then
\begin{equation}
\label{eq:local-gaussian-full-score}
\begin{aligned}
[\nabla_x\ell_t^{\rm G}(x;r)]_j
={}&[(D m_t(x))^\top A^\top S_t(x)^{-1}e_t(x)]_j\\
&-\frac12\Tr\!\left[S_t(x)^{-1}\partial_jS_t(x)\right]\\
&+\frac12e_t(x)^\top S_t(x)^{-1}
(\partial_jS_t(x))S_t(x)^{-1}e_t(x).
\end{aligned}
\end{equation}
The first term differentiates the predictive mean, while the remaining terms
come from state dependence in the predictive covariance.  Equation
\eqref{eq:local-gaussian-full-score} is exact for the surrogate and reduces to
the exact smoothed likelihood force in the linear Gaussian model of
Proposition~\ref{thm:linear-gaussian-guidance}, where \(C_t(x)\) is independent
of \(x\).

\runinhead{Residual-gradient guidance.}
For isotropic measurement noise, \(\Gamma=\sigma_y^2\Id_{d_y}\), DPS
\citep{chung2023diffusion} differentiates the clean-space residual
\begin{equation}
\label{eq:dps-loss}
\mathcal L_t(x_t)
=
\frac{\|A\hat x_0(x_t,t)-r\|^2}{2\sigma_y^2},
\end{equation}
where \(\hat x_0\) is the sampler's denoised estimate.  In the Gaussian-moment
calculation, this amounts to differentiating the predictive mean while using
the measurement variance in place of the predictive covariance
\(S_t(x)\).  The one-dimensional calculation in
Appendix~\ref{sec:app-exact-vs-plugin-force} makes the
resulting scale difference explicit: before step-size normalization, the
plug-in force exceeds the exact force by
\(1+v_t/\sigma_y^2\), where \(v_t=\Var(X_0\mid X_t)\).  The
residual-dependent normalization in the discrete DPS implementation changes
this calibration, although the exact, plug-in, and normalized force energies
retain the same \(O(a_t^2)\) high-noise scaling in the VP model.

\runinhead{Covariance and projection approximations.}
\PiGDM \citep{song2023pigdm} combines pseudo-inverse structure with an
isotropic approximation of the predictive variance, whereas DMPS
\citep{meng2022diffusion} uses the pivot and variance obtained from an
uninformative-prior approximation.  DDNM/DDNM+ \citep{wang2023zero} follows a
different route, enforcing clean-space range/null consistency through a
projection-like update rather than approximating the posterior score.  In the
isotropic Gaussian reference, the posterior gain
\begin{equation}
\label{eq:gaussian-posterior-gain}
K_{\sigma_y}=\sigma_x^2A^\top
(\sigma_x^2AA^\top+\sigma_y^2\Id_{d_y})^{-1}
\end{equation}
satisfies
\begin{equation}
\label{eq:ddnm-limit}
K_{\sigma_y}A\to A^\dagger A,
\qquad
\Id_d-K_{\sigma_y}A\to\Id_d-A^\dagger A
\quad\text{as }\sigma_y^2\to0.
\end{equation}
Thus the hard row-space projection is the zero-noise limit of the softer
Gaussian posterior update.

The sampler configurations also differ in Jacobian treatment, step scaling,
stochasticity, and the point at which the correction is inserted.  The
comparisons in Section~\ref{sec:neural-diagnostics} consequently concern the
complete fixed configurations rather than individual design choices.

\runinhead{Trajectory observables.}
The exact quantities \(\mathcal K_t\), \(\mathcal M_\gamma\), and
\(\mathcal J_t\) cannot be evaluated for the learned FFHQ prior.  The neural
study instead applies a common frozen denoiser to each trajectory and records
two observable features: its residual from the measurement set and its
across-chain dispersion in the row and null spaces of \(A\).  The projected
total-covariance identity in
Lemma~\ref{prop:posterior-mean-dispersion-calibration} supplies the exact-model
motivation for the dispersion coordinates.  When a scalar path summary is
needed, dispersion is integrated over a common log-SNR grid to obtain its area
under the curve (AUC).  These observables preserve the timing and
subspace distinctions of the theory, but are not measured in information
units.  Section~\ref{sec:neural-diagnostics} defines their empirical estimators.

%% file: sections/experiments.tex
The exact-model studies use settings in which the posterior is available in
closed form or by finite summation.  They test the predicted high-noise
scaling and conditional information budgets, and then calibrate the
clean-estimate dispersion used in the neural experiments.
Forward noise time \(t\) is displayed in the reverse denoising direction.

\runinhead{High-noise force decay.}
Corollaries~\ref{cor:general-high-noise-energy}
and~\ref{cor:general-relative-energy} predict that both force energy and its
ratio to prior-score energy scale linearly in \(a_t^2\) at high noise.  Across
denoising, super-resolution, and inpainting, log--log fits against \(a_t^2\)
give exponents between \(1.002\) and \(1.008\), close to the predicted unit
exponent.  Prior-score energy remains near one, and the leading force-energy
coefficient agrees with
\(d^{-1}\Tr\Cov(\E[X_0\mid R])\) to relative error below
\(4\times10^{-4}\).  Appendix~\ref{sec:app-exact-details} gives the taskwise
estimates, sampling design, and energy curves.

\subsubsection{Residual Class Information}
\label{sec:diag-class-speciation}

In the conditional Gaussian mixture of Section~\ref{sec:speciation}, the
measurement may supply class evidence before the noisy state becomes
informative.  The relevant dynamic quantities are therefore the residual
coordinate
\(\kappa_t=a_t^2\delta^\top\Lambda_t^{-1}\delta\) and the excess cloning over
the static baseline \(\phi_r(\infty)\).  Figure~\ref{fig:conditional-gmm}
shows the resulting distinction.  At zero external field, \(\kappa_t=1\)
marks the local symmetric instability; a nonzero field replaces it by a
biased crossover.  Increasing the observed fraction or reducing measurement
noise suppresses only the measured component of class separation.

\begin{figure}[tb]
\centering
\includegraphics[width=0.92\linewidth]{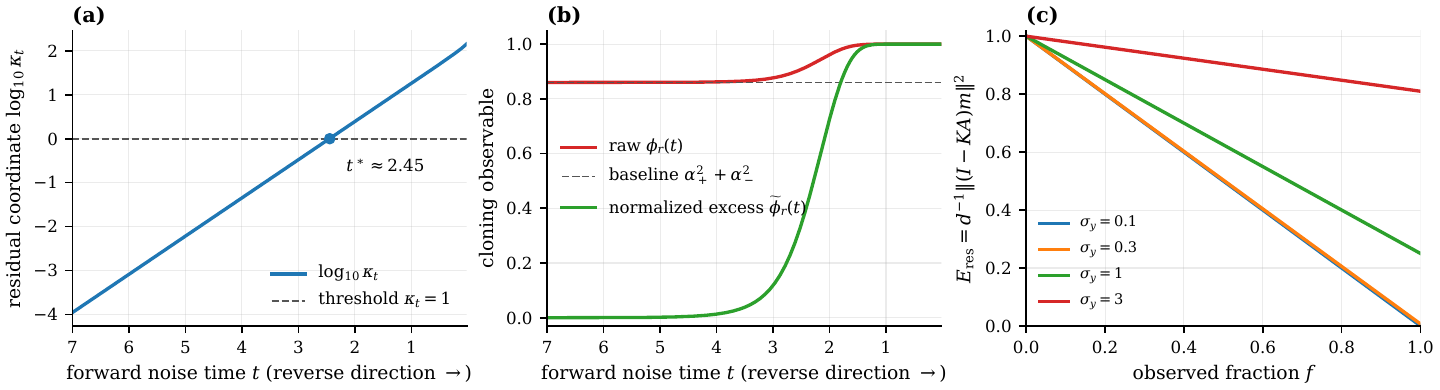}
\caption{Residual class-information diagnostics in an exact conditional
Gaussian mixture.  (a)~At zero field, the residual coordinate \(\kappa_t\)
crosses the local-instability reference \(\kappa_t=1\).
(b)~With a measurement-induced field, subtracting the high-noise cloning
baseline isolates evidence added by the noisy state.  (c)~Residual separation
\(d^{-1}\|(\Id_d-KA)m\|^2\) decreases only in observed directions.}
\label{fig:conditional-gmm}
\end{figure}

\subsubsection{Empirical Explanation Resolution}
\label{sec:diag-sample-collapse}

For a finite empirical prior, posterior collapse is decoding among
measurement-compatible explanations.  Given data \(\{x_i\}_{i=1}^n\), we
compute
\begin{equation}
\label{eq:mnist-empirical-weights}
w_i(r)\propto
\exp\left\{-\frac{\|Ax_i-r\|^2}{2\sigma_y^2}\right\}
\end{equation}
and evaluate \(H(I\mid r)\), \(H(I\mid X_t,r)\), and
\(\mathcal J_t(r)=H(I\mid r)-H(I\mid X_t,r)\) exactly by finite summation.
The experiment uses 1500 MNIST \citep{lecun1998gradient} samples, 100
in-support references, and a common assumed likelihood width \(\sigma_y=2\)
for denoising, \(2\times\) super-resolution, and central-block inpainting.
No measurement noise is sampled in this diagnostic; \(\sigma_y\) specifies
the width of the likelihood used to form the posterior weights.

Figure~\ref{fig:mnist-empirical} displays the budget change established by
Proposition~\ref{thm:posterior-collapse}.  The high-noise entropy begins at
the task-dependent value \(H(I\mid r)\), not \(\log n\), and vanishes as the
empirical components separate.  Denoising leaves the smallest compatible set
and the strongest class baseline; the lower-rank operators preserve much more
index ambiguity.  Means and standard errors are computed across references.

\begin{figure}[tb]
\centering
\includegraphics[width=0.91\linewidth]{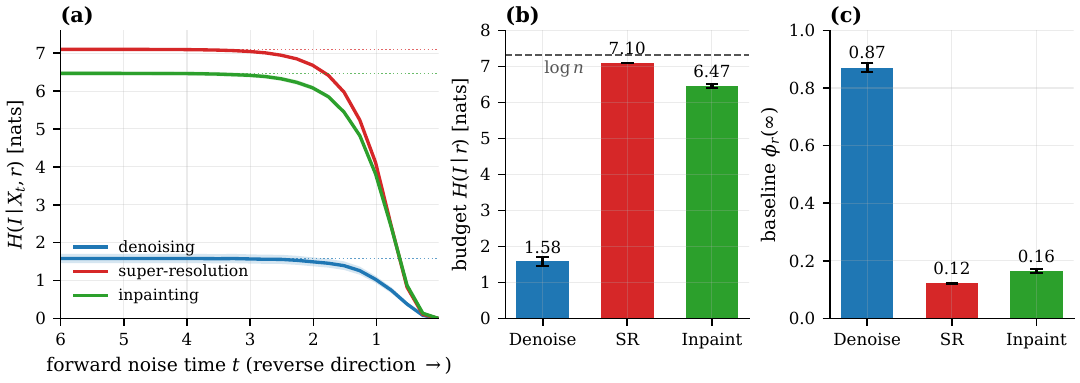}
\caption{Exact empirical-posterior diagnostics on a 1500-sample MNIST subset
(100 references; bands and bars are standard errors).
(a)~\(H(I\mid X_t,r)\) starts at the measurement-dependent budget
\(H(I\mid r)\) and decays as components separate; the colored dotted
horizontals are the corresponding mean budgets.
(b)~Posterior budgets; the gray dashed line is the unconditional reference
\(H(I)=\log n\).
(c)~Measurement-induced high-noise class-cloning baselines.}
\label{fig:mnist-empirical}
\end{figure}

\runinhead{Directional allocation.}
In an isotropic Gaussian projection model, the row-space and global force
ratios have identical time dependence and differ by the rank fraction.  For the
\(d_y/d=1/16\) geometry of \(4\times\) super-resolution this gives the exact
sixteen-fold dilution predicted by
Proposition~\ref{prop:null-space-dilution}.

\subsubsection{Information Dynamics on Common Exact Trajectories}
\label{sec:unified-trajectories}

We next record the class, index, force, and residual diagnostics on the same
exact-score conditional reverse trajectories.  For each MNIST operator, twenty
references and 32 chains follow a 400-step Euler--Maruyama discretization on
\(t\in[0.02,6]\).  Forward conditional sampling provides an independent
entropy check; across the saved checkpoints, the maximum discrepancy between
the reference-averaged forward and reverse entropy curves is \(0.12\) nats,
and the terminal reverse entropy is below \(5\times10^{-9}\) nats.  The
200/400/800-step comparison in Table~\ref{tab:app-em-stability} shows that the half-times and
residual floors are stable to the discretization.

Figure~\ref{fig:unified-trajectories} shows four complementary effects.  The
fixed-reference row-space force ratio rises from the predicted high-noise
scaling, peaks in the middle of the trajectory, and remains below
\(6\times10^{-3}\).  Its low-noise decay is specific to the finite empirical
prior: once \(X_t\) identifies an atom visited by the posterior trajectory, the
prior and posterior denoisers both converge to that atom, so
\(m_t^r(X_t)-m_t(X_t)\) vanishes.  Class entropy contracts from the static
\(H(C\mid r)\) baseline.  The denoised measurement residual rises from the
posterior-mean fit toward the compatible-pool sampling floor.  Finally,
\(\log n-H(I\mid X_t,r)\) separates into a measurement head start
\(\log n-H(I\mid r)\) and the subsequent resolution stock
\(\mathcal J_t(r)\).  The hollow circles in panel (d) are independent forward
conditional Monte Carlo estimates at selected checkpoints.  Together the four
panels show that class resolution, force transfer, and empirical collapse need
not share a universal ordering.

\begin{figure}[t]
\centering
\includegraphics[width=0.86\linewidth]{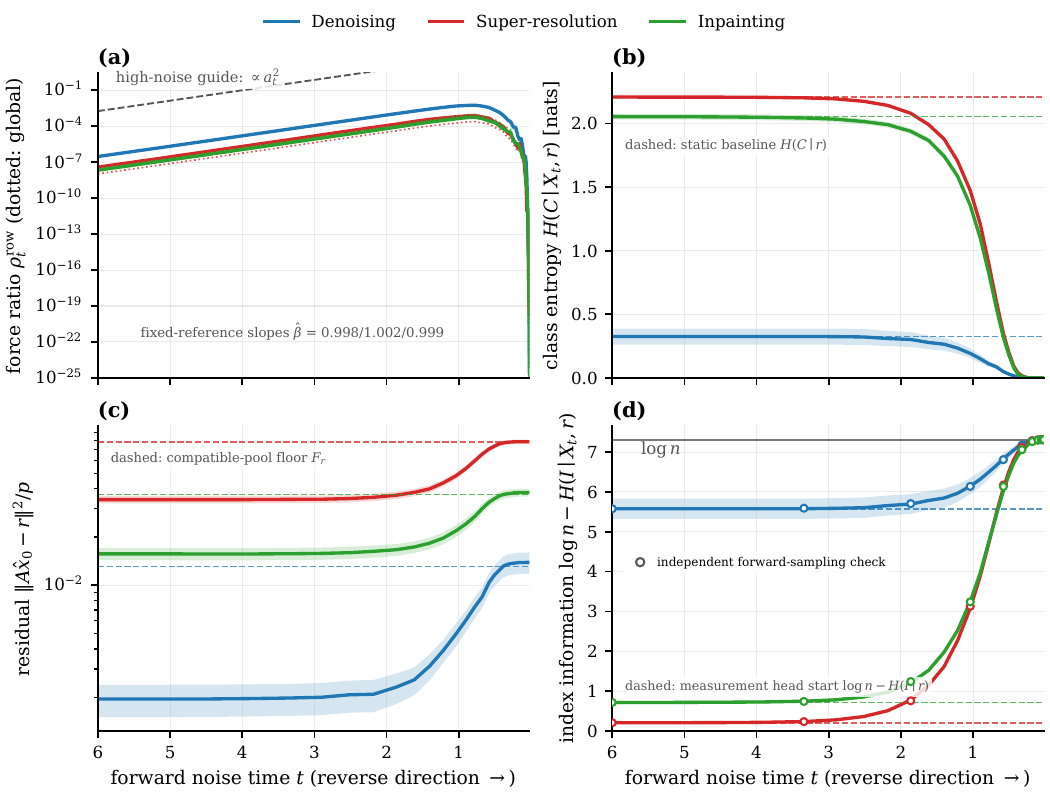}
\caption{Information diagnostics on common exact-score conditional reverse
trajectories (MNIST empirical prior; twenty references, 32 chains).
(a)~Fixed-reference row-space force ratio; dotted colored curves are the
corresponding global ratios, and the gray dashed line is a unit-slope reference
for the \(a_t^2\) high-noise scaling.
(b)~Class entropy, with static measurement baselines dashed.
(c)~Denoised residual approaching the compatible-pool sampling floor.
(d)~Index information, decomposed into the measurement head start (colored
dashes) and subsequent trajectory resolution; hollow circles show independent
forward conditional samples at every fourth saved checkpoint.}
\label{fig:unified-trajectories}
\end{figure}

\subsubsection{Calibration of Clean-Estimate Dispersion}
\label{sec:exact-proxy-calibration}

Lemma~\ref{prop:posterior-mean-dispersion-calibration} decomposes the clean
posterior covariance into uncertainty remaining after \(X_t\) and dispersion
of \(m_t^r(X_t)=\E[X_0\mid X_t,r]\) across noisy states.  We evaluate this
identity for twenty references, with 256 conditional states per reference at
each of 25 log-SNR values.  Figure~\ref{fig:exact-proxy-calibration} shows the
two components exchanging mass along reverse denoising in both row and null
coordinates.
The sum of the two Monte Carlo components agrees with the exact fixed
posterior variance with
median relative gap \(\ProxyClosureMedian\) and 95th-percentile gap
\(\ProxyClosureNinetyFive\).  Across-state clean-estimate dispersion is thus a
resolved component in the total-covariance decomposition.

\begin{figure}[t]
\centering
\includegraphics[width=0.91\linewidth]{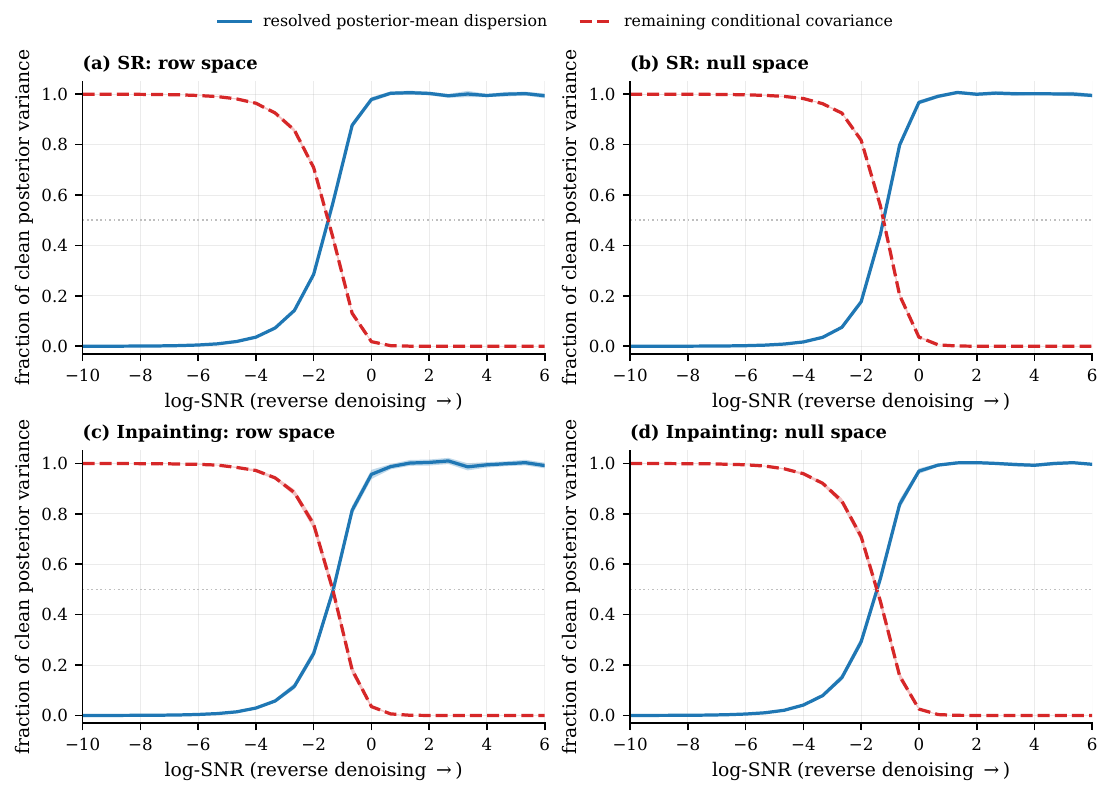}
\caption{Total-covariance calibration for the exact finite empirical posterior.
Solid curves are posterior-mean dispersion as a fraction of clean posterior
variance; dashed curves are remaining conditional covariance.
(a)--(b)~Super-resolution row and null coordinates.
(c)--(d)~Inpainting row and null coordinates.  Shaded regions are 95\%
reference-bootstrap intervals; at most checkpoints they are narrower than the
plotted curves.}
\label{fig:exact-proxy-calibration}
\end{figure}

%% file: sections/neural_trajectory.tex
For learned image priors, the exact posterior score and the information
quantities of Sections~\ref{sec:posterior-score}--\ref{sec:sampler-implications}
are generally unavailable.  We use a frozen FFHQ model as a concrete test bed
and examine two denoiser-based summaries along trajectories from fixed
approximate samplers: measurement residual and between-chain dispersion in the
measured and unmeasured subspaces.

\runinhead{Common trajectory summaries.}
All samplers use the same unconditional FFHQ checkpoint.  For chain \(k\), let
\(\widehat x_{0,t}^{(k)}\) denote the clean estimate obtained by evaluating the
frozen denoiser before the sampler-specific correction at a saved state:
\begin{equation}
\label{eq:neural-common-estimate}
\widehat x_{0,t}^{(k)}=D_{\rm frozen}(x_t^{(k)},t),
\qquad
\widetilde x_{0,t}^{(k)}
=\operatorname{clip}_{[-1,1]}\!\left(\widehat x_{0,t}^{(k)}\right).
\end{equation}
The pre-correction residual for that chain is
\begin{equation}
\label{eq:neural-common-residual}
R_{t,k}^{\rm pre}
=d_y^{-1}\|A\widehat x_{0,t}^{(k)}-r\|_2^2.
\end{equation}
For \(K\) independent chains, we also record
\begin{equation}
\label{eq:neural-subspace-variation}
D_{\rm row}^{\rm clip}(t)
 =
\frac{\Tr\widehat{\Cov}_k
\bigl(P_A\widetilde x_{0,t}^{(k)}\bigr)}{r_A},
\qquad
D_{\rm null}^{\rm clip}(t)
 =
\frac{\Tr\widehat{\Cov}_k
\bigl(P_A^\perp\widetilde x_{0,t}^{(k)}\bigr)}{d-r_A}.
\end{equation}
Here \(\widehat{\Cov}_k\) is the sample covariance with denominator \(K-1\).
The row and null summaries are defined when \(r_A>0\) and \(d-r_A>0\),
respectively.
The residual curves average \eqref{eq:neural-common-residual} over chains and
references.  The clipping and projectors in
\eqref{eq:neural-subspace-variation} are identical for all samplers.
Lemma~\ref{prop:posterior-mean-dispersion-calibration}
motivates the row/null decomposition in the exact posterior model; in the
learned setting, \eqref{eq:neural-common-residual}
and~\eqref{eq:neural-subspace-variation} are empirical trajectory summaries.

\phantomsection\label{sec:neural-consistency-transfer}
\runinhead{Sampler configurations.}
We evaluate validation-selected configurations of DPS
\citep{chung2023diffusion}, DDNM+ \citep{wang2023zero}, \PiGDM
\citep{song2023pigdm}, and DMPS \citep{meng2022diffusion}.  Because their update
rules differ in several respects, the comparisons concern the complete fixed
configurations rather than any single algorithmic component.
Appendix~\ref{sec:app-neural-protocol} gives the configurations,
common-observable protocol, and finite-chain analysis.

\phantomsection\label{sec:neural-mask-geometry}
\runinhead{Fixed-spectrum comparison.}
The exact Gaussian result in Theorem~\ref{thm:matched-spectrum-alignment}
concerns operator--prior alignment.  The following learned-model illustration
asks a narrower question: whether masks with the same spectrum can yield
different empirical subspace trajectories under a frozen image prior.  The
fixed library
contains five contiguous, three multi-hole, three structured-distributed, and
five random inpainting masks.  Each mask hides one quarter of the image
locations, so all sixteen operators have the same rank, nullity, and singular
values.

For each sampler, mask \(A\), and held-out reference image, we integrate the
null-space dispersion over a common 25-checkpoint log-SNR grid to obtain an
area under the curve (AUC).  We compare the guided AUC with that of an
unconditional prior trajectory evaluated under the same mask:
\[
Z_{\rm null}(A)
=\log\frac{\operatorname{AUC}_{\rm null}^{\rm guided}(A)}
{\operatorname{AUC}_{\rm null}^{\rm prior}(A)},
\qquad
\Delta_{\rm null}
=\overline{Z}_{\rm null}^{\rm contiguous}
-\overline{Z}_{\rm null}^{\rm random}.
\]
The prior normalization removes mask-dependent dispersion already present
without measurement guidance.  The contrast is relative rather than
absolute: \(\Delta_{\rm null}>0\) means that the guided-to-prior AUC ratio is
larger for contiguous than for random masks.

Figure~\ref{fig:fixed-library-contrasts} reports method-specific
contrasts.  In natural-log units, the null-space estimates range from
\(0.468\) to \(0.652\), and
their 95\% reference-bootstrap intervals, conditional on the fixed masks, lie
above zero.  The row-space estimates are slightly negative and much smaller in
magnitude, ranging from \(-0.060\) to \(-0.012\).  The family difference is
therefore concentrated in the null-space trajectory statistic.

\begin{figure}[t]
\centering
\includegraphics[width=0.91\linewidth]{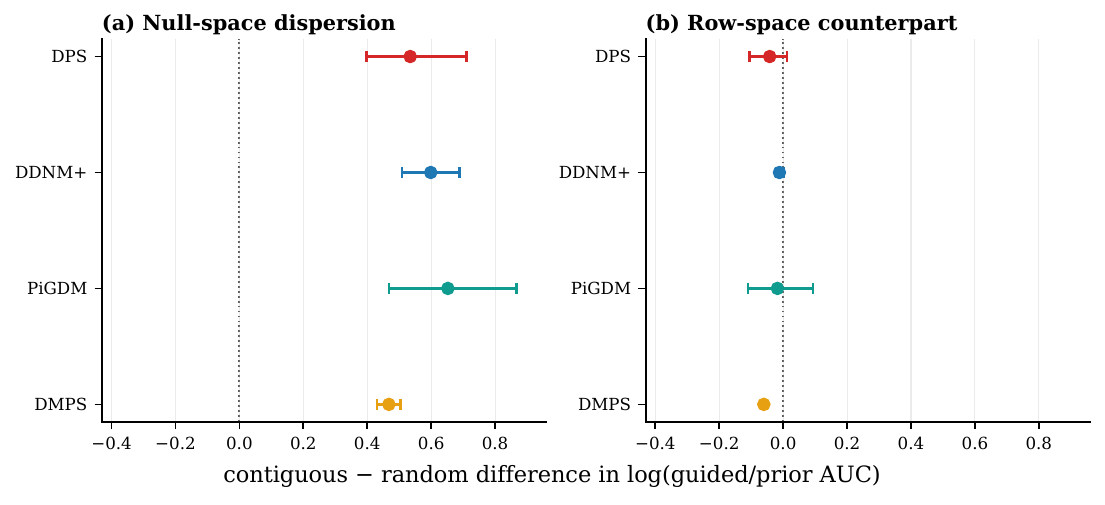}
\caption{Method-specific contrasts between five contiguous and five random
masks from the fixed library.  Points are means over held-out references; bars
are 95\% reference-bootstrap intervals conditional on the masks.  Positive
values in (a) mean that the guided-to-prior null-space AUC ratio is larger for
contiguous masks.  Panel (b) shows the corresponding row-space contrasts.}
\label{fig:fixed-library-contrasts}
\end{figure}

For every sampler, the multi-hole estimate lies between the contiguous and
random estimates, whereas the structured-distributed estimate is close to the
random value.  Thus, within this checkpoint and fixed library, masks with a
common spectrum yield different null-space dispersion AUCs after prior
normalization.
Because mask geometry and conditional task difficulty vary together, the
experiment does not identify a causal effect of geometry.
Appendix~\ref{sec:app-neural-protocol} gives the full mask library, protocol,
and sensitivity analyses.

%% file: sections/conclusion.tex
This paper develops an information-theoretic account of how linear
measurements alter diffusion reverse dynamics.  The posterior--prior score
difference is a smoothed likelihood force whose conditional energy gives the
posterior--prior entropy-dissipation rate.  Its integral, together with the
terminal marginal divergence, determines reverse-path relative entropy, while
averaging over measurements yields an I-MMSE relation.  Under finite second
moments, the joint-law force becomes weak relative to the prior score at high
noise.  The
solvable models further show
that conditioning removes class separation already explained by the
measurement, reduces the empirical explanation budget from \(\log n\) to
\(H(I\mid r)\), and makes information assimilation depend on operator--prior
alignment beyond the singular-value spectrum.

The exact-model diagnostics illustrate these quantities directly.  A separate
learned-model study finds different null-space trajectory statistics within
one frozen FFHQ checkpoint and one fixed-spectrum mask library.
Extending the analysis beyond linear measurements and testing the directional
predictions across independently trained models remain natural next steps.
Together, the results show that measurement information has both a budget and
a geometry, neither of which is captured by endpoint quality or the operator
spectrum alone.

%% file: appendix/proofs.tex
\subsection{Proof of Lemma~\ref{thm:posterior-tweedie}}

\begin{proof}
Assumption~\eqref{eq:indep-assump} gives
\(p(x\mid x_0,r)=q_t(x\mid x_0)\).  Integrating over the conditional law of
\(X_0\) and applying Bayes' rule yields
\(p(r\mid X_t=x)=\E[p(r\mid X_0)\mid X_t=x]\), proving
\eqref{eq:posterior-bayes-factorization}.  The Gaussian noising channel has
score
\[
\nabla_x\log p(x\mid x_0)=\frac{a_tx_0-x}{\Delta_t}.
\]
Applying Fisher's latent-variable score identity (Section~\ref{sec:posterior-score}) to the
posterior noised density (with latent \(X_0\), whose conditional given
\((X_t=x,R=r)\) exists by the factorization above) gives
\[
\nabla_x\log p_t(x\mid r)
=\E\Bigl[\frac{a_tX_0-x}{\Delta_t}\;\Big|\;X_t=x,R=r\Bigr]
=\frac{a_tm_t^r(x)-x}{\Delta_t},
\]
and \(\nabla_x\log p_t(x)=(a_tm_t(x)-x)/\Delta_t\) by the same argument
without conditioning on \(R\).
By \eqref{eq:posterior-score-factorization},
\(g_t(x,r)=\nabla_x\log p_t(x\mid r)-\nabla_x\log p_t(x)\).  The two
\(-x/\Delta_t\) terms cancel, giving \eqref{eq:general-denoiser-guidance}.
\end{proof}

\subsection{Proof of Theorem~\ref{thm:fixed-r-entropy-dissipation}}

\begin{proof}
Write \(q_u=p_u(\cdot\mid r)\), let \(p_u=p_u(\cdot)\), and set
\(h_u=q_u/p_u\).  Both densities solve the
same Fokker--Planck equation associated with \eqref{eq:forward-ou-sde},
\[
\partial_u f=\Delta f+\nabla\!\cdot(xf).
\]
Under the stated regularity and boundary assumptions,
\[
\frac{\dd}{\dd u}\KL(q_u\|p_u)
=\int (\partial_u q_u)\log h_u\,\dd x
-\int h_u\,\partial_u p_u\,\dd x.
\]
The two drift contributions cancel after integration by parts.  For the
diffusion terms,
\begin{align*}
\int (\Delta q_u)\log h_u\,\dd x-
\int h_u\Delta p_u\,\dd x
&=-\int \nabla q_u\cdot\nabla\log h_u\,\dd x
+\int \nabla h_u\cdot\nabla p_u\,\dd x\\
&=-\int p_u\frac{\|\nabla h_u\|^2}{h_u}\,\dd x\\
&=-\int q_u\|\nabla\log h_u\|^2\,\dd x.
\end{align*}
By Bayes' factorization, $\nabla\log h_u=g_u(\cdot,r)$, proving
\eqref{eq:fixed-r-kl-dissipation}; integration gives
\eqref{eq:fixed-r-kl-integrated}.  Finally,
\[
\E_R\mathcal K_u(R)
=\int p_R(r)\int p_u(x\mid r)
\log\frac{p_u(x\mid r)}{p_u(x)}\,\dd x\,\dd\nu(r)
=\MI(R;X_u).
\]
Averaging the integrated identity and applying Tonelli proves
\eqref{eq:joint-mutual-information-dissipation}.
\end{proof}

\subsection{Proof of Lemma~\ref{lem:budget-regularity}}

\begin{proof}
For the finite-support model, write
\[
p_u(x)=\sum_i\pi_i\varphi_{\Delta_u}(x-a_ux_i),
\qquad
q_u^r(x)=\sum_iw_i(r)\varphi_{\Delta_u}(x-a_ux_i),
\]
where \(\varphi_{\Delta}\) is the centered Gaussian density with covariance
\(\Delta\Id_d\).  For \(u>0\), both mixtures are smooth, \(p_u\) is strictly
positive, and \(q_u^r/p_u\le\max_i w_i(r)/\pi_i\).  On a compact interval
\([s,t]\subset(0,\infty)\), the Tweedie representation gives
\[
\left\|\nabla\log\frac{q_u^r}{p_u}(x)\right\|
\le
\frac{2a_u}{\Delta_u}\max_i\|x_i\|,
\]
so the relative Fisher information is bounded and integrable.  Gaussian
tails also make every integration-by-parts boundary term vanish.

If the atoms are pairwise distinct, nearest-center decoding among the
\(\{a_u x_i\}\) recovers the component index with probability tending to one as
\(u\to0^+\).  The channel \(I\mapsto X_u\) gives the upper bound
\(\KL(q_u^r\|p_u)\le\KL(w(r)\|\pi)\), while data processing through this
decoder gives the matching lower bound in the limit.  Therefore
\[
\KL(q_u^r\|p_u)\longrightarrow
\sum_iw_i(r)\log\frac{w_i(r)}{\pi_i}.
\]
As \(u\to\infty\), \(a_u\to0\) and \(\Delta_u\to1\); the two finite mixtures
converge, with Gaussian domination, to the same \(\calN(0,\Id_d)\) density.
Hence \(\KL(q_u^r\|p_u)\to0\).

In the Gaussian setting, \(p_u\) and \(q_u^r\) are nondegenerate Gaussian
densities with covariances
\(a_u^2\Sigma+\Delta_u\Id_d\) and
\(a_u^2S_r+\Delta_u\Id_d\).  Their log-density ratio is quadratic and its
gradient is affine.  Gaussian moments therefore give finite relative entropy,
integrable relative Fisher information, and vanishing boundary terms on every
compact subinterval of \((0,\infty)\).  Continuity of the Gaussian KL formula
gives the clean endpoint as \(u\to0^+\), while both laws converge to
\(\calN(0,\Id_d)\) as \(u\to\infty\).  Finally, the integrated force energy on
\([s,t]\) is finite in both settings by
\eqref{eq:fixed-r-kl-integrated}; the forward path-law KL is finite by the
Markov-kernel chain rule, and measurable time reversal preserves it.
\end{proof}

\subsection{Proof of Corollary~\ref{cor:reverse-path-kl}}

\begin{proof}
On the forward interval $[t,T]$, the posterior and prior path measures have
initial laws $q_t^r$ and $p_t$ and share the same OU transition kernel.
The chain rule for relative entropy therefore gives
\[
\KL(\mathbb Q^r_{[t,T]}\|\mathbb P_{[t,T]})
=\KL(q_t^r\|p_t)=\mathcal K_t(r).
\]
Time reversal is a measurable bijection on path space and preserves relative
entropy, so this is also the KL divergence of the two reverse path laws.
Combining with \eqref{eq:fixed-r-kl-integrated} proves
\eqref{eq:reverse-path-kl-identity}.

For a coefficient check, the reverse drifts differ by $2g_{T-u}$ and the
diffusion matrix is $\sqrt2\Id_d$.  Girsanov's formula
\citep{leonard2012girsanov} contributes
\[
\frac12\int_0^{T-t}
\E\left\|(\sqrt2\Id_d)^{-1}2g_{T-u}\right\|^2\,\dd u
=\int_t^T\E_{X_s\mid r}\|g_s\|^2\,\dd s,
\]
confirming the unit coefficient.
\end{proof}

\subsection{Proof of Corollary~\ref{cor:high-noise-path-proximity}}

\begin{proof}
Averaging \eqref{eq:reverse-path-kl-identity} over $R$ gives
$\E_R\mathcal K_t(R)=\MI(R;X_t)$.  The map
$X_t\mapsto Y_{\gamma_t}=X_t/\sqrt{\Delta_t}$ is invertible, hence
$\MI(R;X_t)=\MI(R;Y_{\gamma_t})$.  Conditional independence of
\(R\) and \(Y_{\gamma_t}\) given \(X_0\) gives the data-processing
inequality, and Gaussian
maximal entropy at fixed covariance gives
\[
\MI(X_0;Y_{\gamma_t})
\le
\frac12\log\det(\Id_d+\gamma_t\Cov(X_0))
\le
\frac{\gamma_t}{2}\Tr\Cov(X_0).
\]
Substituting $\gamma_t=a_t^2/\Delta_t$ proves
\eqref{eq:average-path-kl-bound} and the normalized high-noise bound.  Finally,
Pinsker gives $\|\mathbb Q^r-\mathbb P\|_{\rm TV}\le
\sqrt{\KL(\mathbb Q^r\|\mathbb P)/2}$ for each $r$; averaging and applying
Jensen proves \eqref{eq:average-path-tv-bound}.
\end{proof}

\subsection{Proof of Corollary~\ref{cor:integrated-head-start}}

\begin{proof}
Let $s\to0^+$ and $t\to\infty$ in
\eqref{eq:fixed-r-kl-integrated}.  Under the assumptions,
$\mathcal K_s(r)\to\KL(P_0(\cdot\mid r)\|P_0)$ and
$\mathcal K_t(r)\to0$, proving \eqref{eq:integrated-force-clean-kl}.  For a
uniform empirical prior on pairwise distinct support points,
\[
\KL(P_0(\cdot\mid r)\|P_0)
=\sum_iw_i(r)\log\frac{w_i(r)}{1/n}
=\log n-H(I\mid r),
\]
which gives \eqref{eq:integrated-force-head-start}.
\end{proof}

\subsection{Proof of Lemma~\ref{prop:joint-fisher-pythagoras}}

\begin{proof}
For almost every $x$,
\[
\E[g_t(X_t,R)\mid X_t=x]
=\int p(r\mid x)\nabla_x\log p(r\mid x)\,\dd\nu(r)
=\nabla_x\int p(r\mid x)\,\dd\nu(r)=0.
\]
Since $s_t^R=s_t^{\rm prior}+g_t$, both cross-moment matrices vanish:
\[
\E\left[s_t^{\rm prior}(X_t)g_t(X_t,R)^\top\right]
=
\E\left[g_t(X_t,R)s_t^{\rm prior}(X_t)^\top\right]=0.
\]
Expanding the outer product proves \eqref{eq:matrix-fisher-pythagoras}; taking
its trace gives \eqref{eq:joint-fisher-pythagoras}, and applying the quadratic
form $v^\top(\cdot)v$ gives \eqref{eq:directional-fisher-pythagoras}.
\end{proof}

\subsection{Proof of Lemma~\ref{lem:prior-score-lower-bound}}

\begin{proof}
For $t>0$, Gaussian smoothing gives a positive smooth density and integration
by parts yields
\[
\E[(X_t-\mu_t)^\top s_t^{\rm prior}(X_t)]=-d.
\]
Cauchy--Schwarz gives
$d^2\le\Tr\Cov(X_t)\,\E\|s_t^{\rm prior}(X_t)\|^2$.
Since $\Cov(X_t)=a_t^2\Cov(X_0)+\Delta_t\Id_d$, division by $d$ proves
\eqref{eq:automatic-prior-score-lower} and
\eqref{eq:automatic-prior-score-lower-C0}.
\end{proof}

\subsection{Proof of Corollary~\ref{cor:general-high-noise-energy}}

\begin{proof}
Evaluating \eqref{eq:general-denoiser-guidance} at \((X_t,R)\), squaring, and
averaging gives \eqref{eq:general-guidance-energy-identity}.  For the variance
bound, set
\[
U=m_t^R(X_t)=\E[X_0\mid X_t,R],
\qquad
V=m_t(X_t)=\E[X_0\mid X_t].
\]
Then \(V=\E[U\mid X_t]\).  Hence
\[
\E\langle U,V\rangle
=
\E\langle \E[U\mid X_t],V\rangle
=
\E\|V\|^2,
\]
and therefore
\[
\E\|U-V\|^2
=
\E\|U\|^2-
\E\|V\|^2.
\]
Since \(\E U=\E V=\E X_0\), the same identity in centered form is
\[
\E\|U-V\|^2
=
\E\|U-\E X_0\|^2-
  \E\|V-\E X_0\|^2
\le
\E\|U-\E X_0\|^2.
\]
Conditional expectation is an \(L^2\) contraction, so
\[
\E\|U-\E X_0\|^2
=
\E\|\E[X_0-\E X_0\mid X_t,R]\|^2
\le
\E\|X_0-\E X_0\|^2
=
\Tr\Cov(X_0).
\]
This proves \eqref{eq:general-guidance-energy-bound}; the normalized high-noise
scaling follows from \(\Tr\Cov(X_0)=O(d)\) and
\(\Delta_t\ge\Delta_0>0\).
\end{proof}

\subsection{Proof of Corollary~\ref{cor:general-relative-energy}}

\begin{proof}
Corollary~\ref{cor:general-high-noise-energy} gives
$\E\|g_t\|^2\le d a_t^2C_0/\Delta_t^2$, while
Lemma~\ref{lem:prior-score-lower-bound} gives
$\E\|s_t^{\rm prior}(X_t)\|^2\ge d/(a_t^2C_0+\Delta_t)$.
Dividing proves \eqref{eq:general-relative-energy-ratio}.  On
$\Delta_t\ge\Delta_0$, use $a_t^2C_0+\Delta_t\le C_0+1$ to obtain
\eqref{eq:general-relative-energy-ratio-window}.
\end{proof}

\subsection{Proof of Proposition~\ref{prop:force-information-rate}}
Throughout, \(\gamma=a_t^2/\Delta_t\) and
\(Y_\gamma=\sqrt{\gamma}\,X_0+\xi=X_t/\sqrt{\Delta_t}\), so conditioning on
\(Y_\gamma\) and on \(X_t\) generate the same \(\sigma\)-algebra; all mutual
informations below are finite for finite \(\gamma\) under
\(\E\|X_0\|^2<\infty\) \citep{guo2005mutual}.

\emph{Step 1 (chain rule).}  Conditional independence of \(R\) and the
forward noise given \(X_0\) gives
\(\operatorname{I}(R;Y_\gamma\mid X_0)=0\).  The two chain-rule
expansions of \(\operatorname{I}(X_0,R;Y_\gamma)\) give
\[
\operatorname{I}(R;Y_\gamma)
=
\operatorname{I}(X_0;Y_\gamma)-\operatorname{I}(X_0;Y_\gamma\mid R).
\]

\emph{Step 2 (I-MMSE, unconditional and conditional).}  By
\citet{guo2005mutual},
\(\frac{d}{d\gamma}\operatorname{I}(X_0;Y_\gamma)
=\tfrac12\,\E\|X_0-\E[X_0\mid Y_\gamma]\|^2\).
Conditionally on \(R=r\), the pair \((X_0,Y_\gamma)\) is again a Gaussian
channel whose input law is the posterior \(P(X_0\in\cdot\mid R=r)\), so
the same identity applies for each \(r\).  In integral form,
\[
\operatorname{I}(X_0;Y_\gamma\mid R=r)
=\tfrac12\int_0^\gamma \mathrm{mmse}_r(u)\,\dd u,
\quad
\mathrm{mmse}_r(u)=\E[\|X_0-\E[X_0\mid Y_u,R=r]\|^2\mid R=r].
\]
The integrand is nonnegative, nonincreasing in \(u\), and bounded by
\(\E[\|X_0\|^2\mid R=r]\); Tonelli's theorem therefore gives
\(\operatorname{I}(X_0;Y_\gamma\mid R)
=\tfrac12\int_0^\gamma\E[\mathrm{mmse}_R(u)]\,\dd u\).  The
finite-second-moment Gaussian-channel MMSE is continuous for \(u>0\) (and
right-continuous at \(0\)), so differentiation in \(\gamma\) yields
\(\frac{d}{d\gamma}\operatorname{I}(X_0;Y_\gamma\mid R)
=\tfrac12\,\E\|X_0-\E[X_0\mid Y_\gamma,R]\|^2\).

\emph{Step 3 (orthogonality).}  Write \(m=\E[X_0\mid Y_\gamma]\) and
\(m^R=\E[X_0\mid Y_\gamma,R]\).  Since \(m^R-m\) is
\((Y_\gamma,R)\)-measurable and \(\E[X_0-m^R\mid Y_\gamma,R]=0\), the cross
term vanishes and
\(\E\|X_0-m\|^2=\E\|X_0-m^R\|^2+\E\|m^R-m\|^2\).
Subtracting the two derivatives of Step 2 proves
\eqref{eq:force-information-derivative}; since \(Y_0=Z\) is independent of
\((X_0,R)\), \(\operatorname{I}(R;Y_0)=0\), and integration proves
\eqref{eq:force-information-integral}.

\emph{Step 4 (Tweedie).}  Lemma~\ref{thm:posterior-tweedie} gives
\[
g_t(X_t,R)=\frac{a_t}{\Delta_t}(m^R-m),
\qquad
\E\|g_t\|^2
=\frac{a_t^2}{\Delta_t^2}\E\|m^R-m\|^2.
\]
Substituting \eqref{eq:force-information-derivative} proves
\eqref{eq:force-energy-information-rate}.

\emph{Zero-SNR limit.}  The minimum mean-square error of a Gaussian channel
is continuous and nonincreasing in \(\gamma\), with value \(\Tr\Cov(X_0)\)
at \(\gamma=0\) \citep{guo2005mutual}; by the orthogonality of Step 3,
\(\E\|m-\E X_0\|^2=\Tr\Cov(X_0)-\E\|X_0-m\|^2\to0\) as
\(\gamma\to0^+\), and the conditional version gives
\(\E\|m^R-\E[X_0\mid R]\|^2\to0\).  Hence
\(\tfrac12\E\|m^R-m\|^2\to\tfrac12\Tr\Cov(\E[X_0\mid R])\), the slope
\eqref{eq:initial-measurement-information-slope} quoted after the
proposition.

\runinhead{Gaussian conditional mutual information
(equation~\eqref{eq:gaussian-conditional-information}).}
The Gaussian conditional mutual information is the difference between output
entropy and conditional output entropy:
\begin{align*}
\MI_{\rm G}(X_0;X_t\mid R=r)
&= h_{\rm G}(X_t\mid r)-h(X_t\mid X_0,r)\\
&= \frac12\log\det\!\left(2\pi e\,(a_t^2S_r+\Delta_t\Id_d)\right)
   -\frac d2\log(2\pi e\Delta_t)\\
&= \frac12\log\det\!\left(
   \Id_d+\frac{a_t^2}{\Delta_t}S_r\right).
\end{align*}

\runinhead{Score-energy convergence for bounded empirical priors.}
Let the empirical prior be supported in a ball of radius \(B\), and write the
prior score as \(s_t(x)=(a_t m_t(x)-x)/\Delta_t\) with
\(m_t(x)=\E[X_0\mid X_t=x]\), so \(\|m_t\|\le B\) pointwise.  Then
\[
d^{-1}\E\|a_tm_t(X_t)-X_t\|^2
=
\Delta_t+a_t^2\,d^{-1}\E\|X_0\|^2
-2a_t\,d^{-1}\E\langle m_t(X_t),X_t\rangle
+a_t^2\,d^{-1}\E\|m_t(X_t)\|^2,
\]
using \(\E\|X_t\|^2=a_t^2\E\|X_0\|^2+d\Delta_t\).  The absolute
contribution of the last three terms is at most
\[
\frac{a_tB}{d}
\left(2\sqrt{a_t^2B^2+d\Delta_t}+2a_tB\right).
\]
Consequently,
\[
d^{-1}\E\|s_t\|^2
=
\Delta_t^{-1}
+O\!\left(\frac{a_tB(B+\sqrt d)}{d\Delta_t^2}\right)
\longrightarrow \Delta_t^{-1}
\quad\text{as }a_t\to0,
\]
which tends to \(1\) under the schedule \eqref{eq:forward-marginal}.

\subsection{Proof of Proposition~\ref{thm:linear-gaussian-guidance}}

\begin{proof}
The joint vector \((X_t,R)\) is Gaussian with zero mean.  Its covariance
blocks are
\[
\Cov(X_t)=a_t^2\Sigma+\Delta_t\Id_d=B_t,
\qquad
\Cov(R)=A\Sigma A^\top+\Gamma=Q,
\qquad
\Cov(R,X_t)=a_tA\Sigma.
\]
Gaussian conditioning therefore gives
\[
\E[R\mid X_t=x]=a_tA\Sigma B_t^{-1}x=M_tx,
\qquad
\Cov(R\mid X_t)=Q-a_t^2A\Sigma B_t^{-1}\Sigma A^\top
=\Sigma_{R\mid t},
\]
and hence
\(\nabla_x\log p(r\mid X_t=x)
=M_t^\top\Sigma_{R\mid t}^{-1}(r-M_tx)\).
Under the joint law the residual \(R-M_tX_t\) has covariance
\(\Sigma_{R\mid t}\), hence
\[
\E\|\nabla_x\log p(R\mid X_t)\|^2
=\operatorname{Tr}\bigl(M_t^\top\Sigma_{R\mid t}^{-1}
\Sigma_{R\mid t}\Sigma_{R\mid t}^{-1}M_t\bigr)
=\operatorname{Tr}\bigl(M_t^\top\Sigma_{R\mid t}^{-1}M_t\bigr).
\]
It remains to check the high-noise scaling.  Since \(X_0\mid X_t\) has
covariance \(\Sigma-a_t^2\Sigma B_t^{-1}\Sigma\),
\(\Sigma_{R\mid t}=A(\Sigma-a_t^2\Sigma B_t^{-1}\Sigma)A^\top+\Gamma
\succeq\Gamma\), so \(\|\Sigma_{R\mid t}^{-1}\|_{\rm op}\le\gamma_0^{-1}\);
for \(a_t\) small enough that \(\Delta_t\ge1/2\),
\(B_t\succeq\Delta_t\Id_d\) gives \(\|B_t^{-1}\|_{\rm op}\le2\) and
\(\|M_t\|_F\le2a_t\|A\Sigma\|_F\).  Hence
\[
\frac1d\E\|\nabla_x\log p(R\mid X_t)\|^2
=\frac1d\operatorname{Tr}\bigl(M_t^\top\Sigma_{R\mid t}^{-1}M_t\bigr)
\le\frac{\gamma_0^{-1}}{d}\|M_t\|_F^2
\le4\gamma_0^{-1}a_t^2\,\frac1d\|A\Sigma\|_F^2=O(a_t^2).
\]
\end{proof}

\subsection{Proof of Corollary~\ref{cor:relative-energy}}

\begin{proof}
For a Gaussian prior the noised marginal is \(\calN(0,B_t)\), so the score
is exactly \(-B_t^{-1}x\) and
\(\E\|s_t(X_t)\|^2=\operatorname{Tr}(B_t^{-2}B_t)=\operatorname{Tr}(B_t^{-1})\),
which gives \eqref{eq:prior-score-energy}.  For the likelihood force, the
fixed-dimension expansions
\(B_t^{-1}=\Delta_t^{-1}\Id_d-a_t^2\Delta_t^{-2}\Sigma+O(a_t^4)\),
\(M_t=(a_t/\Delta_t)A\Sigma+O(a_t^3)\), and
\(\Sigma_{R\mid t}^{-1}=Q^{-1}+O(a_t^2)\) (using \(Q\succ0\)) give
\[
\frac1d\E\|g_t(X_t,R)\|^2
=\frac1d\Tr\bigl(M_t^\top\Sigma_{R\mid t}^{-1}M_t\bigr)
=\frac{a_t^2}{\Delta_t^2}\frac1d
\Tr\bigl(\Sigma A^\top Q^{-1}A\Sigma\bigr)+O(a_t^4),
\]
which is \eqref{eq:likelihood-energy-leading}.  The denominator
\(d^{-1}\Tr(B_t^{-1})\) converges to \(\Delta_\infty^{-1}\), so the ratio is
\(O(a_t^2)\).
\end{proof}

\subsection{Proofs for the Conditional Gaussian-Mixture Results}

\begin{proof}[Class-conditional posterior]
Conditioned on \(C=c\), the relevant Gaussian quantities are
\[
X_0\sim\calN(cm,\Sigma),
\qquad
R\sim\calN(Acm,Q),
\qquad
Q=A\Sigma A^\top+\Gamma,
\]
with \(\Cov(X_0,R\mid C=c)=\Sigma A^\top\).  Gaussian conditioning with
\(K=\Sigma A^\top Q^{-1}\) gives
\[
E[X_0\mid r,C=c]=cm+K(r-Acm)=Kr+c\delta,
\qquad
\Cov(X_0\mid r,C=c)=\Sigma-KA\Sigma=S.
\]
\end{proof}

\begin{proof}[Posterior class weights]
With equal class priors, \(P(C=c\mid r)\propto p(r\mid C=c)
\propto\exp\{-\tfrac12(r-Acm)^\top Q^{-1}(r-Acm)\}\), so
\[
\log\frac{P(C=+1\mid r)}{P(C=-1\mid r)}
=2(Am)^\top Q^{-1}r=2\eta(r),
\qquad
\alpha_\pm(r)=\frac{e^{\pm\eta(r)}}{2\cosh\eta(r)}.
\]
\end{proof}

\begin{proof}[Noised posterior law]
By the class-conditional calculation, \(X_0\mid r,C=c=Kr+c\delta+w\) with
\(w\sim\calN(0,S)\) (the letter \(\eta\) is reserved for the field).
Then \(X_t=a_tKr+c\,a_t\delta+a_tw+\sqrt{\Delta_t}\xi\) with
\(a_tw+\sqrt{\Delta_t}\xi\sim\calN(0,a_t^2S+\Delta_t\Id_d)=\calN(0,\Lambda_t)\),
so in the shifted coordinate \(Z_t=X_t-a_tKr\),
\(\widetilde p_t(z\mid r)=\alpha_+\calN(z;a_t\delta,\Lambda_t)
+\alpha_-\calN(z;-a_t\delta,\Lambda_t)\).
Expanding the two exponents, the \(c\)-independent terms factor out and
\[
\widetilde p_t(z\mid r)\propto e^{-z^\top\Lambda_t^{-1}z/2}
\bigl[\alpha_+e^{a_t\delta^\top\Lambda_t^{-1}z}
+\alpha_-e^{-a_t\delta^\top\Lambda_t^{-1}z}\bigr]
\propto e^{-z^\top\Lambda_t^{-1}z/2}
\cosh\{\eta(r)+a_t\delta^\top\Lambda_t^{-1}z\},
\]
using \(\alpha_\pm=e^{\pm\eta}/(2\cosh\eta)\).
\end{proof}

\begin{proof}[Posterior score and class evidence]
Differentiating
\[
\log \widetilde p_t(z\mid r)
=\mathrm{const}-\frac12 z^\top\Lambda_t^{-1}z
+\log\cosh\!\left(\eta+a_t\delta^\top\Lambda_t^{-1}z\right)
\]
gives
\eqref{eq:conditional-gmm-score}.  For the class posterior,
\[
\frac{P(C=+1\mid z,r)}{P(C=-1\mid z,r)}
=\frac{\alpha_+\calN(z;a_t\delta,\Lambda_t)}
{\alpha_-\calN(z;-a_t\delta,\Lambda_t)}.
\]
Taking logarithms
contributes \(2\eta\) from the weights and
\(2a_t\delta^\top\Lambda_t^{-1}z\) from the Gaussian ratio, which is
\eqref{eq:posterior-class-evidence}.
\end{proof}

\subsection{Proof of Proposition~\ref{prop:posterior-speciation} and Corollary~\ref{cor:posterior-speciation-coordinate}}

\begin{proof}[Cloning baseline]
As \(t\to\infty\), \(a_t\to0\) and \(\Lambda_t\to\Id_d\).  Therefore
\[
\calN(z;a_t\delta,\Lambda_t)
\quad\text{and}\quad
\calN(z;-a_t\delta,\Lambda_t)
\]
converge to the same density.  Hence \(Z_t\) becomes conditionally independent
of \(C\) given \(r\), and
\[
P(C=c\mid Z_t,r)\to P(C=c\mid r)=\alpha_c.
\]
Thus
\[
\phi_r(t)
=
\E_{Z_t\mid r}\sum_cP(C=c\mid Z_t,r)^2
\to
\sum_c\alpha_c^2.
\]
\end{proof}

\begin{proof}[One-dimensional cloning integral]
Whiten the noised posterior by setting
\[
Y=\Lambda_t^{-1/2}Z_t,
\qquad
Y\mid C=c,r\sim\calN(cu,\Id_d),
\qquad
u=a_t\Lambda_t^{-1/2}\delta,
\qquad
\|u\|^2=\kappa_t.
\]
For \(\kappa_t>0\), set \(e=u/\sqrt{\kappa_t}\) and \(W=e^\top Y\),
so \(W\mid C=\pm1,r\sim\calN(\pm\sqrt{\kappa_t},1)\).  Coordinates
orthogonal to \(e\) have identical standard normal laws under both classes
and carry no class information, so the class posterior depends only on
\(W\): with \(G_\pm(s)=\calN(s;\pm\sqrt{\kappa_t},1)\) and
\(f=\alpha_+G_++\alpha_-G_-\),
\(P(C=\pm1\mid s,r)=\alpha_\pm G_\pm(s)/f(s)\), and
\[
\phi_r(t)
=\int\Bigl[\Bigl(\tfrac{\alpha_+G_+}{f}\Bigr)^2
+\Bigl(\tfrac{\alpha_-G_-}{f}\Bigr)^2\Bigr]f\,\dd s
=\int\frac{\alpha_+^2G_+^2+\alpha_-^2G_-^2}{\alpha_+G_++\alpha_-G_-}\,\dd s.
\]
The \(\kappa_t=0\) case follows by continuity.
\end{proof}

\begin{proof}[Zero-field local instability]
When \(\eta=0\),
\[
\ell_t(z)=\log \widetilde p_t(z\mid r)
=
\mathrm{const}
-\frac12z^\top \Lambda_t^{-1}z
+\log\cosh(a_t\delta^\top \Lambda_t^{-1}z).
\]
The gradient is
\[
\nabla\ell_t(z)
=
-\Lambda_t^{-1}z
+a_t\Lambda_t^{-1}\delta
\tanh(a_t\delta^\top \Lambda_t^{-1}z).
\]
Thus \(\nabla\ell_t(0)=0\).  The Hessian is
\[
\nabla^2\ell_t(z)
=
-\Lambda_t^{-1}
+
a_t^2
\sech^2(a_t\delta^\top \Lambda_t^{-1}z)
\Lambda_t^{-1}\delta\delta^\top \Lambda_t^{-1}.
\]
At \(z=0\),
\[
\nabla^2\ell_t(0)
=
-\Lambda_t^{-1}
+
a_t^2\Lambda_t^{-1}\delta\delta^\top \Lambda_t^{-1}.
\]
Conjugating by \(\Lambda_t^{1/2}\),
\[
\Lambda_t^{1/2}\nabla^2\ell_t(0)\Lambda_t^{1/2}
=
-\Id_d+a_t^2\Lambda_t^{-1/2}\delta\delta^\top \Lambda_t^{-1/2}.
\]
The rank-one matrix on the right has one nonzero eigenvalue
\[
a_t^2\delta^\top \Lambda_t^{-1}\delta=\kappa_t.
\]
Therefore the Hessian of \(\log p_t(\cdot\mid r)\) at the origin is negative
definite when \(\kappa_t<1\), singular when \(\kappa_t=1\), and has a positive
direction when \(\kappa_t>1\).  The boundary case \(\kappa_t=1\) is settled by
the quartic term.  In the whitened coordinate \(y=\Lambda_t^{-1/2}z\), the
log-density is separable,
\[
\ell_t(y)=\mathrm{const}-\tfrac12\|y_\perp\|^2-\tfrac12 s^2
+\log\cosh(\sqrt{\kappa_t}\,s),
\qquad s=e^\top y,
\]
with \(e=\Lambda_t^{-1/2}\delta/\|\Lambda_t^{-1/2}\delta\|\) and \(y_\perp\)
the orthogonal complement.  Since
\(\log\cosh u=u^2/2-u^4/12+O(u^6)\), at \(\kappa_t=1\) the profile along \(e\)
is \(-s^4/12+O(s^6)\), so the origin is still a strict local maximum.  For
\(\kappa_t>1\) the profile along \(e\) is
\((\kappa_t-1)s^2/2+O(s^4)>0\) near the origin, while every orthogonal
direction retains negative curvature.  Hence, for \(\kappa_t>1\), the origin
is a local minimum when \(d=1\) and a saddle when \(d>1\).  It is a local
maximum of \(\log \widetilde p_t(\cdot\mid r)\) if and only if
\(\kappa_t\le1\).
To verify monotonicity, diagonalize
\(S=O\diag(\sigma_j)O^\top\), write \(\bar\delta=O^\top\delta\), and set
\(u=a_t^2\).  Under the VP schedule,
\[
\kappa_t
=
\sum_j\frac{u\bar\delta_j^2}{1+u(\sigma_j-1)},
\qquad
\frac{\dd\kappa_t}{\dd u}
=
\sum_j\frac{\bar\delta_j^2}{[1+u(\sigma_j-1)]^2}.
\]
Thus \(\kappa_t\) is nondecreasing along the reverse trajectory and strictly
increasing whenever \(\delta\neq0\); the double well forms as \(\kappa_t\)
first crosses the threshold \(\kappa_t=1\).  In the degenerate
case \(\delta=0\) the measurement resolves the class separation completely,
\(\kappa_t\equiv0\), and no instability occurs at any time.
\end{proof}

\begin{proof}
With \(\Sigma=\sigma_x^2\Id_d\), \(\Gamma=\sigma_y^2\Id_{d_y}\),
\(A=U\diag(s_i)V^\top\), and
\(K=\sigma_x^2A^\top(\sigma_x^2AA^\top+\sigma_y^2\Id_{d_y})^{-1}\), each right
singular vector satisfies
\(KAv_i=\{\sigma_x^2s_i^2/(\sigma_y^2+\sigma_x^2s_i^2)\}v_i\), hence
\((\Id_d-KA)v_i=\{\sigma_y^2/(\sigma_y^2+\sigma_x^2s_i^2)\}v_i\); writing
\(m=\sum_im_iv_i\) gives
\(\delta_i=\sigma_y^2m_i/(\sigma_y^2+\sigma_x^2s_i^2)\).  Observed
directions with large \(s_i\) are suppressed in \(\delta\); null
directions are unchanged.
\end{proof}

\subsection{Proof of Proposition~\ref{thm:posterior-collapse}}

\begin{proof}[Posterior responsibility formula]
Bayes' rule gives \(P(I=i\mid x,r)\propto P(I=i\mid r)p(x\mid I=i,r)\);
the first factor is \(w_i(r)\), and given \(I=i\) we have \(X_0=x_i\),
so \(X_t\mid I=i,r\sim\calN(a_tx_i,\Delta_t\Id_d)\) and
\(P(I=i\mid x,r)\propto w_i(r)\exp\{-\|x-a_tx_i\|^2/(2\Delta_t)\}\).
\end{proof}

\begin{proof}[Entropy identity]
Expand the mixed joint entropy two ways:
\(h(X_t,I\mid r)=H(I\mid r)+h(X_t\mid I,r)
=h(X_t\mid r)+H(I\mid X_t,r)\).
Since \(X_t\mid I=i,r\sim\calN(a_tx_i,\Delta_t\Id_d)\) for every \(i\),
\(h(X_t\mid I,r)=\tfrac d2\log(2\pi e\Delta_t)\) independently of
\(i\); equating the two decompositions gives
\eqref{eq:posterior-collapse-entropy-identity}.
\end{proof}

\subsection{Derivation of the Gaussian scale in Remark~\ref{cor:collapse-time}}

\begin{proof}[Derivation]
Under the Gaussian approximation, the standard linear-channel formula gives
\[
\mathcal I_t^{\rm G}(r)
=
\frac12\log\det\left(\Id_d+\frac{a_t^2}{\Delta_t}S_r\right).
\]
For \(S_r=s_r\Id_d\) and \(\alpha_r=H(I\mid r)/d\), the fractional matching
equation \(\mathcal I_{t_q^{\rm G}}^{\rm G}(r)=qH(I\mid r)\) becomes
\[
q\alpha_r
=
\frac12\log\left(
1+\frac{s_re^{-2t_q^{\rm G}}}{1-e^{-2t_q^{\rm G}}}
\right).
\]
Solving for \(t_q^{\rm G}\) yields
\eqref{eq:isotropic-fractional-resolution-time}.
\end{proof}

\subsection{Proof of Theorem~\ref{prop:spectral-consistency}}

\begin{proof}
In the common eigenbasis, coordinates are independent with
\[
\widetilde X_{t,i}=a_t\widetilde X_{0,i}+\sqrt{\Delta_t}\,\xi_i,
\qquad
\widetilde R_i=s_i\widetilde X_{0,i}+\varepsilon_i,
\qquad
\widetilde X_{0,i}\sim\calN(0,\lambda_i).
\]
The noised prior variance is $B_{i,t}$, so the prior-score energy is
$B_{i,t}^{-1}$.  Scalar Gaussian conditioning gives
\[
\widetilde X_{0,i}\mid \widetilde X_{t,i}
\sim
\calN\left(\frac{a_t\lambda_i}{B_{i,t}}\widetilde X_{t,i},
\;v_{i,t}\right).
\]
Hence
\[
\widetilde R_i\mid \widetilde X_{t,i}
\sim
\calN\left(c_{i,t}\widetilde X_{t,i},
\;s_i^2v_{i,t}+\sigma_y^2\right),
\qquad
c_{i,t}=\frac{a_ts_i\lambda_i}{B_{i,t}},
\]
and
\[
g_{t,i}(x,r)
=\frac{c_{i,t}(\widetilde r_i-c_{i,t}\widetilde x_i)}
{s_i^2v_{i,t}+\sigma_y^2}.
\]
The regression residual has variance $s_i^2v_{i,t}+\sigma_y^2$, which proves
\eqref{eq:anisotropic-spectral-energies} and the first expression in
\eqref{eq:anisotropic-rho}.

Conditioning the clean coordinate on $R_i$ gives posterior variance
$\tau_i^2$, so the noised posterior variance is $\widetilde B_{i,t}$.  The
joint directional Fisher Pythagoras therefore gives
\[
\E|g_{t,i}|^2
=\widetilde B_{i,t}^{-1}-B_{i,t}^{-1},
\]
which proves the second expression in \eqref{eq:anisotropic-rho}.

For the consequences, write $u=a_t^2$ under the VP schedule.  Then
$B_{i,t}=1+u(\lambda_i-1)$ and
$\widetilde B_{i,t}=1+u(\tau_i^2-1)$.  Since
$\lambda_i>\tau_i^2$ when $s_i>0$,
\[
\frac{\dd}{\dd u}\frac{B_{i,t}}{\widetilde B_{i,t}}
=\frac{\lambda_i-\tau_i^2}{\widetilde B_{i,t}^2}>0.
\]
This proves reverse-time monotonicity.  The high- and low-noise limits follow
directly from \eqref{eq:anisotropic-rho}.  The equality $\rho_i=1$ is
exactly $B_{i,t}/\widetilde B_{i,t}=2$, giving the variance-halving statement;
strict monotonicity shows that an interior crossing is unique and occurs iff
$s_i^2\lambda_i>\sigma_y^2$.  At equality, the ratio approaches one only at
the clean endpoint.

Finally, the global and row ratios have the same numerator because
$g_{t,i}=0$ for $s_i=0$.  Their denominators are respectively
$\sum_iB_{i,t}^{-1}$ and $\sum_{i:s_i>0}B_{i,t}^{-1}$, which proves
\eqref{eq:anisotropic-weighted-dilution}.
\end{proof}

\subsection{Proof of Theorem~\ref{thm:matched-spectrum-alignment}}

\begin{proof}
For the Gaussian channel $Y_\gamma=\sqrt\gamma X_0+Z$, the conditional
covariance of $X_0$ given $Y_\gamma$ is
$C_\gamma=(\Sigma^{-1}+\gamma\Id_d)^{-1}$.  Therefore
\[
\Cov(R_A)=A\Sigma A^\top+\sigma_y^2\Id_{d_y},
\qquad
\Cov(R_A\mid Y_\gamma)=AC_\gamma A^\top+\sigma_y^2\Id_{d_y},
\]
and the Gaussian mutual-information formula gives
\eqref{eq:orientation-assimilation-general}.

For the clean-information bound, use $AA^\top=\Id_{d_y}$ to write
\[
2\MI(X_0;R_A)
=
\log\det\!\left[A(\Id_d+\sigma_y^{-2}\Sigma)A^\top\right].
\]
Poincar\'e separation for the compression of the positive-definite matrix \citep{bhatia1997matrix}
$\Id_d+\sigma_y^{-2}\Sigma$ bounds the determinant by the products of its
smallest and largest $d_y$ eigenvalues, proving
\eqref{eq:orientation-clean-information-bounds}.  The matrix determinant lemma
also gives
\[
\frac{\det\Sigma}{\det\Sigma_A^{\rm post}}
=
\det\!\left(\Id_d+\sigma_y^{-2}\Sigma^{1/2}A^\top A\Sigma^{1/2}\right)
=
\det\!\left(\Id_{d_y}+\sigma_y^{-2}A\Sigma A^\top\right),
\]
which is \eqref{eq:orientation-volume-contraction}.

For the assimilation bound, set
\[
\mathsf V_0=\sigma_y^2\Id_d+\Sigma,
\qquad
\mathsf V_\gamma=\sigma_y^2\Id_d+C_\gamma,
\]
and define
\[
Q=\mathsf V_\gamma^{1/2}A^\top(A\mathsf V_\gamma A^\top)^{-1/2}.
\]
Then $Q^\top Q=\Id_{d_y}$ and
\[
\frac{\det(A\mathsf V_0 A^\top)}{\det(A\mathsf V_\gamma A^\top)}
=
\det\!\left[Q^\top
\mathsf V_\gamma^{-1/2}\mathsf V_0 \mathsf V_\gamma^{-1/2}Q\right].
\]
Because $C_\gamma$ is a spectral function of $\Sigma$, the matrix in the
compression has eigenvalues
\[
\ell_i(\gamma)
=
\frac{\sigma_y^2+\lambda_i}
{\sigma_y^2+\lambda_i/(1+\gamma\lambda_i)}
=
\exp\{2\psi_\gamma(\lambda_i)\}.
\]
A second application of Poincar\'e separation \citep{bhatia1997matrix} bounds the determinant of the
compression by the products of the smallest and largest $d_y$ values
$\ell_i(\gamma)$, which proves
\eqref{eq:orientation-assimilation-bounds}.  Finally,
\[
\psi_\gamma'(\lambda)
=
\frac{\gamma\lambda(2\sigma_y^2+\lambda+
\gamma\sigma_y^2\lambda)}
{2(\sigma_y^2+\lambda)(1+\gamma\lambda)
[\sigma_y^2(1+\gamma\lambda)+\lambda]}>0
\]
for $\gamma,\lambda>0$.  The equality cases are the invariant subspaces
spanned by the corresponding extremal eigenvectors.
\end{proof}

\subsection{Proof of Corollary~\ref{cor:rank-one-orientation-contrast}}

\begin{proof}
For $A_i=v_i^\top$, equation
\eqref{eq:orientation-assimilation-general} reduces to
$\mathcal M_\gamma(A_i)=\psi_\gamma(\lambda_i)$.  Strict monotonicity of
$\psi_\gamma$ proves \eqref{eq:rank-one-orientation-contrast}.  The clean
mutual-information inequality follows because
$\tfrac12\log(1+\lambda/\sigma_y^2)$ is strictly increasing, and the posterior
covariance-volume ordering follows from
\eqref{eq:orientation-volume-contraction}.
\end{proof}

\subsection{Proof of Proposition~\ref{prop:null-space-dilution}}

\begin{proof}
By hypothesis~(i), \(\E_{X_t\mid r}\|g_t\|^2
=\E_{X_t\mid r}\|P_Ag_t\|^2\), so the global and row ratios have the same
numerator.  By hypothesis~(ii), the denominators are
\[
\E_{X_t\mid r}\|s^{\rm prior}_t\|^2=d\,e_t(r),
\qquad
\sum_{i\,\in\,\text{row}}\E_{X_t\mid r}|s^{\rm prior}_{t,i}|^2=r_A\,e_t(r).
\]
Dividing gives \eqref{eq:global-row-dilution}.
\end{proof}

\subsection{Proof of Lemma~\ref{prop:posterior-mean-dispersion-calibration}}

\begin{proof}
Condition on $R=r$ and apply the law of total covariance with the intermediate
sigma-field generated by $X_t$:
\[
\Cov(X_0\mid r)
=
\E[\Cov(X_0\mid X_t,r)\mid r]
+
\Cov(\E[X_0\mid X_t,r]\mid r).
\]
Multiplying by $P$ on both sides and taking traces proves
\eqref{eq:projected-total-covariance}.  At zero channel SNR,
$X_t$ becomes independent of $X_0$ conditionally on $r$, so
$m_t^r(X_t)$ converges in $L^2$ to $\E[X_0\mid r]$ and the between-state term
vanishes.  As the channel SNR tends to infinity, the Gaussian observation
reveals $X_0$ in mean square, so
$\E\Tr\Cov(X_0\mid X_t,r)\to0$; the first term vanishes and the second tends to
the projected posterior covariance.
\end{proof}

%% file: appendix/experiments.tex
This appendix gives the experimental definitions and protocols used in
Section~\ref{sec:diagnostics}.  The exact-model calculations are reproducible
from finite sums or closed-form Gaussian expressions.  The FFHQ experiments
use a fixed pretrained score model and saved multi-chain trajectories.  The
accompanying code release contains the analysis scripts, configurations, and
summary files used to regenerate the reported tables and figures.

\subsection{Exact-Model Experiments}
\label{sec:app-exact-details}

\runinhead{Empirical posterior on MNIST.}
The empirical calculations use the first 1500 Keras MNIST training images,
normalized to \([0,1]\), with 100 in-support references.  Denoising uses the
identity operator, super-resolution uses \(2\times\) average pooling, and
inpainting removes the central \(14\times14\) block.  The pseudo-inverse of
average pooling is nearest-neighbor patch upsampling, so
\(AA^\dagger=\Id_{d_y}\).

For a reference measurement \(r=Ax_\star\), the Gaussian likelihood of width
\(\sigma_y\) gives
\[
\ell_i(r)=-\frac{\|Ax_i-r\|^2}{2\sigma_y^2},
\qquad
w_i(r)=\frac{e^{\ell_i(r)}}{\sum_j e^{\ell_j(r)}}.
\]
The posterior index entropy and class probabilities are
\[
H(I\mid r)=-\sum_i w_i(r)\log w_i(r),
\qquad
P(C=c\mid r)=\sum_{i:y_i=c}w_i(r).
\]
At noise level \(t\), the conditional index responsibilities satisfy
\[
P(I=i\mid x,r)
\propto
w_i(r)\exp\!\left(-\frac{\|x-a_tx_i\|^2}{2\Delta_t}\right),
\]
and their entropy is averaged over exact draws from \(p_t(\cdot\mid r)\).
For the ten MNIST classes, let
\(p_c(x,r)=\sum_{i:y_i=c}P(I=i\mid x,r)\).  The cloning diagnostic is
\(\phi_r(t)=\E_{X_t\mid r}\sum_{c=0}^{9}p_c(X_t,r)^2\), with high-noise
baseline \(\phi_r(\infty)=\sum_{c=0}^{9}P(C=c\mid r)^2\).
Figures~\ref{fig:mnist-empirical} and~\ref{fig:unified-trajectories} use
\(\sigma_y=2\).  Each entropy curve uses 32 Monte Carlo states per reference
and reports the mean and standard error over references.

The signal-space row and null dispersions are computed as
\[
D_{\rm row}
=\frac{1}{r_A}\Tr\Cov(P_A\hat x_0),
\qquad
D_{\rm null}
=\frac{1}{d-r_A}\Tr\Cov(P_A^\perp\hat x_0),
\]
whenever the corresponding subspace has nonzero dimension, with
\(P_A=A^\dagger A\) and \(P_A^\perp=\Id_d-P_A\).

\runinhead{Numerical accuracy and high-noise scaling.}
The common exact-score trajectories use a log-spaced Euler--Maruyama grid.
Table~\ref{tab:app-em-stability} compares 200, 400, and 800 reverse steps while
holding references, chains, and saved checkpoints fixed.  The index half-time
varies by at most \(0.059\) within a task, and the terminal residual remains
within \(6.3\%\) of the analytic compatible-pool floor
\[
F_r=\frac{1}{d_y}\sum_iw_i(r)\|Ax_i-r\|^2.
\]

\begin{table}[H]
\centering
\caption{Euler--Maruyama stability for the exact-score reverse SDE.  Half-times
are interpolated within each reference before averaging.  The last column is
the maximum difference between the reference-averaged reverse and forward
conditional index-entropy curves.}
\label{tab:app-em-stability}
\footnotesize
\setlength{\tabcolsep}{4pt}
\begin{tabular}{lcccc}
\toprule
Task (steps 200/400/800) & index $t_{1/2}$ & class $t_{1/2}$ & $R_{\rm end}/F_r$ & entropy gap \\
\midrule
Denoising & 0.848/0.906/0.871 & 0.887/0.998/0.992 & 0.997/1.062/0.975 & 0.094/0.074/0.074 \\
Super-resolution & 0.915/0.921/0.922 & 0.868/0.861/0.870 & 0.992/1.001/1.024 & 0.136/0.091/0.065 \\
Inpainting & 0.891/0.893/0.906 & 0.868/0.868/0.890 & 1.012/1.029/1.016 & 0.206/0.115/0.069 \\
\bottomrule
\end{tabular}
\end{table}

The joint-law calculation samples
\(I\sim\operatorname{Unif}\{1,\ldots,n\}\), \(X_0=x_I\),
\(R=AX_0+\varepsilon\), and
\(X_t=a_tX_0+\sqrt{\Delta_t}Z\).  Exact prior and posterior scores are
evaluated for 1{,}024 joint samples at eight high-noise checkpoints.  Regressing
log energy on \(\log a_t^2\) gives force-energy slopes
\(1.003\), \(1.006\), and \(1.008\), and relative-energy slopes
\(1.002\), \(1.005\), and \(1.007\), for denoising,
super-resolution, and inpainting.  The prior-score energy per dimension stays
between \(0.996\) and \(1.014\).  The estimated leading coefficient agrees
with \(d^{-1}\Tr\Cov(\E[X_0\mid R])\) to relative error below
\(4\times10^{-4}\) in every task.

\begin{figure}[H]
\centering
\includegraphics[width=0.93\linewidth]{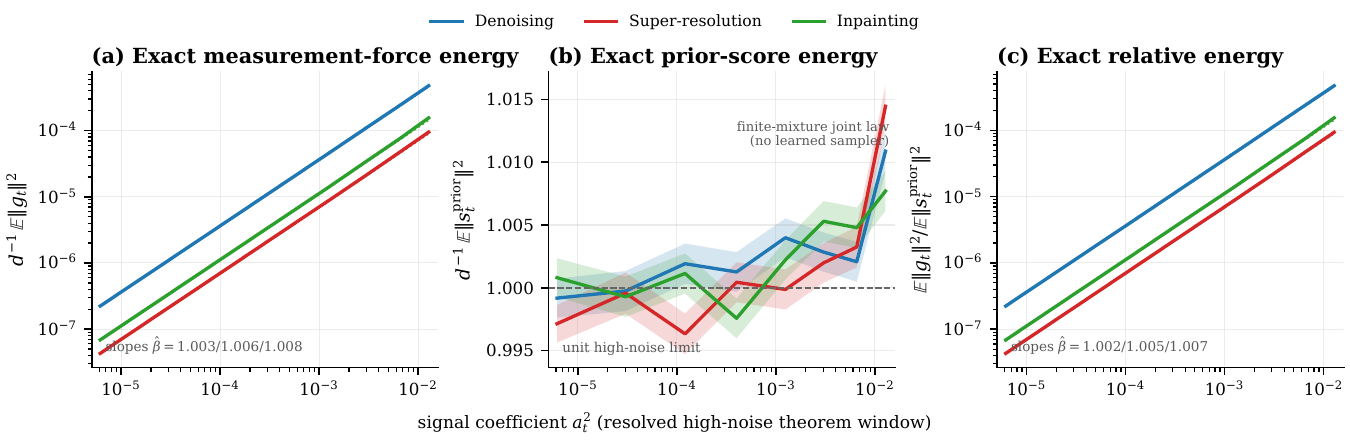}
\caption{Joint-law high-noise calculation for the finite empirical MNIST
prior (1{,}024 samples; bands are standard errors).  (a)~Measurement-force
energy per dimension.  (b)~Prior-score energy.  (c)~Their ratio.  Dotted
guides in (a) and (c) have unit slope against \(a_t^2\).}
\label{fig:joint-highnoise}
\end{figure}

\runinhead{Likelihood-width sensitivity.}
The main MNIST diagnostics use \(\sigma_y=2\).  Figure~\ref{fig:app-mnist-width}
varies the assumed likelihood width from \(0.5\) to \(3\).  As the likelihood
widens, the posterior index budgets approach \(\log n\), while the class
baselines approach their unconditional value.

\begin{figure}[H]
\centering
\includegraphics[width=0.88\linewidth]{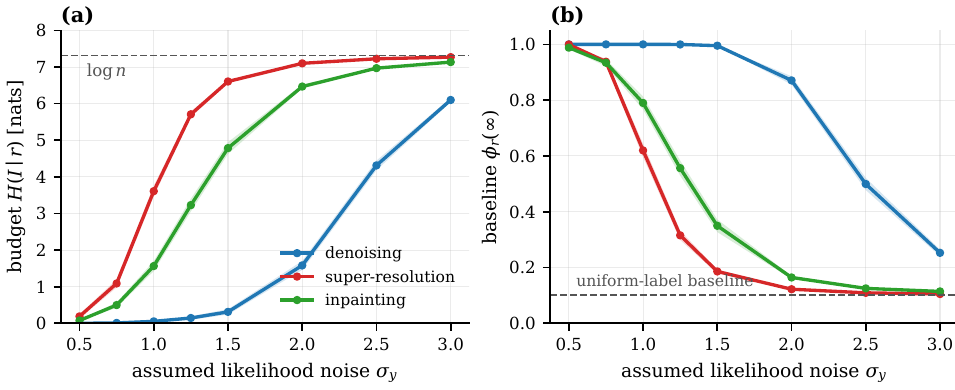}
\caption{Sensitivity of the finite-empirical-posterior diagnostics to the
assumed likelihood width.  (a)~Posterior index budget \(H(I\mid r)\).
(b)~High-noise class baseline \(\phi_r(\infty)\).  Curves average 100
in-support references; shaded bands are standard errors.  Dashed lines mark
the corresponding unconditional limits.}
\label{fig:app-mnist-width}
\end{figure}

\runinhead{Gaussian timing calibration.}
For \(2\times\) super-resolution, the fractional Gaussian resolution scale of
Remark~\ref{cor:collapse-time} is evaluated on 96
reference--likelihood-width pairs.  Its mean absolute error is
\(\HeuristicMAELogDet\), compared with \(\HeuristicMAEConstantLOSO\) for a
cross-validated constant predictor.  The paired difference
\(\HeuristicDiffConstantLOSO\) has a reference-cluster 95\% interval from
\(\HeuristicDiffConstantLOSOLo\) to \(\HeuristicDiffConstantLOSOHi\).
Thus the Gaussian scale provides a parameter-free descriptive calibration,
while its advantage over the constant is unresolved at this sample size.

\FloatBarrier

\subsection{Neural-Sampler Experiments}
\label{sec:app-neural-protocol}

\runinhead{Implementations and data split.}
The experiments use the official DPS residual-gradient update
\citep{chung2023diffusion}, the SVD-based DDNM/DDNM+ update
\citep{wang2023zero}, the \PiGDM\ pseudo-inverse rule
\citep{song2023pigdm}, and the DMPS linear-Gaussian correction
\citep{meng2022diffusion}.  Average-pooling super-resolution and coordinate
inpainting are implemented with explicit pseudo-inverses.  Projector tests give
a maximum row/null reconstruction error of \(1.192\times10^{-7}\) and a
maximum relative trace-decomposition gap of \(2.733\times10^{-10}\).

All methods use the same unconditional FFHQ \(256\times256\) checkpoint and
measurement noise \(\sigma_y=0.05\).  Ten references are used to select the
task-specific configurations in Table~\ref{tab:sampler-hyperparams}.  The main
trajectory experiment uses 30 disjoint test references with eight chains per
reference.  The finite-chain calculation uses five of these references with
64 chains.  Confidence intervals resample references 2{,}000 times; chains and
checkpoints are treated as repeated observations within a reference.

\begin{table}[tb]
\centering
\caption{Configurations selected on the ten validation references.  Every run
uses 1000 reverse steps and \(\sigma_y=0.05\).}
\label{tab:sampler-hyperparams}
\small
\begin{tabular}{lcccc}
\toprule
Task & DPS scale & DDNM+ $\eta$ & \PiGDM\ $\lambda$ ($\eta$) & DMPS scale \\
\midrule
SR $4\times$ & 0.40 & 0.85 & 0.05 (1.0) & 1.50 \\
Box inpainting & 0.65 & 0.85 & 0.10 (1.0) & 1.75 \\
Random inpainting & 0.65 & 0.85 & 0.10 (1.0) & 1.75 \\
\bottomrule
\end{tabular}
\end{table}

\runinhead{Common observables.}
At each saved state, before method-specific correction, the shared checkpoint
produces
\[
\widehat x_{0,t}^{\rm common}=D_{\rm fixed}(x_t,t).
\]
The measurement residual is
\[
R_t^{\rm pre}=d_y^{-1}\|A\widehat x_{0,t}^{\rm common}-r\|^2.
\]
For dispersion, the same \([-1,1]\) clipping and explicit row/null projector
are applied to every method, with covariance denominator \(K-1\).  Applying
the common convention removes differences in native clipping and storage
precision.  The configuration-level comparison concerns the full sampling
rules, which differ simultaneously in guidance scale, pivot, Jacobian
treatment, projection, stochasticity, and correction point.

\runinhead{Fixed-spectrum mask library and statistic.}
The held-out experiment uses ten new FFHQ references and a library of sixteen
masks.  Five masks are contiguous, three contain multiple holes, three are
structured and distributed, and five are independently generated random masks.
Every mask hides one quarter of the spatial positions, so all masks have the
same rank, nullity, and singular values.  Measurement and ancestral noise are
paired across methods and masks for each reference and chain.

\begin{figure}[tb]
\centering
\includegraphics[width=0.82\linewidth]{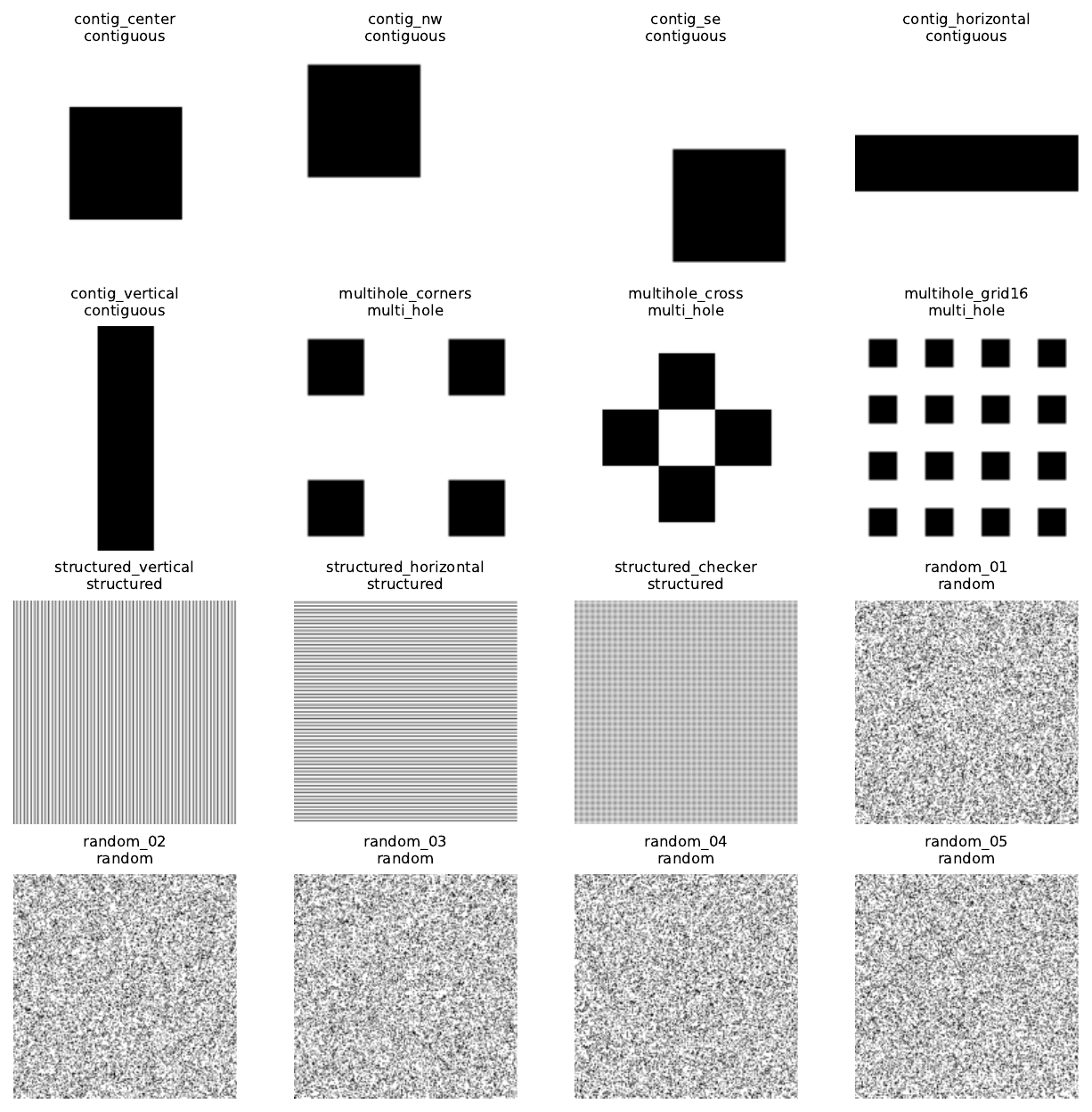}
\caption{Fixed-spectrum mask library.  Black pixels are missing.  All masks
remove the same number of spatial positions and therefore have identical
singular values.}
\label{fig:fixed-spectrum-mask-library}
\end{figure}

For method \(m\), mask \(q\), and reference \(r\), the primary statistic is
\begin{equation}
\label{eq:fixed-library-primary-cell}
L_{mqr}
=
\log\frac{\operatorname{AUC}_{\log\mathrm{SNR}}
D^{m,q,r}_{\rm null}}
{\operatorname{AUC}_{\log\mathrm{SNR}}
D^{\rm prior,q,r}_{\rm null}},
\end{equation}
where each AUC integrates the 25 common checkpoints over log-SNR
\([-10.12,9.21]\).  The denominator is obtained from an independent
unconditional DDPM trajectory pool projected through the same mask.  The
primary comparison is the contiguous-minus-random contrast computed separately
for each sampler.  The intervals condition on the fixed mask library and
resample the ten references.

\runinhead{Sensitivity analyses.}
The method-specific estimates and reference-bootstrap intervals appear in
Figure~\ref{fig:fixed-library-contrasts}.  Crossed
reference--mask bootstrap intervals give the same positive sign for
the four method-specific null-space contrasts.  Multi-hole masks give an
intermediate contrast for every sampler, whereas structured-distributed masks
are close to the random family.  Endpoint missing-region PSNR is lower for
contiguous masks than for random masks for every sampler, with method-specific
contrasts ranging from \(-23.49\) to \(-14.04\) dB.  A context-fill difficulty
adjustment has an interval spanning zero, and the two mask families have little
overlap in that covariate.  The result is therefore interpreted conditionally
on this mask library and checkpoint.

The 64-chain subset experiment measures finite-chain error in the dispersion
summaries.  Relative to 64 chains, eight chains have pooled mean absolute
relative errors of \(8.0\%\) for row-space AUC and \(7.1\%\) for null-space
AUC; the corresponding peak errors are larger.  These results support
aggregate AUC comparisons while leaving reference-level peak rankings
imprecise.  The accompanying code release supplies the sampling and analysis
code and the configurations used for these checks.

\FloatBarrier

\subsection{Exact and Plug-In Measurement Forces}
\label{sec:app-exact-vs-plugin-force}

The calibration in Section~\ref{sec:sampler-implications} can be seen in one
dimension.  Let
\[
X_0\sim\calN(0,\sigma_x^2),
\qquad
R=X_0+\varepsilon,
\qquad
X_t=a_tX_0+\sqrt{\Delta_t}\,\xi,
\]
where \(\varepsilon\sim\calN(0,\sigma_y^2)\), \(\xi\sim\calN(0,1)\), and
\(X_0\), \(\varepsilon\), and \(\xi\) are mutually independent.  Define
\[
D_t=a_t^2\sigma_x^2+\Delta_t,
\qquad
m_t(x)=\frac{a_t\sigma_x^2}{D_t}x,
\qquad
v_t=\frac{\sigma_x^2\Delta_t}{D_t}.
\]
Then \(R\mid X_t=x\sim\calN(m_t(x),v_t+\sigma_y^2)\).  The exact smoothed
likelihood force and the plug-in gradient formed with the exact prior denoiser
are
\begin{equation}
\label{eq:app-1d-forces}
g_t^{\rm exact}(x,r)
=m_t'(x)\frac{r-m_t(x)}{v_t+\sigma_y^2},
\qquad
g_t^{\rm plug}(x,r)
=m_t'(x)\frac{r-m_t(x)}{\sigma_y^2}.
\end{equation}
Consequently,
\begin{equation}
\label{eq:app-plugin-stiffness-ratio}
g_t^{\rm plug}
=\left(1+\frac{v_t}{\sigma_y^2}\right)g_t^{\rm exact},
\quad
\E|g_t^{\rm exact}|^2=\frac{(m_t')^2}{v_t+\sigma_y^2},
\quad
\E|g_t^{\rm plug}|^2
=\frac{(m_t')^2(v_t+\sigma_y^2)}{\sigma_y^4}.
\end{equation}
Thus the plug-in loss uses the measurement variance in place of the predictive
variance.  This discrepancy remains even when the clean denoiser is exact.

The discrete DPS implementation normalizes its step by the residual norm
\citep{chung2023diffusion}:
\[
\zeta_i=\frac{\zeta'}{\|y-A\hat x_0(x_i)\|}.
\]
Up to a constant absorbed into \(\widetilde\zeta\), its one-dimensional
counterpart is
\begin{equation}
\label{eq:app-dps-normalized-force}
g_t^{\rm norm}(x,r)
=\widetilde\zeta\,m_t'(x)\operatorname{sign}\{r-m_t(x)\},
\qquad
\frac{\E|g_t^{\rm norm}|^2}{\E|g_t^{\rm exact}|^2}
=\widetilde\zeta^2(v_t+\sigma_y^2).
\end{equation}
With \(\widetilde\zeta=1/\sigma_y\), which matches the low-noise energy, the
ratio is \(1+v_t/\sigma_y^2\).  Residual normalization therefore reduces the
fixed-precision mismatch, but it does not reproduce the predictive-variance
damping of the exact force.  Under the VP normalization, all three energies
have the same \(O(a_t^2)\) high-noise order; their calibration differs.